\newcommand{\sMean}[1]{\hat{\bbE}{#1}}										%Empirical/sample mean
\providecommand*{\Dom}[1]{{\rm dom}}							%Domain
\providecommand{\argmin}{\operatorname*{argmin}}				%argument yielding inf
\newcommand{\curlyb}[1]{\left\{{#1}\right\}}                    % curly brackets
\providecommand{\CB}{{\cal B}}
\providecommand{\CC}{{\cal C}}
\providecommand{\CJ}{{\cal J}}
\providecommand{\CN}{{\cal N}}
\providecommand{\CO}{{\cal O}}
\providecommand{\CV}{{\cal V}}
\providecommand{\bbE}{\mathbb{E}}
\providecommand{\bbR}{\mathbb{R}}
\definecolor{ForestGreen}{RGB}{34,139,34}
\newcommand{\newmcommand}[2]{\newcommand{#1}{{\ifmmode {#2}\else\mbox{${#2}$}\fi}}}
\newcommand{\newmcommandi}[2]{\newcommand{#1}[1]{{\ifmmode {#2}\else\mbox{${#2}$}\fi}}}
\newcommand{\newmcommandii}[2]{\newcommand{#1}[2]{{\ifmmode {#2}\else\mbox{${#2}$}\fi}}}
\newcommand{\newmcommandiii}[2]{\newcommand{#1}[3]{{\ifmmode {#2}\else\mbox{${#2}$}\fi}}}
\newmcommandi{\paren}{\left({#1}\right)}
\newcommand{\x}{x}
\renewcommand{\v}{{v}}
\newcommand{\reals}{\bbR}
\newcommand{\W}{{W}}
\newcommand{\Laplacian}{{L}}
\newcommand{\nk}{{\hat k}}
\newcommand{\M}{M}
\newcommand{\X}{{X}}
\newcommand{\ind}{\mathbbm{1}}
\newcommand{\R}{\mathbb{R}}
\newcommand{\calC}{\mathcal{C}}
\theoremstyle{plain}
	\newtheorem{theorem}{Theorem}[section]
	\newtheorem{proposition}[theorem]{Proposition}
	\newtheorem{lemma}[theorem]{Lemma}
	\newtheorem{remark}[theorem]{Remark}
	\newtheorem{definition}[theorem]{Definition}
	\newtheorem{problem}{Problem}
\numberwithin{equation}{section}
\begin{document}%\layout

\title{\vspace{0.8cm}Effective resistance in metric spaces\vspace{0.5cm}}

\newcommand{\footremember}[2]{%
    \footnote{#2}
    \newcounter{#1}
    \setcounter{#1}{\value{footnote}}%
}
\newcommand{\footrecall}[1]{%
    \footnotemark[\value{#1}]%
} 

\author{%
    Robi Bhattacharjee\footremember{ucsdRobi}{Department of Informatics, University of California San Diego, 9500 Gilman Dr, La Jolla, CA 92093, United States}%
    \and
  Alexander Cloninger\footremember{ucsdAlex}{Department of Mathematics, University of California San Diego, 9500 Gilman Dr, La Jolla, CA 92093, United States}%
  \and 
  Yoav Freund\footremember{ucsdYoav}{Department of Computer Science and Engineering, University of California San Diego, 9500 Gilman Dr, La Jolla, CA 92093, United States}%
  \and
 Andreas Oslandsbotn\footremember{uio}{Department of Informatics, University of Oslo, Problemveien 7, 0315 Oslo, Norway}\footremember{simula}{Simula Research Laboratory, Kristian Augusts gate 23, 0164 Oslo, Norway}%
  }
\date{}

\maketitle
\begin{abstract}
Effective resistance (ER) is an attractive way to interrogate the structure of graphs. It is an alternative to computing the
eigenvectors of the graph Laplacian.

One attractive application of ER is to point clouds, i.e.
graphs whose vertices correspond to IID samples from a distribution over a metric space. Unfortunately, it was shown that 
the ER between any two points converges to a trivial quantity that holds no information about the graph's structure as the size of the sample increases to infinity.

In this study, we show that this trivial solution can be circumvented by
considering a region-based ER between pairs of {\em small regions} rather than pairs of {\em points}
and by {\em scaling the edge weights} appropriately with respect to the
underlying density in each region. By keeping the regions fixed, we
show analytically that the region-based ER converges to a non-trivial limit as the
number of points increases to infinity. Namely the ER on a metric space. We support our theoretical findings with numerical experiments.
\end{abstract}

%%%%%%%%%%%%%%%%%%%%%%%%%%%%%%%%%
%%%%%%%%%% Introduction %%%%%%%%%
%%%%%%%%%%%%%%%%%%%%%%%%%%%%%%%%%
\section{Introduction}

A fundamental task of data science is to model the structure of point clouds embedded in a high-dimensional  ambient space. A common approach to this task is to represent the data as a graph where data points are considered vertices, and neighborhood information on the point cloud is encoded in edges connecting the vertices. The edges are often weighted based on the relative distance between points and are usually restricted to local neighborhoods. For simplicity of the introduction, we assume that an edge connects any two vertices whose distance is at most $\gamma>0$.~\footnote{More general definitions of edge weights will be given later.} 

Measuring the lengths of paths on such graphs is a key component of many methods that seek to characterize point clouds. In the following, we compare two of the most popular graph metrics used for this purpose:

\begin{itemize}
\item {\bf Shortest path} defines the distance between two vertices
  as the minimal number of edges that must be traversed to get from
  one vertex to the other. This is the most straightforward and
  intuitive measure of distance and corresponds to the geodesic
  distance when the point cloud lies on a differentiable
  manifold. However, the shortest path is sensitive to noise and the
  subtraction or addition of single points. It is, therefore, unreliable
  in the context of random point clouds.

\item{\bf Effective resistance (ER)}, also called {\em commute time} is
  significantly more stable than the shortest path distance as it 
  considers all possible paths between two points instead of only the shortest.
  Specifically, the {\em hitting time} of vertex $B$ from vertex $A$
  is the expected number of edges traversed by a random walk starting
  at $A$ and ending at $B$. The {\em commute time} is the sum of the
  hitting time from A to B and from B back to A. The {\em effective resistance} is a metric defined using analogy to resistance networks and is equal to the commute time up to a constant.
\end{itemize}

Figure \ref{fig:halfmoon_ER_vs_shortestpath} shows the difference between the shortest path and the ER distance on a point cloud shaped as a high-density half-moon over a noisy background. The figure illustrates how the shortest path distance would follow the noisy background, while the ER would follow the half-moon arch.

\begin{figure}
    \centering
    \includegraphics[width=0.5\textwidth]{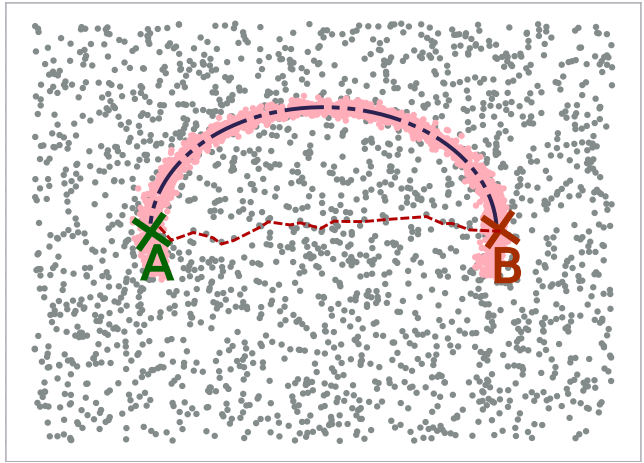}
    \caption{Comparison between ER distance and shortest path distance on a graph constructed on a point cloud. The point cloud consists of a high-density half-moon over a noisy background. The ER distance is the blue dotted line following the half-moon arch. The shortest path distance is the red dotted line across the center of the half-moon.}
    \label{fig:halfmoon_ER_vs_shortestpath}
\end{figure}

ER has been used in a number of applications in machine learning on
topics such as graph sparsification \cite{spielman2011graph}, online
learning \cite{herbster2006prediction}, detecting community structure
\cite{zhang2019detecting} and dimensionality reduction
\cite{ham2004kernel}. It has also been used in several other fields
ranging from bioinformatics \cite{forcey2020phylogenetic} to
electronics \cite{tauch2015measuring, kocc2014impact,
  wang2015network, cavraro2018graph}

%Meanwhile, data sets are rapidly increasing in size, and substantial interest is given to developing computational models for processing large graphs, involving strategies such as parallelization \cite{Kang2011miningLargeGraphs}, GPUs \cite{Shi2018GaphProcessingOnGPUs} and the data stream model \cite{McGregor2014GraphStream}. These computational models make the processing of massive graphs possible. However, a major hurdle prevents the extension of graph techniques relying on ER to these massive graphs. 

A long-standing issue with using ER for point clouds is that the ER
distance between points converges to a trivial limit as the number of
sampled points increases. This problem has been described in a long
line of works \cite{lovasz1993random, boyd2005mixing, avin2007cover,von2010getting, von2014hitting}. The problem is that in the large graph limit, the ER between two points A and B depends only on the degree (number of neighbors) of $A$ and $B$. This means that ER distances are not useful for the analysis of the shape of point clouds.

The trivial limit can be described as follows:
\begin{problem}[Von-Luxburg limit] \label{problem:trivial-limit}
  The ER between two nodes $i,j$ in a graph converges as the number of
  nodes in the graph increases to $1/d_i + 1/d_j$, where $d_i, d_j$
  are the degrees of respectively node $i$ and $j$.
\end{problem}

While the statement of the problem is asymptotic,
Von-Luxburg et al.~\cite{von2014hitting} shows that the asymptotic behavior begins already for
moderately sized graphs, with the number of nodes in the order of
$1000$. Moreover, for increasing dimension of the embedding space, this convergence is even faster.

This study aims to develop a new formulation of ER that does not
suffer from Problem~\ref{problem:trivial-limit} and can thereby be
used to characterize large point cclouds.

The key idea is to consider the RE between small regions, rather than individual points. The number of points in a region scales linearly with the total number of points, which avoids the collapse of the RE in the limit. While this is in principle a simple transformation, some care is required to prove that it works as desired.

Our contributions can be summarized as follows:
\begin{itemize}
\item We alleviate the problem described by Von-Luxburg et al.~\cite{von2010getting} by introducing the concept of a region-based ER between a source and sink region, combined with an appropriate scaling as the sample size grows. 
\item We prove the existence and convergence of region-based ER to a non-trivial limit.
\item We prove that region-based ER is a distance metric. In particular, region-based ER satisfies the triangle inequality.
\item We support our theoretical findings with several numerical experiments. 
\end{itemize}

The remainder of this paper is organized as follows. In Section \ref{section:resistor_graphs}, we introduce the concept of a resistor graph, the standard definition of ER, and the definition of ER between sets. In Section \ref{section:resistor_graphs_in_metric_spaces}, we introduce the concept of a metric graph and extend the concept of effective resistance to metric spaces. In Section \ref{section:convergence_of_ER}, we show how a finite sample region-based ER converges, with the appropriate scaling, to the limit object region-based ER on a metric space. We provide further results for the region-based ER in Section \ref{section:properties_of_regionbasedER}, where we show that it corresponds to the ER between sets and is a distance metric. Meanwhile, Section \ref{section:computational_complexity_considerations} describes a strategy to control the computational complexity of the ER calculation, using an $\varepsilon$-cover combined with a suitable scaling of the graph weights. Finally, section \ref{section:experiments} provide several numerical experiments supporting our theoretical findings. 

\subsection{Related work}
In our work, we show that our region-based ER can be thought of as ER between sets. An extension of the ER to ER between disjoint sets was introduced in Song et al.~\cite{song2019extension} for application on signed graphs. Song et al.~\cite{song2019extension} showed that ER between sets is a convex function of the edge weights. Furthermore, it was established that effective resistance between disjoint sets is monotonically increasing w.r.t. decreasing set size. Our contribution is to show that ER between sets can be generalized to ER over a metric space. We prove convergence to a non-trivial limit as the graph size increases and give numerical evidence that this limit is meaningful. Furthermore, we prove the triangle inequality and show that region-based ER is a distance metric.

%%%%%%%%%%%%%%%%%%%%%%%%%%%%%%%%%
%%%%%%%%%% Resistor Graph %%%%%%%%%
%%%%%%%%%%%%%%%%%%%%%%%%%%%%%%%%%
\section{Resistor Graphs}\label{section:resistor_graphs}
In this section, we review several key ideas and concepts about \textit{resistor graphs} and the effective resistance. Finally, we introduce the notion of effective resistance between sets.

We start by formulating our setting. Let $G = (\X,\W)$ be an un-directed, weighted graph with nodes $\X = \{x_1,\ldots, x_n\}$, and edge weights $\W_{i,j}$ and let $L=D-W$ be the graph Laplacian, where $D$ is the degree matrix $D_{ii} = \sum_j W_{ij}$. 

The graph can be thought of as an electrical network where each edge $(x_i, x_j)$ has a non-negative \textit{resistance} $R(x_i,x_j) = 1/W_{ij}$. With further analogy to electrical circuits, we can interpret a function $v: X \rightarrow \bbR$ on the graph's vertices as a voltage $v(x_i)$ and assign to each edge a signed {\em current} $J_{i,j}=-J_{j, i}$, which can be related to the resistance and voltages through Ohm's law. The relation can be written as 
\begin{equation}
      \v(x_i)-\v(x_j) = R(x_i,x_j) J_{i,j} \quad \text{or alternatively} \quad  J_{i,j} = W_{ij}(\v(x_i)-\v(x_j)).
\end{equation}
Meanwhile, from Kirchhoff's law, the sum of currents entering a node $i$ must be zero, namely
\begin{equation}
\sum_{j\sim i} J_{i,j} = J_{ext,i},
\end{equation}
where $J_{ext, i}$ is an external current that can be either a source, a sink, or zero if the node is unconstrained (no external source applied). Combining these laws, we have that
\begin{equation}
    (Lv)_i =\sum_{j\sim i} W_{ij}(v(x_i) - v(x_j)) = J_{ext,i}
    \label{eq:combined_kirchhoff_and_ohm}
\end{equation}

Furthermore, since energy transfer in an electrical circuit is voltage times charge. We can define the {\em energy} of the voltage $\v$ as
 \begin{equation} \label{eqn:energy}
   E(\v) \doteq  \sum_{x_i, x_j \in X} \W_{i,j} (\v(x_i)-\v(x_j))^2 =
   \v^T \Laplacian \v
 \end{equation}

\subsection{Effective resistance}\label{section:ER}
The ER is a measure for calculating distances on graphs \cite{klein1993resistance}, which considers all possible paths between two nodes, as opposed to the shortest-path metric. As such, the ER captures the graph's structure more carefully, which can be advantageous in many applications. Using the analogy to electrical circuits, effective resistance can be defined in two equivalent ways based on $R(x_i, x_j) = \Delta V / J$, where $\Delta V$ is the voltage difference and $J$ the total current flowing between the two nodes. In the \textbf{voltage difference formulation}, the current flow is constrained to unity, and the resistance is given in terms of the voltage difference between the nodes. In the \textbf{current flow formulation} the voltage difference is constrained to unity, and the resistance is given in terms of the inverse of the total current flow between the nodes. The following definitions formalize these approaches.

\begin{definition}[Effective resistance (voltage difference formulation)]\label{def:ER_voltage_diff_def}
The effective resistance $R(x_i, x_j)$ between two nodes $x_i, x_j\in X$ in a graph is the voltage difference between the nodes when a current of one ampere is injected between the source node $x_i$ and extracted from the sink node $x_j$. 
\end{definition}

\begin{definition}[Effective resistance (current flow formulation)]\label{def:ER_current_flow_def}
The effective resistance $R(x_i, x_j)$ between two nodes $x_i, x_j\in X$ in a graph is the inverse of the current flow between the nodes with the boundary conditions $v(x_i)=1$ and $v(x_j)=0$. 
\end{definition}

From Definitions \ref{def:ER_voltage_diff_def} and \ref{def:ER_current_flow_def}, we know that $R(x_i, x_j)$ can be found from the voltage or current in the system when the other is appropriately constrained. However, we still need a way to explicitly calculate these quantities. Several approaches exist for calculating the ER, and a summary of different formulations can be found in Theorem 4.2 in Jørgensen and Erin \cite{jorgensen2008operator}. However, in this work we restrict ourselves to the two formulations that follow most naturally from Definitions \ref{def:ER_voltage_diff_def} and \ref{def:ER_current_flow_def} respectively.

\begin{proposition}[Voltage difference formulation]\label{prop:poisson_formulation_ER} The effective resistance between nodes $x_i, x_j$ corresponds to $R(x_i, x_j) = v(x_i) - v(x_j) = E_{min}$, where $v$ is the function that minimizes the energy
\begin{align*}
    \begin{split}
       \min_{v} & \quad \sum_{x_i, x_j \in X} \W_{i,j} (\v(x_i)-\v(x_j))^2 \\
        \text{Subject to} & \quad (L v)_i = 1, \quad  (L v)_j = -1, \quad (L v)_i = 0, \forall i\in X\backslash \{x_i,x_j\}
    \end{split}
\end{align*}
\end{proposition}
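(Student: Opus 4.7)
The plan is to verify the proposition through three linked observations: (i) the stated constraints are exactly the Kirchhoff--Ohm equations for a unit current injected at $x_i$ and extracted at $x_j$; (ii) by Definition \ref{def:ER_voltage_diff_def} any feasible $v$ satisfies $R(x_i,x_j) = v(x_i) - v(x_j)$; and (iii) the energy of any feasible $v$ evaluates to this same voltage drop, so the ``minimum'' is automatically attained.

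First I would rewrite the constraints in vector form as $Lv = e_i - e_j$, where $e_k$ denotes the $k$-th standard basis vector. By \eqref{eq:combined_kirchhoff_and_ohm} this is precisely the combined Kirchhoff--Ohm system with external current $J_{ext} = e_i - e_j$, i.e.\ a unit current injected at $x_i$ and extracted at $x_j$. By Definition \ref{def:ER_voltage_diff_def}, the effective resistance is then $R(x_i,x_j) = v(x_i) - v(x_j)$ for any voltage $v$ solving this system, giving the first equality.

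For the second equality I would use the identity $E(v) = v^{T} L v$ from \eqref{eqn:energy} and compute directly
\begin{equation*}
E(v) \;=\; v^{T} L v \;=\; v^{T}(Lv) \;=\; \sum_{k} v(x_k)\,(Lv)_k \;=\; v(x_i)\cdot 1 + v(x_j)\cdot(-1) \;=\; v(x_i) - v(x_j),
\end{equation*}
so the energy of every feasible $v$ equals the voltage drop, and hence equals $R(x_i,x_j)$. Thus both the first-order optimality condition and the objective value at the optimum are handed to us by simple linear algebra, with no genuine optimization required.

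The only real subtlety, and the step I expect to need the most care, is the non-uniqueness of the minimizer. On a connected graph $\ker L = \mathrm{span}(\mathbf{1})$, and since $e_i - e_j$ is orthogonal to $\mathbf{1}$ the feasible set is a one-parameter affine family $\{v_0 + c\mathbf{1} : c \in \reals\}$. Both $v(x_i) - v(x_j)$ and the quadratic form $E(v)$ are invariant under adding a constant, so the objective is constant across the feasible set and $E_{\min}$ is well-defined and equal to $R(x_i,x_j)$. This translation invariance is what makes the minimization formulation consistent, and is the only point where the argument needs to be handled with some care.
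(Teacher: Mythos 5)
Your proof is correct and, unlike the paper's --- which is a bare citation to an external Theorem~4.2 in \cite{jorgensen2008operator} --- it is self-contained. The route is the natural one: the constraints amount to $Lv=e_i-e_j$, so Definition~\ref{def:ER_voltage_diff_def} gives $R(x_i,x_j)=v(x_i)-v(x_j)$ for any feasible $v$, and the chain $E(v)=v^{\top}Lv=v^{\top}(Lv)=v^{\top}(e_i-e_j)=v(x_i)-v(x_j)$ shows the energy of every feasible $v$ equals that same voltage drop. What this buys over the citation is an explicit account of why the displayed ``minimization'' is degenerate: on a connected graph the constraint $Lv=e_i-e_j$ determines $v$ up to $\ker L$, which consists of the constant functions, and both the energy and the voltage difference are invariant under that shift, so the objective is constant on the feasible set and $E_{\min}$ is forced rather than optimized. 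Two small polish points worth adding: one should note that $e_i-e_j\in\mathrm{range}(L)$ (so the feasible set is nonempty) because $L$ is symmetric and $e_i-e_j$ is orthogonal to $\ker L$, which is the step your orthogonality remark is implicitly running through; and implicit throughout is that the graph is connected, which the paper never states but clearly assumes.
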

\begin{proof}
See Theorem 4.2, Jørgensen and Erin \cite{jorgensen2008operator}.
\end{proof}

\begin{proposition}[Current flow formulation]\label{prop:dirichlet_formulation_ER} The effective resistance between nodes $x_i, x_j$ corresponds to $R(x_i, x_j) = 1/J_{tot}$, where 
\begin{equation*}
    J_{tot} = \sum_{j\in X} W_{ij}(v(i) - v(j))
\end{equation*}
and $v$ is the function that minimizes the Dirichlet energy
\begin{align*}
    \begin{split}
       \min_{v} & \quad \sum_{x_i, x_j \in X} \W_{i,j} (\v(x_i)-\v(x_j))^2 \\
        \text{Subject to} & \quad v(x_i) = 1, \quad  v(x_j) = 0
    \end{split}
\end{align*}
\end{proposition}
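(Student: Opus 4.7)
The plan is to reduce the claim to Definition~\ref{def:ER_current_flow_def} by showing that the minimizer of the constrained Dirichlet energy coincides with the physical voltage distribution on the resistor network when $x_i$ is held at $1$ and $x_j$ at $0$. The effective resistance is then read off as the reciprocal of the net current injected at $x_i$ to sustain this configuration.

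First, I would establish existence and uniqueness of a minimizer $v^\star$. The objective $v^T L v$ is a positive semidefinite quadratic form, and on the affine subspace defined by the two boundary constraints $v(x_i)=1$, $v(x_j)=0$ it is strictly convex whenever the graph is connected (the kernel of $L$ consists only of constant functions, which are excluded by the two distinct boundary values). Hence a unique minimizer $v^\star$ exists.

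Second, I would derive the first-order optimality conditions by considering perturbations $v^\star + \varepsilon\phi$ where $\phi(x_i)=\phi(x_j)=0$, $\phi$ arbitrary elsewhere. Differentiating the energy \eqref{eqn:energy} in $\varepsilon$ and using the identity $v^T L v = \sum_{a,b} W_{ab}(v(a)-v(b))^2$ gives
\begin{equation*}
    \sum_{k \in X \setminus \{x_i, x_j\}} \phi(k) \, (L v^\star)(k) \;=\; 0 \qquad \forall \phi,
\end{equation*}
so $(L v^\star)(k) = 0$ for every $k \neq x_i, x_j$. Combining this with Kirchhoff's and Ohm's laws as in \eqref{eq:combined_kirchhoff_and_ohm}, the function $v^\star$ is exactly the voltage distribution produced by injecting a current at $x_i$, extracting the same current at $x_j$, and leaving all other nodes unconstrained, subject to the boundary values $v^\star(x_i)=1$ and $v^\star(x_j)=0$.

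Third, I would identify the current. Since the only nonzero components of $Lv^\star$ lie at $x_i$ and $x_j$, and since $\mathbf{1}^T L v^\star = 0$, we must have $(Lv^\star)(x_i) = -(Lv^\star)(x_j) = J_{tot}$ with $J_{tot} = \sum_{j \in X} W_{ij}(v^\star(x_i) - v^\star(x_j))$ as stated. By Definition~\ref{def:ER_current_flow_def}, the effective resistance between $x_i$ and $x_j$ is the reciprocal of the total current flowing between them under unit voltage difference, which is precisely $1/J_{tot}$. The main (and only) subtlety is the variational step that pins down $v^\star$ as harmonic on the interior; this is the standard Dirichlet principle for discrete Laplacians and can alternatively be cited from Theorem~4.2 of Jørgensen and Erin~\cite{jorgensen2008operator} exactly as in Proposition~\ref{prop:poisson_formulation_ER}.
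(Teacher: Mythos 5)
Your argument is correct, but it is worth noting that the paper itself does not give a proof here: it simply cites Theorem~4.2 of J{\o}rgensen and Erin~\cite{jorgensen2008operator}, as you observe at the end. What you have supplied is the standard self-contained Dirichlet-principle argument that sits behind that citation. The variational step is carried out correctly: strict convexity of $v^{T}Lv$ on the affine slice $\{v : v(x_i)=1,\ v(x_j)=0\}$ follows because, for a connected graph, $\ker L$ is spanned by the constants and those are excluded by the two distinct boundary values; the perturbation computation gives $(Lv^{\star})(k)=0$ for $k\neq x_i,x_j$; and $\mathbf 1^{T}Lv^{\star}=0$ forces $(Lv^{\star})(x_i)=-(Lv^{\star})(x_j)=J_{\mathrm{tot}}$, so Definition~\ref{def:ER_current_flow_def} applies directly.

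Two small points to tighten. First, the connectedness of $G$ is genuinely needed for uniqueness (and for $J_{\mathrm{tot}}>0$), and while the paper treats this as a standing hypothesis, your proof should state it explicitly since it is where the argument would otherwise fail. Second, for $1/J_{\mathrm{tot}}$ to be meaningful you should note that $J_{\mathrm{tot}}>0$; this follows because the harmonic extension of boundary data $1,0$ on a connected graph has $v^{\star}(x_i)=1>v^{\star}(y)$ for every neighbor $y$ of $x_i$ (maximum principle), so $J_{\mathrm{tot}}=\sum_{y}W_{iy}\bigl(v^{\star}(x_i)-v^{\star}(y)\bigr)>0$. With those clarifications your proof is a complete replacement for the paper's citation, whereas the paper's version trades self-containment for brevity.
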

\begin{proof}
See Theorem 4.2, in Jørgensen and Erin \cite{jorgensen2008operator}.
\end{proof}

We note that the standard formulation of effective resistance
\begin{equation*}\label{eq:compact_ER}
    R(x_i, x_j) = (e_i-e_j)^\top L^\dagger(e_i-e_j)
\end{equation*}
can easily be derived from $R(x_i, x_j) = v(x_i) - v(x_j) = (e_i-e_j)^\top v$ and the constraints $Lv = e_i-e_j$ in Proposition \ref{prop:poisson_formulation_ER}, which gives $v = L^\dagger(e_i-e_j)$. This relation shows how ER is an euclidean distance matrix \cite{ghosh2008minimizing}. Furthermore, it show how the distance between nodes using ER, can be computed without explicitly calculating the voltage functions $v$. Instead, it suffices to solve for $L^{\dagger}$, the pseudo-inverse of $L$. Methods have also been proposed for distributed computation \cite{gillani2021queueing}. 

\begin{lemma}Effective resistance satisfies the triangle inequality and is a distance metric on graphs
\end{lemma}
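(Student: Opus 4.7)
The first three metric axioms — non-negativity, symmetry, and the identity of indiscernibles — I would verify directly from the compact formula $R(x_i, x_j) = (e_i - e_j)^\top L^\dagger (e_i - e_j)$ established just before the lemma. Since $L$ is symmetric and positive semi-definite, so is $L^\dagger$ (by spectral decomposition), which immediately delivers $R(x_i,x_j) \geq 0$ and $R(x_i, x_j) = R(x_j, x_i)$. For the identity, I would use the fact that $\ker L^\dagger = \ker L = \operatorname{span}(\mathbf{1})$ on a connected graph, and observe that $e_i - e_j$, whose entries sum to zero, lies in this span only when $i = j$.

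The substantive step is the triangle inequality $R(a, c) \leq R(a,b) + R(b,c)$. The plan is to use Proposition 2.3 together with a superposition argument. Let $v^{ab}$ denote the voltage field corresponding to unit current injected at $a$ and extracted at $b$, i.e., $(L v^{ab})_a = 1$, $(Lv^{ab})_b = -1$, and $(Lv^{ab})_i = 0$ otherwise; define $v^{bc}$ analogously. Proposition 2.3 gives $R(a,b) = v^{ab}(a) - v^{ab}(b)$ and $R(b,c) = v^{bc}(b) - v^{bc}(c)$. By linearity of $L$, the function $v^{ac} := v^{ab} + v^{bc}$ satisfies $(Lv^{ac})_a = 1$, $(Lv^{ac})_c = -1$, and $(Lv^{ac})_i = 0$ at every other vertex, because the source and sink contributions at $b$ cancel. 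Up to an additive constant (irrelevant for differences), $v^{ac}$ is therefore the solution to the unit-current problem for $(a,c)$, so $R(a,c) = v^{ac}(a) - v^{ac}(c)$.

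Combining these three identities yields
\begin{equation*}
R(a,b) + R(b,c) - R(a,c) = \bigl[v^{ab}(c) - v^{ab}(b)\bigr] + \bigl[v^{bc}(b) - v^{bc}(a)\bigr].
\end{equation*}
The remaining task is to show that each bracket is non-negative, which I would do via the discrete maximum principle applied to the respective voltage field. In the $(a,b)$-problem, $(Lv^{ab})_i = 0$ at every vertex other than the source and sink, forcing $v^{ab}(i)$ to be a positive-weight convex combination of its neighbours' values, so the global minimum of $v^{ab}$ must be attained at the sink $b$; hence $v^{ab}(c) \geq v^{ab}(b)$. Symmetrically, $v^{bc}$ attains its global maximum at the source $b$, giving $v^{bc}(b) \geq v^{bc}(a)$. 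Summing yields the triangle inequality.

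The step I expect to require the most care is the discrete maximum principle: strict interior extrema must be ruled out by showing that an extremum at a harmonic vertex forces $v$ to be constant on the connected component containing that vertex, contradicting the prescribed sign of $(Lv)$ at the source/sink. Everything else is bookkeeping, and the triangle inequality falls out of the display above once the maximum principle is in hand.
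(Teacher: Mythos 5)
The paper does not give its own proof of this lemma: it simply cites Jørgensen and Erin, Ghosh et al., and Klein and Randi\'{c}. So there is nothing to compare against verbatim, and a self-contained argument is a genuine contribution.

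Your proof is correct. The metric axioms other than the triangle inequality are handled exactly right from the quadratic form $R(x_i,x_j)=(e_i-e_j)^\top L^\dagger(e_i-e_j)$ together with $\ker L^\dagger=\ker L=\operatorname{span}(\mathbf{1})$ on a connected graph. For the triangle inequality, the superposition identity $v^{ac}=v^{ab}+v^{bc}$ (which solves the $(a,c)$ unit-current Poisson problem because the injected and extracted currents at $b$ cancel) is valid, and the algebra reducing $R(a,b)+R(b,c)-R(a,c)$ to $\bigl[v^{ab}(c)-v^{ab}(b)\bigr]+\bigl[v^{bc}(b)-v^{bc}(a)\bigr]$ is correct. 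The discrete maximum principle then delivers non-negativity of each bracket: at a harmonic vertex, the value is a convex combination of neighbours' values with positive weights, so an interior global extremum propagates constancy along the connected graph, contradicting $(Lv^{ab})_a=1\neq 0$; and an extremum at the source or sink is ruled out by the prescribed sign of $(Lv)$ there. Hence the minimum of $v^{ab}$ is at the sink $b$ and the maximum of $v^{bc}$ is at the source $b$. One small caveat: the whole argument implicitly assumes the graph is connected (both for $\ker L=\operatorname{span}(\mathbf{1})$ and for the maximum-principle propagation), which is the standard standing assumption when effective resistance is defined. It is worth noting that the paper's own proof of its Theorem~5.1 (triangle inequality for ER between sets, Appendix~E.2) uses a closely related voltage-monotonicity argument on the reduced graph, so your approach is consistent in spirit with what the authors do in the generalized setting.
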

\begin{proof}
    See e.g. Jørgensen and Erin \cite{jorgensen2008operator}, Ghosh et al.~\cite{ghosh2008minimizing} or Klein and Randi{\'c}\cite{klein1993resistance}.
\end{proof}

\begin{remark}
We note that the \textbf{voltage difference formulation} corresponds to a Poisson problem since the constraints on the current are applied as an inhomogenous term on the Laplace equation $Lv = e_1 - e_2$. Similarly, the \textbf{current flow formulation} corresponds to a Dirichlet problem since it solves a homogenous Laplace equation with the constraints applied as boundary conditions.
\end{remark}

%\paragraph{Potential difference interpretation} The Poisson formulation constraints the current injected at the source node to $1$ and the current extracted at the sink node to $-1$ (which constrains the Laplacian of $v$, through the relationship $I = Lv$) and then solves for the energy-minimizing voltage under these constraints. Under this formulation, the effective resistance equals the potential difference in the circuit when a unit current flows through the resistor graph. 

 %\paragraph{Current flow interpretation} The Dirichlet formulation constrains the voltage at the source to $1$ and the sink node to $0$ and then solves for the voltage that minimizes the energy under these constraints. Under this formulation, the effective resistance equals the inverse of the total current flowing through the resistor graph when the source and sink are kept at a constant voltage.

\subsection{Effective Resistance Between Sets}\label{section:ER_sets}
In this study, we propose considering the ER between
small regions rather than pairs of points to achieve a non-trivial limit in the metric graph setting. Our first step in this regard is to extend the concept of effective resistance to effective resistance between sets. Whereby sets, we mean subsets of the graph nodes $X$. The effective resistance between sets was defined in Definition 2 \cite{song2019extension}, in terms of the Schur complement of the Laplacian. For the analysis in this paper, we consider the current flow interpretation from Song et al.~\cite{song2019extension} and use this to write down an equivalent formulation that generalizes the current flow formulation from Proposition \ref{prop:dirichlet_formulation_ER} to the effective resistance between sets. We consider the sets $X_a, X_b \subset X$ and $X_c = X\backslash (X_a \cup X_b \cup X_z)$ and let $R^s(X_a, X_b)$, defined in Definition \ref{def:ER_sets_current_flow_def}, denote the effective resistance between two sets. Proposition \ref{prop:dirichlet_formulation_ER_sets} tells us how we can explicitly calculate $R^s(X_a, X_b)$.

\begin{definition}[Effective resistance between sets (current flow formulation)]\label{def:ER_sets_current_flow_def}
The effective resistance $R^s(X_a, X_b)$ between two non-empty disjoint sets $X_a, X_b\subset X$ in a graph is the inverse of the current flow between the two subsets with the boundary conditions $v(x_i)=1,\, \forall i \in X_a$ and $v(x_i)=0,\,\forall i \in X_b$.  
\end{definition}

\begin{proposition}[Current flow formulation on sets]\label{prop:dirichlet_formulation_ER_sets}
The effective resistance between the non empty disjoint subsets $X_s, X_g \subset X$ corresponds to $R^s(X_s, X_g) = 1/J_{tot}$, where 
\begin{equation*}
    J_{tot} = \sum_{i\in X_s}\sum_{j\in X} W_{ij}(v(i) - v(j))
\end{equation*}
and $v$ is the function that minimizes the energy
\begin{align*}
    \begin{split}
       \min_{v} & \quad \sum_{x_i, x_j \in X} \W_{i,j} (\v(x_i)-\v(x_j))^2 \\
        \text{Subject to} & \quad v(x_i) = 1\, \forall i\in X_s, \quad  v(x_i) = 0,\, \forall i \in X_g
    \end{split}
\end{align*}
\end{proposition}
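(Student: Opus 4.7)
The plan is to show that the minimizer of the constrained Dirichlet energy in the proposition is precisely the physical voltage function appearing in Definition~\ref{def:ER_sets_current_flow_def}, and then just read off the formula $R^s(X_s,X_g)=1/J_{tot}$. This mirrors the proof of Proposition~\ref{prop:dirichlet_formulation_ER} for single source/sink nodes, with the boundary data promoted from two points to two disjoint sets.

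First I would treat the constraints $v(x_i)=1$ for $x_i\in X_s$ and $v(x_i)=0$ for $x_i\in X_g$ as fixed, so the minimization reduces to an unconstrained quadratic problem in the values $v(x_i)$ at the interior nodes $X\setminus(X_s\cup X_g)$. Because $E(v)=v^\top L v$ is positive semi-definite and the only null direction of $L$ is the constant vector (assuming the graph is connected), the restriction of $E$ to the affine slice defined by the boundary values is strictly convex, so a unique minimizer $v^*$ exists. Taking the partial derivative of $E$ with respect to each interior value $v(x_i)$ gives the first-order condition
\[
\frac{\partial E}{\partial v(x_i)} = 2\sum_{j\sim i} W_{ij}\bigl(v^*(x_i)-v^*(x_j)\bigr) = 2(Lv^*)_i = 0 \qquad \forall\, x_i\in X\setminus(X_s\cup X_g).
\]
By equation~(\ref{eq:combined_kirchhoff_and_ohm}) this is exactly the combined Kirchhoff/Ohm statement $J_{ext,i}=0$ at every interior node; equivalently, external current is injected only at nodes in $X_s\cup X_g$. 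Therefore $v^*$ coincides with the physical voltage distribution on the resistor network obtained by imposing the boundary conditions $v=1$ on $X_s$, $v=0$ on $X_g$, which is the voltage function implicit in Definition~\ref{def:ER_sets_current_flow_def}.

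Next I would compute the total current flowing from $X_s$ into the rest of the network. By Ohm's law,
\[
J_{tot} \;=\; \sum_{i\in X_s} J_{ext,i} \;=\; \sum_{i\in X_s}\sum_{j\in X} W_{ij}\bigl(v^*(x_i)-v^*(x_j)\bigr),
\]
which is precisely the expression in the proposition. Kirchhoff's law at each interior node (already established above) then forces this same quantity to exit at $X_g$, so $J_{tot}$ is unambiguously the current flowing between the two sets. Since the boundary values force a voltage difference of $1-0=1$ between $X_s$ and $X_g$, Definition~\ref{def:ER_sets_current_flow_def} gives $R^s(X_s,X_g)=1/J_{tot}$, as claimed.

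The hard part is more bookkeeping than substance: one must (i) argue uniqueness of $v^*$ on possibly non-connected ambient graphs by restricting to the connected component containing $X_s\cup X_g$ (nodes disconnected from the boundary contribute nothing to $J_{tot}$ and can be set to any harmonic extension), and (ii) verify that the summation $\sum_{i\in X_s}\sum_{j\in X}$ really captures net flux between the sets rather than double-counting internal currents within $X_s$: terms with both endpoints in $X_s$ have $v^*(x_i)-v^*(x_j)=1-1=0$ and drop out, so only edges crossing from $X_s$ to $X\setminus X_s$ contribute, matching the physical flux. With these two points noted, the chain Definition~\ref{def:ER_sets_current_flow_def} $\Longleftrightarrow$ harmonic boundary-value problem $\Longleftrightarrow$ Dirichlet minimization closes.
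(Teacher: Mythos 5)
Your argument is correct and self-contained. Note that the paper's own ``proof'' of this proposition is just a pointer to Definition~2 and Theorem~1 of Song et al.~\cite{song2019extension}; your write-up supplies the argument that the citation defers. The route you take --- reduce to an unconstrained quadratic in the interior values, derive the first-order condition $(Lv^*)_i = 0$ at every interior node, identify this via Eq.~\eqref{eq:combined_kirchhoff_and_ohm} as Kirchhoff's law so that $v^*$ is exactly the physical voltage appearing in Definition~\ref{def:ER_sets_current_flow_def}, and then read off $J_{tot} = \sum_{i\in X_s}(Lv^*)_i$ by Ohm's law --- is the standard Dirichlet-principle argument, and it is what the cited reference carries out. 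Your two closing observations are both sound: uniqueness of the interior values follows from positive definiteness of the interior block $L_{II}$ on the component(s) meeting $X_s\cup X_g$, and terms with both endpoints in $X_s$ do indeed vanish from $J_{tot}$ because both voltages equal $1$. The one caveat I would add to your point (i) is that if $X_s$ and $X_g$ sit in different connected components then $J_{tot}=0$ and the formula returns $R^s=\infty$, so the proposition is implicitly assuming $X_s$ and $X_g$ share a component --- which is guaranteed by the standing hypothesis $\M = C_\alpha^m(\M_s\cup\M_g)$ used elsewhere in the paper.
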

\begin{proof}
Definition 2 and Theorem 1 in Song et al.~\cite{song2019extension}.
\end{proof}

%%%%%%%%%%%%%%%%%%%%%%%%%

%%%%%%%%%%%%%%%%%%%%%%%%%%%%%%%%%%%%%%%%%%%%%%%%%%
%%%%%%%%%% Resistor Graphs Metric Spaces %%%%%%%%%
%%%%%%%%%%%%%%%%%%%%%%%%%%%%%%%%%%%%%%%%%%%%%%%%%%

\section{Effective Resistance over metric spaces}\label{section:resistor_graphs_in_metric_spaces}
Our goal is to extend the concept of effective resistance to metric spaces. To this end, we start by defining a particular type of graph, namely graphs constructed from samples drawn from a distribution over a metric space. Let $(\M, d)$ be a compact metric space and $\mu$, a probability measure over $\M$.

Let $k: \M \times \M \rightarrow [0, 1]$ be a kernel function that defines what it means for two points to be "near" each other. 
Commonly used kernel functions are:
\begin{itemize}
\vspace{0.25cm}
    \item The radial kernel: $k_r(x, y) = \mathbbm{1}(d(x, y) \leq r)$ where $r > 0$ is some fixed radius.
    \vspace{0.25cm}
    \item The Gaussian kernel: $k_\sigma(x,y) = \exp(\frac{-d(x,y)^2}{2\sigma^2}),$  where $\sigma > 0$ is the fixed temperature parameter.
    %\item The nearest neighbor kernel $k_\kappa(x, y) = \mathbbm{1}(\Gamma(x,y))$, where $\Gamma(x,y)$ is a condition on the neighbourhoods of $x,y$. There are two natural choices of $\Gamma(x,y)$ that give a symmetric graph. Either the full nn-kernel $\Gamma(x,y) = (y\in \CN_\kappa(x))\vee (x\in \CN_\kappa(y))$ or the mutual distance nn-kernel $\Gamma(x,y) = (y\in \CN_\kappa(x)) \wedge (x\in \CN_\kappa(y))$. Here $\CN_\kappa(\zeta)\subseteq M$ is the $\kappa$ nearest neighbours of $\zeta$ as measured by $d(\cdot,\cdot)$.  For this paper, we will stick with the former, the full nn-kernel.
\vspace{0.25cm}
\end{itemize}

%Here the goal is to connect vertex vi with vertex vj if vj is among the k-nearest neighbors of vi. However, this definition leads to a directed graph, as the neighborhood relationship is not symmetric. There are two ways of making this graph undirected. The first way is to simply ignore the directions of the edges, that is we connect vi and vj with an undirected edge if vi is among the k-nearest neighbors of vj or if vj is among the k-nearest neighbors of vi. The resulting graph is what is usually called the k-nearest neighbor graph. The second choice is to connect vertices vi and vj if both vi is among the k-nearest neighbors of vj and vj is among the k-nearest neighbors of vi. The resulting graph is called the mutual k-nearest neighbor graph. In both cases, after connecting the appropriate vertices we weight the edges by the similarity of their endpoints

Next, let $M_s \subseteq M$ and $M_g \subseteq M$ be two disjoint measurable subsets of $M$.  As before, we let $k: M \times M \to [0, 1]$ be a kernel function. 

To define the effective resistance between $M_s$ and $M_g$, we begin by defining the energy-minimizing voltage induced by $\mu$ and $k$ over these sets.

\begin{definition}\label{defn:energy_minimizing_voltage}
Let $M_s, M_g \subset M$ be measurable disjoint subsets, $k$ a kernel function, and $\mu$ a probability measure over $M$. Let $V_{M_s, M_g}$ be the set of all measurable functions $v: M \to [0, 1]$ with $v(x) = 1$ for all $x \in M_s$ and $v(x) = 0$ for all $x \in M_g$. For any such $v$, define its energy as $$E(v) = \int_{M} \int_{M}  k(x, y) (\v(x)-\v(y))^2d\mu(x) d\mu(y).$$ Then we say that $v^* \in V_{M_s, M_g}$ is an energy minimizing voltage if it is a  global minimum of $E(v)$, meaning that $$v^* = \argmin_{v \in V_{M_s, M_g}} E(v).$$ 
\end{definition}

Observe that Definition \ref{defn:energy_minimizing_voltage} does not necessarily imply a unique energy minimizing voltage induced by $M_s$ and $M_g$. However, under certain technical conditions on $k, \mu$, and $M$, we can show that uniqueness holds, which will be crucial for defining the effective resistance between $M_s$ and $M_g$.

\subsection{Technical conditions of  $M$, $\mu$, and $k$}

We begin with the notion of a normalized kernel, which integrates to $1$ over any fixed point $x \in M$. 

\begin{definition}\label{defn:norm_kernel}
The normalized kernel of $k$, denoted $\nk: M \times M \to R$, is defined as $$\nk(x, y) = \frac{k(x,y)}{\int_\M k(x, z)d\mu(z)}.$$
\end{definition}

A normalized kernel can be thought of as the natural analog of a degree normalized weight matrix. We can easily verify that $\int \nk(x,y)d\mu (y) = 1$ for all $x$. 

Next, we generalize the notion of \textit{adjacency} to the metric setting.

\begin{definition}
Let $A \subseteq \M$ a measurable set and $\alpha > 0$. Define $C_\alpha(A)$ as the set of all $x$ such that $\int_A \nk(x,y)d\mu(y) > \alpha$. 
\end{definition}

$C_\alpha(A)$ can be thought of as a continuous generalization of the set of ``neighbors" of $A$. In the finite setting $C(A)$ would comprise of vertices that are adjcaent to some vertex in $A$.  The parameter $\alpha$ provides a lower bound on the degree of connectivity.

Next, we continue this to develop an analogous generalization of vertices that are path connected to $A$. To do so, we first define the convolution operation over kernel similarity functions.

\begin{definition}\label{defn:convolution}
Let $p, q$ be kernel functions $p: \M \times \M \to [0, 1]$ , $q: \M \times \M \to [0, 1]$. Then their convolution is the function $p \circ q: \M \times \M \to [0, 1]$ defined by $$(p \circ q)(x, y)  = \int_M p(x, z)q(z, y)d\mu(z).$$ We let $p^{(i)}$ denote $p \circ p \circ \dots p$ repeated $i$ times.
\end{definition}

The key idea is that the kernel $\nk^{(i)}$ corresponds to paths of length $i$. As a result, we now define $C_\alpha^i$ accordingly. 

\begin{definition}\label{defn:path}
Let $A \subseteq \M$ a measurable set. Then $C_\alpha^i(A)$ denotes the set of all $x$ for which $\int_A \nk^{(i)}(x, y)d\mu(y) \geq \alpha.$
\end{definition}

We will now restrict our interest to metric spaces $\M$ for which $\M = C_\alpha^m(A)$ for some fixed integer $m > 0$, and real number $\alpha > 0$. This condition essentially means that the graphs we consider are both path connected and have bounded diameter, and  the parameter $\alpha$ implies that this connection has a degree of robustness. We note that these assumptions are extremely mild: for example, any compact manifold using a radial or Gaussian kernel can be easily shown to satisfy them.

\subsection{Defining the effective resistance}

We now show that under the previously discussed technical conditions, there exists a unique energy minimizing voltage with respect to $M_s, M_g$, $k$, and $\mu$. 

%\begin{proposition}\label{prop:unique_voltage_solution}
%Let $k$ be a kernel, $M$, a metric space, and $\mu$ a measure over $M$. Suppose there exists $\alpha > 0$ such that for some measurable $A \subseteq M$, $M = C_\alpha(A)$ (Definition \ref{defn:path}). Then for any measurable disjoint sets $M_s$ and $M_g$, there exists a unique energy minimizing voltage $v^*$ (Definition \ref{defn:energy_minimizing_voltage}) over $M$ with $M_s$ as its source and $M_g$ as its sink. 
%\end{proposition}

% Alternative Proposition....
\begin{proposition}\label{prop:unique_voltage_solution}
Let $k$ be a kernel, $M$, a metric space, and $\mu$ a measure over $M$. Let $M_s$ and $M_g$ be measurable disjoint subsets of $M$. Suppose there exists $\alpha > 0$, $m>0$ such that $\M = C_\alpha^m(\M_g \cup \M_s)$ (Definition \ref{defn:path}). Then for any measurable disjoint sets $M_s$ and $M_g$, there exists a unique energy minimizing voltage $v^*$ (Definition \ref{defn:energy_minimizing_voltage}) over $M$ with $M_s$ as its source and $M_g$ as its sink. 
\end{proposition}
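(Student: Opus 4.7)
The plan is to split the claim into existence and uniqueness, handle existence by the direct method of the calculus of variations, and handle uniqueness by combining strict convexity of the quadratic form $E$ with the connectivity hypothesis $\M = C_\alpha^m(\M_s \cup \M_g)$.

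\textbf{Existence (direct method).} The admissible set $V_{M_s,M_g}$ is non-empty (it contains the function equal to $1$ on $\M_s$ and $0$ elsewhere), convex, and $L^\infty$-bounded, hence a bounded subset of $L^2(\mu)$. The energy $E(v)$ is a non-negative quadratic form, bounded by $\mu(\M)^2$ on $V_{M_s,M_g}$ since $k\le 1$, and continuous in the $L^2(\mu)$-norm; being convex and continuous it is weakly lower semicontinuous. Take a minimizing sequence $\{v_n\}\subset V_{M_s,M_g}$; since it is $L^2$-bounded, Banach--Alaoglu gives a weakly convergent subsequence $v_{n_k}\rightharpoonup v^*$. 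The pointwise bounds $0\le v\le 1$ together with the prescribed boundary values define a strongly closed convex subset of $L^2(\mu)$, hence a weakly closed one, so $v^*\in V_{M_s,M_g}$. Weak lower semicontinuity of $E$ then gives $E(v^*)\le\liminf_k E(v_{n_k})=\inf_{V_{M_s,M_g}} E$.

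\textbf{Uniqueness via the parallelogram identity.} Suppose $v_1,v_2$ both attain the minimum $E^*$. The bilinear form associated with $E$ yields
\begin{equation*}
E(v_1)+E(v_2)=2E\!\left(\frac{v_1+v_2}{2}\right)+\tfrac{1}{2}E(v_1-v_2).
\end{equation*}
Since $\tfrac{v_1+v_2}{2}\in V_{M_s,M_g}$ we have $E(\tfrac{v_1+v_2}{2})\ge E^*$, and rearranging forces $E(w)=0$ for $w:=v_1-v_2$, which vanishes on $M_s\cup M_g$.

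\textbf{Propagation via convolution.} From $E(w)=0$ and $k\ge 0$ we get $k(x,y)(w(x)-w(y))^2=0$ for $\mu\otimes\mu$-a.e.~$(x,y)$, equivalently $\int \nk(x,y)(w(x)-w(y))^2\,d\mu(y)=0$ for $\mu$-a.e.~$x$. I claim by induction on $i$ that the same holds with $\nk$ replaced by $\nk^{(i)}$. Assume it for $i$; using $(w(x)-w(y))^2\le 2(w(x)-w(z))^2+2(w(z)-w(y))^2$, the convolution formula $\nk^{(i+1)}(x,y)=\int \nk^{(i)}(x,z)\nk(z,y)\,d\mu(z)$, Fubini, and the normalization $\int \nk(z,y)\,d\mu(y)=1$, the integral $\int \nk^{(i+1)}(x,y)(w(x)-w(y))^2\,d\mu(y)$ is bounded by a sum of two terms, each vanishing for $\mu$-a.e.~$x$ by the inductive hypothesis and the base case, respectively. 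Applying this with $i=m$ gives, for $\mu$-a.e.~$x$,
\begin{equation*}
\int_{M_s\cup M_g}\nk^{(m)}(x,y)(w(x)-w(y))^2\,d\mu(y)=0.
\end{equation*}
By the hypothesis $\M=C_\alpha^m(\M_s\cup\M_g)$, the measure $\nk^{(m)}(x,\cdot)\,d\mu$ assigns mass at least $\alpha>0$ to $M_s\cup M_g$; in particular a positive-$\mu$-measure set of such $y$ has $\nk^{(m)}(x,y)>0$, and on that set $w(y)=0$. The displayed identity therefore forces $w(x)=0$ for $\mu$-a.e.~$x$, so $v_1=v_2$ in $L^2(\mu)$.

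\textbf{Main obstacle.} Existence is standard once the functional-analytic setup is in place, and the parallelogram identity is algebraic. The delicate part is the inductive propagation: the equality $w(x)=w(y)$ holds only almost everywhere on the set $\{k>0\}$, and naively iterating would require choosing pointwise representatives of $w$ and tracking a growing family of null sets through $m$ steps. The $L^2$-variance bound sketched above sidesteps this by never leaving $L^2$ and relying only on Fubini and the kernel normalization, which is why the normalized kernel $\nk$ (rather than $k$) and the $C_\alpha^m$ connectivity condition are exactly the right hypotheses.
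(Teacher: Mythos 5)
Your proof is correct, but it takes a genuinely different route from the paper's. You prove existence by the direct method (weak compactness of the bounded, convex, strongly-closed admissible set in $L^2(\mu)$ plus weak lower semicontinuity of the convex quadratic energy) and uniqueness by the parallelogram identity together with an $L^2$ propagation argument that pushes the vanishing of $E(w)$ through the iterated normalized kernel $\nk^{(m)}$ to conclude $w=0$ $\mu$-a.e.\ from the $C_\alpha^m$ connectivity hypothesis. The paper instead introduces the affine operator $T_*v = A_*v + b_*$ (Definition~\ref{defn:affine}), shows that $A_*^m$ is a contraction with $\|A_*^m\|_\infty \le 1-\alpha$ (Lemmas~\ref{lem:bound_convolve}--\ref{lem:contraction}), obtains a unique fixed point of $T_*$ via the Banach fixed-point argument (Lemmas~\ref{lem:fixed_point} and~\ref{lemma:existence_of_fixpoint}), and then separately verifies that this fixed point is the energy minimizer (Lemmas~\ref{lemma:proof_that_vstar_is_minimizer}--\ref{lemma:expression_for_bilinear_energy_form}). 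Both arguments consume the same hypothesis, but differently: for you, $C_\alpha^m$ guarantees that the measure $\nk^{(m)}(x,\cdot)\,d\mu$ puts mass at least $\alpha$ on $M_s\cup M_g$, forcing a zero-variance function to vanish; for the paper, it supplies the contraction constant. The paper's choice is not gratuitous: the contraction structure and Lemma~\ref{lem:close_to_fixed_point} are reused verbatim in the finite-sample convergence proof of Theorem~\ref{thm:convergence}, whereas your variational argument, while shorter for the proposition in isolation, would require a separate stability analysis there. One small spot worth tightening: in your final step, rather than appealing to ``a positive-measure set of $y$ with $\nk^{(m)}(x,y)>0$,'' it is cleaner to observe that the vanishing integral equals $w(x)^2 \int_{M_s\cup M_g}\nk^{(m)}(x,y)\,d\mu(y) \ge \alpha\,w(x)^2$, which forces $w(x)=0$ directly.
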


Given a unique energy-minimizing voltage, we can then define the effective resistance between $M_s$ and $M_g$ as the inverse of the induced current between them by $v^*$. 

\begin{definition}[Effective resistance on metric space]\label{prop:ER_metric_space}
Let $M_s, M_g$ be non-empty measurable disjoint subsets of $M$ such that they have a unique energy minimizing voltage, $v^*$, with respect to measure $\mu$ and kernel $k$. Then their effective resistance is defined as $R^{\mu}(M_s, M_g) = 1/J_{tot}$, where 
\begin{equation*}
    J_{tot} = \int_{M_s}\int_{M} k(x, y)(v^*(x) - v^*(y)) d\mu(x) d\mu(y).
\end{equation*}
\end{definition}

%%%%%%%%%%%%%%%%%%%%%%%%%%%
%%%%%%%% Analysis %%%%%%%%%
%%%%%%%%%%%%%%%%%%%%%%%%%%%
\section{Convergence towards the effective resistance} \label{section:convergence_of_ER}

In this section, we show how the effective resistance between two subsets of metric space $M$, $M_s$, and $M_g$, can be approximated by computing the effective resistance of a graph induced by points sampled from the probability measure, $\mu$, over $M$. To this end, let $X_n = \{x_1, x_2, \dots, x_n\} \sim \mu$ be a set of points sampled from data distribution $\mu$ over $M$. Using one of the kernels defined above, a weighted graph can be constructed on the samples by assigning weights $W_{ij} \propto k(\x_i, \x_j)$ between each pair of points $\x_i, \x_j \in X_n$. 

\begin{definition}[Metric resistor graph]\label{def:metric_resistor_graph}
Let $X_n$ a sample as defined above and $k: M \times M \to [0, 1]$ be a kernel similarity function. Then the \textbf{metric resistor graph}, $G_n = (X_{n}, W)$, is the weighted graph with edge weights $W_{ij} = \frac{k(x_i, x_j)}{n^2}$. 
\end{definition}

%Here $\gamma_{ij}$ is a scaling factor that depends on the density in the neighborhood of the nodes $x_i, x_j$. We take a closer look at this scaling in the next section.

\noindent In the next section, we take a closer look at the scaling factor $1/n^2$.

\subsection{Scaling}\label{section:scaling}
Our goal is to construct a definition of the effective resistance that converges towards a non-trivial solution as the number of sampled points, $n$, goes towards infinity. To achieve this, it is necessary that the edge weights $W_{ij}$ scale appropriately with the number of samples. It is natural to demand that the physical properties of the graph, embodied by the resistance, current, and voltage, should remain relatively stable as $n$ increases. Thus, it is crucial to understand how the edge resistances should scale with the number of points sampled.

\paragraph{Intuition}
To this end, consider two small regions $T$ and $T'$ such that $k(x,x')>0$ for all $x\in T$ and $x'\in T'$.  For simplicity, let $k(x,x')$ be constant, which means each edge has equal resistance $R = 1/k(x,x')$.  We aim to keep the resistance between these two regions constant as the number of points changes.  For a fixed $X_n$, on average there are $m$ points $x\in S_n = \curlyb{x\in X_n : x\in T}$ and $m$ points $x'\in S_n' = \curlyb{x\in X_n : x\in T'}$.  This results in $m^2$ edges between $T$ and $T'$. This means the total resistance between these regions is $R/m^2$, given that these edges are connected in parallel.

The issue here is that, once we move to a denser sample $X_{2n}$, there will be, on average, $2m$ points in $S_{2n}$ and $S_{2n}'$ respectively.  This will create a net resistance $\frac{1}{4} R/m^2$, which means the resistance between these physical regions $T, T'$ decreases and will go to 0 as $n$ goes to infinity.
We illustrate this construction in Figure \ref{fig:resistor_split}.

\begin{figure}[htb!]
\centering
\includegraphics[width=0.9\textwidth]{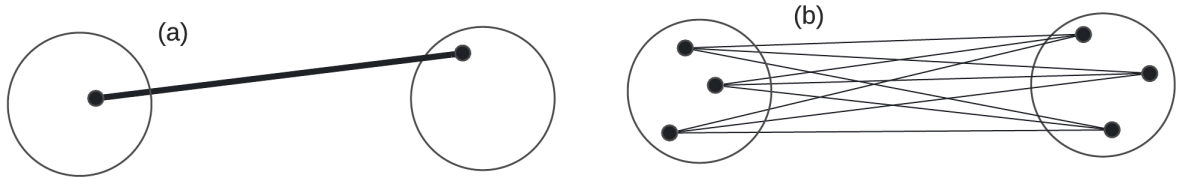}
\caption{Example of resistance scaling.  {\bf (a)} Number of edges connecting $T$ and $T'$ for $m=1$ point sampled from each region.  {\bf (b)} Number of edges connecting $T$ and $T'$ for $m=3$ points sampled from each region. Notice that the number of edges between these regions is $m^2$. } \label{fig:resistor_split}
\end{figure} 

\paragraph{Point-wise scaling} In order to overcome this issue of decreasing net resistance, we introduce a point-wise scaling $\gamma_{ij} = 1/n^2$ for all node pairs $i,j$, to compensate for the $m^2$ factor. Here $m = p n$ where $p$ is the probability that a sample drawn from $X_n$ falls in region $T$. In other words, with the point-wise scaling, we have the net resistance $\frac{1}{4}R/p p^\prime$, where $p,p^\prime$ are constant with respect to $n$.

%\paragraph{Point-wise scaling}
%The simplest case is if the underlying distribution $\mu$ is uniform over the domain $M$. In this scenario, we can assign equal density $p_i = 1/n$ to each node, which leads to the uniform scaling $\gamma_{ij} = p_i p_j = 1/n^2$ for all node pairs $i,j$.

%\paragraph{Region-wise scaling}
%When the underlying distribution $\mu$ is non-uniform over the domain $M$, we must be more careful with the scaling. Let $p_i, p_j$ be the local density of the two points $x_i, x_j \sim \mu$. The scaling we propose is $\gamma_{ij} = p_i p_j$, which is different for all node pairs.

Having defined the proper scaling, we now define the random object, $v_n^*$, which is the energy-minimizing voltage induced by a metric graph constructed from an i.i.d sample of $n$ points from $M$. 

\begin{definition}[Energy minimizing voltage over metric graph]\label{defn:finite_sample_voltage}
Let $M_s, M_g$ be disjoint measurable sets in $M$. For $X_n \sim \mu^n$, let $v_{X_n}: X_n \to [0, 1]$ be defined as the energy minimizing voltage over the metric graph $G_n$ where the weights $W_{ij}$ are defined by $W_{ij} = \frac{1}{n^2}k(x_i, x_j)$. Then $v_n^*: M \to [0, 1]$ denotes the function $$v_n^*(x) = \begin{cases} 1 & x \in M_s \\ 0 & x \in M_g \\ \frac{\sum_{x_i \in X_n} k(x, x_i)v_{X_n}(x_i)}{\sum_{x_i \in X_n} k(x, x_i)} & x \in M \setminus (M_s \cup M_g)\end{cases}.$$ Note that $v_n^*$ represents a random variable over functions $M \to [0, 1]$ with randomness induced by the randomness of $X_n$. 
\end{definition}

We similarly define the region-based ER, $R_n(M_s, M_g)$ as follows.

\begin{definition}[Region-based effective resistance]\label{defn:finite_sample_ER}
Let $M_s, M_g$ be disjoint measurable sets in $M$. For $X_n \sim \mu^n$, let $v^*_{n}$ be defined as the energy minimizing voltage over the metric graph, $G_n$ where the weights $W_{ij}$ are defined by $W_{ij} = \frac{1}{n^2}k(x_i, x_j)$. We then define the region-based ER $R_n(M_s, M_g)=1/J_{tot}$ where $J_{tot}$ is as defined in Proposition \ref{prop:dirichlet_formulation_ER_sets} with the voltage $v^*_{n}$. Note that $R_n(M_s, M_g)$ corresponds to a random variable, denoted $R_{X_n}$, induced by the randomness of $X_n$. 
\end{definition}

\subsection{Convergence analysis}
We now show that the finite sample energy minimizing voltage, $v_n^*$, and the finite sample region-based ER, $R_n(M_s, M_g)$, converge towards the limit objects $v^*$ and $R^\mu(M_s, M_g)$.

\begin{theorem}\label{thm:convergence}
Let $M, k,$ and $\mu$ satisfy that there exists $m > 0$ and $\alpha > 0$ such that $\M = C_\alpha^m(\M_g \cup \M_s)$. Let $M_s, M_g$ be disjoint measurable subsets of $M$, and let $v_n^*, v^*, R_n(M_s M_g),$ and $R^{\mu}(M_s, M_g)$ be as defined in Definitions \ref{defn:finite_sample_voltage}, \ref{defn:energy_minimizing_voltage}, \ref{defn:finite_sample_ER}, and \ref{prop:ER_metric_space}. Then the following hold: 
\begin{enumerate}
	\item For any $x \in M$, the sequence $v^*_1(x), v^*_2(x), \dots$ converges to $v^*(x)$ in probability.
	\item The sequence $R_1(M_s, M_g), R_2(M_s, M_g), \dots $ converges to $R^\mu(M_s, M_g)$ in probability. 
\end{enumerate}
\end{theorem}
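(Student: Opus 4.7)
The plan is to reduce both claims to a single intermediate result: uniform convergence in probability of the sample-restricted voltage, $\sup_{x_i \in X_n} |v_{X_n}(x_i) - v^*(x_i)| \to 0$ as $n \to \infty$. Given this, Part 1 follows by applying standard empirical-mean concentration to the Nadaraya-Watson formula defining $v_n^*(x)$: both $\tfrac{1}{n}\sum_i k(x,x_i)v_{X_n}(x_i)$ and $\tfrac{1}{n}\sum_i k(x,x_i)$ concentrate around $\int k(x,y)v^*(y)\,d\mu(y)$ and $\int k(x,y)\,d\mu(y)$ respectively, so the ratio converges to $(Tv^*)(x)$ where $Tv(x) := \int_M \nk(x,y)v(y)\,d\mu(y)$. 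The Euler-Lagrange equation for the energy minimizer of Definition \ref{defn:energy_minimizing_voltage} gives $Tv^* = v^*$ on the interior $M \setminus (M_s \cup M_g)$, while on the boundary both sides equal $1$ or $0$ by definition; hence $v_n^*(x) \to v^*(x)$ in probability for every $x \in M$. For Part 2, rewrite
$$J_n \;=\; \frac{1}{n^2}\sum_{x_i \in X_n \cap M_s}\sum_{x_j \in X_n} k(x_i, x_j)\bigl(v_{X_n}(x_i) - v_{X_n}(x_j)\bigr)$$
as a double empirical mean; substituting the sample-voltage approximation and applying the same concentration argument yields $J_n \to J_{tot}$ in probability, so $R_n = 1/J_n \to R^\mu = 1/J_{tot}$ by the continuous mapping theorem, provided $J_{tot} > 0$, which holds because $v^*$ is non-constant under the connectivity assumption $M = C_\alpha^m(M_s \cup M_g)$.

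To prove the intermediate result, I would characterize $v^*$ and $v_{X_n}$ as the unique fixed points of the averaging operators $T$ and its empirical version $T_n v(x_i) := \sum_j \widehat{W}_{ij} v(x_j)$, where $\widehat{W}_{ij} = k(x_i, x_j)/\sum_\ell k(x_i, x_\ell)$, both subject to the Dirichlet-type constraints $v \equiv 1$ on $M_s$ and $v \equiv 0$ on $M_g$. The hypothesis $M = C_\alpha^m(M_s \cup M_g)$ says exactly that $m$ steps of the Markov chain with kernel $\nk$ hit the source-or-sink set with probability at least $\alpha$ from any starting point, which translates into the operator statement that $T^m$, restricted to bounded functions vanishing on the boundary, is a sup-norm contraction with constant at most $1 - \alpha$. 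A uniform empirical law of large numbers shows $\|T_n v^* - Tv^*\|_\infty \to 0$ in probability along the sample, and since $T_n^m$ inherits (with high probability) a contraction constant strictly less than one, a standard Banach fixed-point perturbation bound yields
$$\sup_{x_i \in X_n}\bigl|v_{X_n}(x_i) - v^*(x_i)\bigr| \;\leq\; C_{\alpha, m}\,\sup_{x_i \in X_n}\bigl|(T_n v^*)(x_i) - (Tv^*)(x_i)\bigr|,$$
whose right-hand side vanishes in probability as $n \to \infty$ by the uniform LLN applied to the bounded function $v^*$.

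The main technical obstacle is obtaining a sup-norm (rather than pointwise) concentration of $T_n$ to $T$ and then iterating it $m$ times without the error compounding. Boundedness of $k$ on the compact metric space $M$ reduces the one-step version to a finite-entropy problem, handled by an $\varepsilon$-covering of $M$ combined with Hoeffding's inequality at each center, but the $m$-fold case requires an inductive argument in which the contraction factor $1 - \alpha$ absorbs the accumulated error. A secondary, routine point is to verify that the sample almost surely eventually intersects both $M_s$ and $M_g$, so that the Dirichlet constraints are actually enforced on the sample graph; this follows because both sets must carry positive $\mu$-measure under the $C_\alpha^m$ hypothesis together with $\alpha > 0$.
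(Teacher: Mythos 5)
Your overall plan closely mirrors the paper's proof: characterize $v^*$ and the finite-sample voltage $v_{X_n}$ as fixed points of nice operators $T_*$ and $T_n$, show that the $m$-th power is a contraction, prove concentration of $T_n v^*$ to $T_* v^*$ on the sample, and then invoke a Banach fixed-point perturbation bound (the paper's Lemma \ref{lem:close_to_fixed_point}). Your observations that $T_*^m$ has operator norm at most $1-\alpha$ and that the Nadaraya--Watson extension of $v_{X_n}$ converges pointwise once the on-sample convergence is established both match the paper's Lemmas \ref{lem:contraction} and the final step of the Theorem~\ref{thm:convergence} proof.

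The genuine gap is the sentence ``$T_n^m$ inherits (with high probability) a contraction constant strictly less than one.'' This is the single hardest step in the whole argument, and you treat it as if it followed from operator-norm proximity of $T_n$ to $T_*$. It does not, for two reasons. First, $T_n$ acts on functions $S_n \to \reals$ while $T_*$ acts on functions $M \to \reals$; there is no meaningful statement ``$\|T_n - T_*\|_{\mathrm{op}}$ is small uniformly over all test functions,'' since $(T_* v)(x_i)$ integrates $v$ over all of $M$ while $(T_n v)(x_i)$ only sees $v$ on the sample, and an arbitrary $v: S_n \to [0,1]$ need not be the restriction of anything regular. Second, the contraction of $T_*^m$ is not a smoothness property of the kernel alone: it comes from the indicator $\ind(y \in M \setminus(M_s \cup M_g))$, i.e.\ the probability of hitting the boundary set within $m$ steps. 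To transfer that to the empirical operator you need a structural argument showing the empirical random walk also hits the source/sink region with probability bounded away from $0$ from every sample point. The paper does this via a ``back-step'' construction (Lemma~\ref{lem:back_step}) producing a chain of intermediate sets $P_0 \supseteq \dots \supseteq P_m = M_s \cup M_g$ (Lemma~\ref{lem:layout}), and then applies Hoeffding plus a union bound to show that each empirical one-step transition $P_{i-1} \to P_i$ retains a constant fraction of the theoretical probability (Lemma~\ref{lem:finite_contraction}). Your proposal never produces these intermediate sets, and the suggestion that an ``inductive argument in which the contraction factor $1-\alpha$ absorbs the accumulated error'' will suffice is circular: the contraction of $T_n^m$ is precisely what is being established. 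Without something like the Layout Lemma the argument has a hole.

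Two smaller notes. Your plan to get sup-norm concentration of $T_n$ to $T_*$ over all of $M$ via an $\varepsilon$-covering is more than required: the paper needs only concentration at the $n$ sample points, which follows from Lemma~\ref{lem:quotient_hoeffding} applied pointwise plus a union bound (Lemma~\ref{lemma:the_stuff_converges_yay}), a strictly cheaper statement. On the positive side, you spell out a clean route for Part~2 (substitute the on-sample voltage estimate into the double empirical sum defining $J_n$, concentrate, and use continuous mapping with $J_{tot}>0$); the paper's proof of Theorem~\ref{thm:convergence} addresses only Part~1 explicitly and leaves the resistance statement to the reader, so your treatment there is actually more complete than what appears in the appendix.
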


A proof can be found in Appendix B. 

\section{Properties of the effective resistance between sets}\label{section:properties_of_regionbasedER}
We have established that the finite sample region-based ER $R_n(M_s, M_g)$ from Definition \ref{defn:finite_sample_ER} converges to the ER on a metric space $R^\mu(M_s, M_g)$, defined in Definition \ref{prop:ER_metric_space}. In this section, we establish that $R_n(M_s, M_g)$ is a distance metric. In particular, we prove that it satisfies the triangle inequality.

Consider a metric space $M$ and a sample $X_n\sim \mu^n(M)$. For two disjoint measurable subsets $M_s, M_g \subset M$ we can denote the corresponding subsets of $X_n$ as $X_s = \{x\in X_n : x\in M_s\}$ and $X_g = \{x\in X_n : x\in M_g\}$. The region-based ER $R_n(X_s, X_g)$ over a finite sample can be thought of as the ER between sets, namely $R^s(X_s, X_g)$ as defined in Section \ref{section:ER_sets}. For the analysis in this section, this is the interpretation we will take.

In the following, we will introduce the subscript $G$ on $R_G^{s}(X_a, X_b)$ to indicate the graph over which the ER is calculated. From Lemma \ref{lemma:relating_r_G_with_rG_reduced} it follows that $R_G^{s}(X_a, X_b) = R_{G_{ab}}^{s}(a, b) = (e_1 - e_2)^\top L_{G_{ab}}^\dagger (e_1 - e_2)$ where $L_{G_{ab}}$ is the Laplacian on the reduced graph $G_{ab}$ defined in Definition \ref{def:reduced_graph}. Symmetry and positive semi-definiteness of $R_G^{s}(X_a, X_b)$ follows therefore from the classical result on the reduced graph $G_{ab}$
\cite{jorgensen2008operator,ghosh2008minimizing,klein1993resistance}. 

Meanwhile, the triangle inequality is established by Theorem \ref{thm:triangle_ineq_for_ER_sets}. 

\begin{theorem}(Triangle inequality for effective resistance between sets)\label{thm:triangle_ineq_for_ER_sets}
Consider a graph $G = (X, W)$ and the non-empty disjoint subsets $X_a, X_b, X_z \in X$. Let $R^s(X_p, X_q)$ be the ER between sets $X_p, X_q$ for $p,q \in \{a,b,z\}$. We then have
\begin{equation*}
    R^s(X_a, X_b) \leq R^s(X_a, X_z) + R^s(X_z, X_b)
\end{equation*}
\end{theorem}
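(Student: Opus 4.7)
The strategy is to reduce the set-based triangle inequality to the classical triangle inequality for effective resistance between single nodes, by recasting each of the three set-based quantities as a node-to-node resistance in a reduced graph and then working in a common reduced graph that contains all three contractions at once. By the Lemma cited just before the theorem, $R^s(X_p,X_q) = R_{G_{pq}}(p,q) = (e_p - e_q)^\top L_{G_{pq}}^\dagger (e_p - e_q)$, where $G_{pq}$ is obtained from $G$ by contracting each of $X_p$ and $X_q$ to a single super-node. The difficulty is that the three quantities in the claim live in three different reduced graphs $G_{ab}, G_{az}, G_{zb}$, so the single-node triangle inequality cannot be invoked directly.

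To place them on a common footing, I would introduce the fully reduced graph $\tilde G$ obtained by contracting all three sets to super-nodes $a,z,b$ simultaneously. Inside $\tilde G$ these are ordinary vertices, so the classical triangle inequality for node-to-node effective resistance applies and gives $R_{\tilde G}(a,b) \leq R_{\tilde G}(a,z) + R_{\tilde G}(z,b)$; the standard proof of this uses the commute-time identity $C(u,v) = 2m\,R(u,v)$ together with the random-walk detour bound $C(a,b)\leq C(a,z)+C(z,b)$. Rayleigh's monotonicity principle then bridges $\tilde G$ to each $G_{pq}$: since $\tilde G$ differs from $G_{pq}$ only by the extra contraction of the third set (equivalent to adding zero-resistance edges), effective resistance between any remaining pair of super-nodes can only decrease, yielding $R_{\tilde G}(a,z) \leq R^s(X_a,X_z)$ and $R_{\tilde G}(z,b) \leq R^s(X_z,X_b)$. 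Substituting into the $\tilde G$ triangle inequality produces the right-hand side of the target inequality in the form $R_{\tilde G}(a,b) \leq R^s(X_a,X_z) + R^s(X_z,X_b)$.

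The main obstacle is matching the left-hand side. Rayleigh's principle also gives $R_{\tilde G}(a,b) \leq R^s(X_a,X_b)$, which is the wrong direction for closing the argument. The technical crux of the proof is therefore to recover an inequality in the correct direction between $R_{\tilde G}(a,b)$ and $R^s(X_a,X_b)$. I would attack this by analyzing the Dirichlet energy minimizer $v^*_{ab}$ realizing $R^s(X_a,X_b)$, examining its restriction to $X_z$, and using a current-balance argument across the boundary of $X_z$ to show that the additional contraction of $X_z$ is compatible with the $X_a$-to-$X_b$ effective resistance. Establishing this ``lossless $X_z$-contraction'' step — or an equivalent workaround that absorbs the residual gap into the right-hand side — is where I expect the bulk of the proof's technical work to lie.
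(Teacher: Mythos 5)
Your instinct about the directional issue with Rayleigh monotonicity is exactly right, but it is worth contrasting with what the paper actually does. The paper's proof works entirely inside the single reduced graph $G_{abz}$ in which \emph{all three} sets are contracted, and it invokes Lemma \ref{lemma:relating_r_G_with_rG_reduced}, which asserts the \emph{equality} $R^s_G(X_p,X_q) = R_{G_{abz}}(p,q)$ for every pair $p,q\in\{a,b,z\}$. With all three quantities identified as node-to-node resistances in the same reduced graph, the claim collapses to the classical point-resistance triangle inequality in $G_{abz}$, which the paper then rederives by a direct voltage-drop argument. No Rayleigh monotonicity is invoked, and the paper does not attempt the ``lossless $X_z$-contraction'' step you flag --- it takes it as an identity via Lemma \ref{lemma:relating_r_G_with_rG_reduced}.

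Your hesitation, however, surfaces a genuine problem rather than a missing lemma. If $R^s(X_a,X_b)$ is the standard effective resistance between sets --- contract only $X_a$ and $X_b$ and leave $X_z$ as ordinary interior nodes, which is what Definition \ref{def:ER_sets_current_flow_def} and Proposition \ref{prop:dirichlet_formulation_ER_sets} actually say --- then $R^s_G(X_a,X_b) = R_{G_{ab}}(a,b)$, and further contracting $X_z$ only adds conductance, so in general $R_{G_{abz}}(a,b) < R^s_G(X_a,X_b)$: the inequality runs opposite to what is needed, exactly as you observed. In fact the theorem fails under that reading: take the $4$-vertex unit-weight path $x_1$--$x_2$--$x_3$--$x_4$ with $X_a=\{x_1\}$, $X_b=\{x_4\}$, $X_z=\{x_2,x_3\}$. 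Then $R^s(X_a,X_b)=3$ while $R^s(X_a,X_z)=R^s(X_z,X_b)=1$, so $3\not\leq 2$. The statement becomes true only once $R^s$ is interpreted for all three pairs as the node resistance in the fully contracted $G_{abz}$ (equivalently, once the interior constraint $(Lv)_i=0$ is imposed only on $X_c = X\setminus(X_a\cup X_b\cup X_z)$ rather than on each node of $X_z$ individually, which is the form used in Proposition \ref{prop:poisson_formulation_ER_sets} and Theorem \ref{thm:def_er_between_sets}). So the residual gap you propose to attack with a current-balance argument cannot be closed under the definitions you are working with; the resolution is not a sharper estimate but a reinterpretation of $R^s$, after which the reduced-graph triangle inequality you already derived finishes the proof immediately.
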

\begin{proof}
Appendix \ref{appendix:proof_triangle_ineq_er_sets}.
\end{proof}

Having established the triangle inequality along with symmetry and positive semi-definiteness, it follows that the effective resistance between disjoint sets is a distance metric.

%From Theorem 2 in \cite{song2019extension}, it follows that the effective resistance between sets can be written as
%\begin{equation*}
%    R_G^{s}(X_a, X_b) = (e_1 - e_2)^\top (P_{ab}^\top L P_{ab})^\dagger (e_1 - e_2)
%\end{equation*}
%with $P_{abz} = \begin{bmatrix} \mathbbm{1}_n(X_a) & \mathbbm{1}_n(X_b) & I_c \end{bmatrix} \in \bbR^{n\times (c+2)}$, where $\mathbbm{1}_n(X_p)\in \bbR^n$ is the indicator vector
%\begin{equation}
%     \mathbbm{1}_n(X_p) = \begin{cases} 1, & \forall i \in X_p \\
%    0, & \text{otherwise}
%    \end{cases}
%\end{equation}
%and $I_c \in \bbR^{n\times c}$ is the matrix collecting all basis vectors for $i \in X_c$. Here $e_1, e_2 \in \bbR^{c+2}$ are basis vectors with 1 at $i=1, 2$ respectively. We have introduced the subscript $G$ on $R_G^{s}(X_a, X_b)$ to indicate the graph over which the ERS is calculated. 

%Symmetry and positive definiteness are immediate from

%%%%%%%%%%%%%%%%%%%%%%%%%%%%%%%%%%%
%%%%%%%%% Computational %%%%%%%%%%%
%%%%%%%%%%%%%%%%%%%%%%%%%%%%%%%%%%%
\section{Computational considerations}\label{section:computational_complexity_considerations}
In practice, calculating the effective resistance in the large graph limit is not computationally viable. This is because the computational cost directly increases with $n$, the number of sampled points. For large values of $n$, it can be relatively expensive to compute the effective resistance -- especially if we desire to do so for many pairs of regions $M_s, M_g$. To control the computational complexity of the calculation, we suggest building the graph on a partitioning of the data which we call an $\alpha$-cover $\CC$ to remove the dependency on $n$. Furthermore, in order to incorporate information about the density, we combine the $\alpha$-cover with a suitable scaling of the graph weights.

\subsection{Alpha-cover}\label{section:epsilon_cover_and_region_wise_scaling} Let $(M,d)$ be a metric space and conisder an $\alpha$-cover $\CC$ as defined in Definition \ref{def:epsilonCover}. The $\alpha$-cover is closely related to the concept of doubling dimension $\textsf{ddim}(M)$ as defined in Definition \ref{def:DoublingDimension}, which is a measure of the intrinsic dimension of $M$. In fact, if $\CB(\eta)$ is the smallest ball such that $M \subseteq \CB(\eta)$ and $\alpha= 2^{-\ell}\eta$, then it follows that $|\CC| = 2^{\ell \textsf{ddim}(M)}$. This means that the size of the $\alpha$-cover depends only on the resolution, span of the data, and the intrinsic dimension of $M$, while it is independent of $n$.

%Furthermore, we note that each point $x_i \in \CC$ has a corresponding Voroni cell $\CV_i = \{x\in M: d(x_i, x) \leq d(x_j, x)\, \forall x_i, x_j \in \CC\}$. We will use this fact later.
 
\begin{definition}[$\alpha$-cover]
Let $(M,d)$ be a metric space. A subset $\CC \subseteq M$ is an $\alpha$-cover of $M$ if it satisfies the following two conditions. 
\begin{enumerate}
    \item (Packing property): All points $y_1,y_2 \in \CC$ are such that $d(y_1, y_2) \geq \alpha$ 
    \item (Covering property): For all points $x \in M$ there exists $y \in \CC$ such that $d(x,y) \leq \alpha$
\end{enumerate}
\label{def:epsilonCover}
\end{definition}

To construct the $\alpha$-cover, a popular algorithm is the cover-tree algorithm, see e.g.  Beygelzimer et al.~\cite{beygelzimer2006cover} and Oslandsbotn et al.~\cite{oslandsbotn2022Streamrak} (which is the algorithm we use in this paper). 

\paragraph{Region-wise scaling} 
In order to incorporate information about the underlying distribution $\mu(M)$, we combine the $\alpha$-cover with a region-wise scaling as an alternative to the scaling suggested in Section \ref{section:scaling}. This is important because, as mentioned in the introduction, one of the desirable properties of ER is that it can capture cluster structures in graphs, which in turn means information about the underlying data distribution $\mu(M)$. Because this information is lost with the $\alpha$-cover, due to the uniform partitioning, a scaling that captures the density is necessary.

To formally define this scaling, we first introduce the concept of a Voronoi cell
\begin{definition}[Voronoi cell]\label{def:voronoi_cell}
Let $\CC$ be an $\alpha$-cover as defined in Definition \ref{def:epsilonCover}. We define the Voronoi cell associated with point $x_i \in \CC$ as
    \begin{equation*}
         \CV_i = \{x\in M: d(x_i, x) \leq d(x_j, x)\, \forall x_i, x_j \in \CC\}
    \end{equation*}
\end{definition}

We can then define the scaling $\gamma_i$ for a given sample $X_n \sim \mu^n(M)$.
\begin{definition}[Region-wise scaling]\label{def:probability_scaling} Let $X_n$ be a sample as defined above and $\CC$ the associated $\alpha$-cover. With each center, $x_i\in \CC$, we associate a scaling constant
\begin{equation*}
    \gamma_i = \frac{|\{x \in X_n: x\in \CV_i\}|}{n}
\end{equation*}
\end{definition}

The interpretation of $\gamma_i$ is that it is the empirical local probability density associated with each Voronoi cell in the $\alpha$-cover. It can easily be estimated by counting the number of samples $x\in X_n: x\in \CV_i$, divided by the total number of samples $n$.

%To estimate $\gamma_i$ consider $n$ points sampled from $\mu(M)$.

%Let $m_i$ be the number of samples $x\in M: x\in \CV_i$. We then say the empirical probability associated with the Voronoi cell of $x_i \in \CC$ is $p_i = m_i/n$. 

%Another advantage is that the estimate of $\mu(M)$ can be refined by increasing the number of samples used to estimate the local probabilities $p_i$, without the need to increase the size of the graph. 

\begin{remark}
We note that for the $\alpha$-cover approach, we can not use the point-wise scaling introduced in Section \ref{section:scaling}. To see this, consider the discussion in Section \ref{section:scaling}, which showed that the net resistance between two regions $T$ and $T^\prime$ is $\frac{1}{4}R/m^2$. For data sampled from $\mu$, it follows that $m \propto n$. However, for the $\alpha$-cover, we have instead $m \propto |\CC|$, and the size of the epsilon cover $|\CC|$ depends only on the doubling dimension of the data and not on $n$.
\end{remark}

%\subsection{Effective resistance on $\alpha$-cover}
%Calculating the ER over an $\alpha$-cover $\CC$, for a given sample $X_n \sim \mu^n(M)$, is straightforward. Use the sample $X_n$ to estimate the local probabilities $p_i,p_j$ and construct the graph  $G_{\alpha} =(\CC$, W) with weights $W_{ij}=p_ip_j k(x_i, x_j)$ for $x_i, x_j \in Y_{\alpha,n}$. The region-based ER is then $R^{m}(X_{\alpha,s}, X_{\alpha, g})$ where $X_{\alpha,s} = \{x\in \CC: x\in M_s\}$ and $X_{\alpha,g} = \{x\in \CC: x\in M_g\}$.

\subsection{Effective resistance on $\alpha$-cover}

Our idea is to construct a graph on the $\alpha$-cover, instead of directly on $X_n$, and then define the ER on this graph instead. We call this new graph the Cover resistor graph.

\begin{definition}[Cover resistor graph]\label{def:cover_resistor_graph}
Let $\calC$ be an $\alpha$-cover, and let $X_n$ a sample as defined above and $k: M \times M \to [0, 1]$ be a kernel similarity function. Let $\gamma_i, \gamma_j$ be the region-wise scaling weights defined in Definition \ref{def:probability_scaling}. Then the \textbf{cover resistor graph}, $G_n^\calC = (\calC, W)$, is the weighted graph with edge weights $W_{ij} = \gamma_i\gamma_jk(x_i, x_j)$. 
\end{definition}

The cover resistor graph essentially uses the nodes of the cover as vertices and constructs weights based on the relative \textit{weights} of each cover node.

We now show that computing effective resistance over the cover resistor graphs converges to the same limit object that using the sampled metric graphs in the previous section does. To do so, we begin by defining analogs of $v^*_n$ and $R_n(M_s, M_g)$.

\begin{definition}[Energy minimizing voltage over $\alpha$-cover]\label{def:energy_min_voltage_alpha_cover}
Let $\calC$ be an $\alpha$-cover, and $G_n^\calC$ be its cover resistor graph constructed from sample $X_n$, source and sink regions $M_s$ and $M_g$, and kernel function $k$. Let $v_{X_n, \calC}: \calC \to [0, 1]$ denote the energy minimzing voltage function over $G_n^\calC$. Then we let $v_n^\calC: M \to [0, 1]$ be defined as $$v_n^\calC(x) = \begin{cases} 1 & x \in M_s \\ 0 & x \in M_g \\ \frac{\sum_{c_i \in \calC} k(x, c_i)v_{X_n}(x_i)}{\sum_{c_i \in \calC} k(x, c_i)} & x \in M \setminus (M_s \cup M_g)\end{cases}.$$ Note that $v_n^\calC$ represents a random variable over functions $M \to [0, 1]$ with randomness induced by the randomness of $X_n$. 
\end{definition}

\begin{definition}[Region-based ER on $\alpha$-cover]\label{defn:finite_sample_ER_alpha}
Let $\calC$ be an $\alpha$-cover, let $v^{\CC}_n$ be the associated energy minimizing voltage and let $G^{\CC}_n$ be the cover resistor graph from Definition \ref{def:cover_resistor_graph} constructed from sample $X_n \sim \mu^n$. Consider the source and sink regions $M_s$ and $M_g$. We define the region-based ER on the $\alpha$-cover as $R_n^\calC(M_s, M_g) = 1/J_{tot}$ where $J_{tot}$ is as defined in Proposition \ref{prop:dirichlet_formulation_ER_sets} but now with respect the sets $\calC \cap M_s, \calC \cap M_g$, the graph $G^{\CC}_n$ and the voltage $v^{\CC}_n$. Note that $R_n^\calC(M_s, M_g)$ denotes the random variable, $R_{X_n}^\calC$, induced by the randomness of $X_n$. 
\end{definition}

We now show that similarly to $v_n^*$ and $R_n(M_s, M_g)$, for sufficiently small values of $\alpha$, $v_n^\calC$ and $R_n^\calC(M_s, M_g)$ converge to the same quantities. 

\begin{theorem}\label{thm:alpha_cover_converge}
Let $M$ be a metric space, $k$ a kernal similarity function, and $\mu$ be a measure over $M$. Let $M_s$ and $M_g$ be two disjoint measurable subsets of $M$ such that $M = C_\beta^m(M_s \cup M_g)$ for some $m, \beta > 0$. Then there exists a function $\Delta: \R^+ \to \R^+$ such that the following properties hold. First, $\lim_{\alpha \to 0^+} \Delta(\alpha) = 0$. Second, for any $\alpha > 0$ and any $\alpha$-cover of $M$, $\calC$, the following two conditions hold. 
\begin{enumerate}
	\item For any $x \in M$, the sequence $v_1^\calC(x), v_2^\calC(x), \dots$ converges in probability, and satisfies that $$|\lim_{n \to \infty} v_n^\calC(x) - v^*(x)| < \Delta(\alpha).$$
	\item The sequence $R_1^\calC(M_s, M_g), R_2^\calC(M_s, M_g), \dots$ converges in probability and satisfies that $$|\lim_{n \to \infty} R_n^\calC(M_s, M_g) - R^\mu(M_s, M_g)| < \Delta(\alpha).$$
\end{enumerate}
\end{theorem}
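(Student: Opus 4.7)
The plan is to mirror the two-scale structure of Theorem \ref{thm:convergence} by decomposing the total deviation into a stochastic sampling error (vanishing as $n \to \infty$ with the cover held fixed) plus a deterministic discretization error (controlled by the cover resolution $\alpha$). Fix an $\alpha$-cover $\calC = \{c_1, \dots, c_N\}$ with Voronoi cells $\CV_1, \dots, \CV_N$, and introduce the \emph{idealized} cover graph with weights $\bar W_{ij} = \mu(\CV_i)\mu(\CV_j) k(c_i, c_j)$. Let $\bar v^\calC$ and $\bar R^\calC(M_s, M_g)$ denote the energy-minimizing voltage and associated region-based ER on this idealized graph, using the same boundary values as in Definition \ref{def:energy_min_voltage_alpha_cover}. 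By the triangle inequality
\begin{equation*}
|R_n^\calC - R^\mu| \le |R_n^\calC - \bar R^\calC| + |\bar R^\calC - R^\mu|,
\end{equation*}
and analogously for $v_n^\calC(x) - v^*(x)$, the proof reduces to bounding the two summands separately, with $\Delta(\alpha)$ defined from a uniform (over all $\alpha$-covers) upper bound on the purely deterministic second term.

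For the sampling summand, the strong law of large numbers applied to the indicators $\mathbbm{1}\{x \in \CV_i\}$ gives $\gamma_i \to \mu(\CV_i)$ almost surely, and because $|\calC| = N$ is finite this convergence is uniform in $i$. Thus the coefficient matrix of the empirical energy converges in probability to that of $\bar E_\calC$. Under the connectivity hypothesis $M = C_\beta^m(M_s \cup M_g)$, the reduced (interior) Laplacian on $\calC$ is invertible as soon as both $M_s$ and $M_g$ contain at least one cover point, so the constrained minimizer depends continuously on the weights. This yields $v_{X_n, \calC} \to \bar v^\calC$ and $J_{tot} \to \bar J_{tot}$ in probability, hence $R_n^\calC \to \bar R^\calC$; the pointwise convergence $v_n^\calC(x) \to $ (the appropriate kernel average of $\bar v^\calC$) follows from the smoothing formula in Definition \ref{def:energy_min_voltage_alpha_cover} and another application of the LLN to the normalization $\sum_i k(x, c_i)$.

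The main obstacle is the deterministic discretization bound $|\bar R^\calC - R^\mu| \le \Delta(\alpha)$, which I would establish by a $\Gamma$-convergence-style comparison followed by a coercivity argument. Introduce the sampling map $T_\calC v(c_i) = v(c_i)$ and the piecewise-constant lift $S_\calC w(x) = w(c_i)$ for $x \in \CV_i$. Since $\diam(\CV_i) \le 2\alpha$ and $k$ is continuous, a Riemann-sum comparison gives a uniform estimate
\begin{equation*}
\sup_{w \in [0,1]^\calC} \bigl| \bar E_\calC(w) - E(S_\calC w) \bigr| \le \eta(\alpha), \qquad \eta(\alpha) \to 0 \text{ as } \alpha \to 0.
\end{equation*}
Using $T_\calC v^*$ as a competitor in $\bar E_\calC$ bounds $\bar E_\calC(\bar v^\calC) \le E(v^*) + \eta(\alpha)$, while optimality of $v^*$ and admissibility of $S_\calC \bar v^\calC$ in $V_{M_s, M_g}$ gives $E(v^*) \le E(S_\calC \bar v^\calC) \le \bar E_\calC(\bar v^\calC) + \eta(\alpha)$; hence the two minimum energies differ by at most $\eta(\alpha)$.

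Translating this closeness of energies into closeness of voltages, and thereby of currents and ERs, is the technical crux and will require a Poincar\'e-type coercivity estimate. The hypothesis $M = C_\beta^m(M_s \cup M_g)$ implies a strictly positive spectral gap for the continuum Laplacian restricted to functions with prescribed values on $M_s \cup M_g$, so that any admissible $v$ with $E(v) - E(v^*) \le \epsilon$ satisfies $\|v - v^*\|_{L^2(\mu)}^2 \le C(\beta, m, k)\, \epsilon$. Applied to $v = S_\calC \bar v^\calC$, this yields an $L^2$-bound of order $\sqrt{\eta(\alpha)}$, which the current-flow formula (Definition \ref{prop:ER_metric_space}) transfers into a bound of the same order on $|\bar R^\calC - R^\mu|$ and pointwise on $|\bar v^\calC - v^*|$ after kernel smoothing. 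Setting $\Delta(\alpha)$ to be this explicit $O(\sqrt{\eta(\alpha)})$ quantity, and combining with the sampling estimate from the previous paragraph via the triangle inequality, completes the proof.
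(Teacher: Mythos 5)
Your high-level decomposition — sampling error (cover fixed, $n\to\infty$) plus deterministic discretization error ($\alpha$-dependent) — matches the structure of the paper's proof, and the sampling half of your argument (LLN for $\gamma_i \to \mu(\CV_i)$, continuity of the constrained minimizer) is essentially what the paper does via its $A_\calC, b_\calC$ operators and Hoeffding-type bounds. The route you take for the discretization bound, however, is genuinely different and contains a gap that the paper deliberately avoids.

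The paper bounds $|v_n^\calC - v_n|$ by comparing the operators $T_{n,\calC}$ and $T_n$ directly in $\ell_\infty$ (Lemma \ref{lemma:alpha_cov_existence_of_Delta_1}), then invokes the contraction machinery (Lemmas \ref{lem:close_to_fixed_point}, \ref{lem:finite_contraction}) to transfer operator closeness to fixed-point closeness, giving a pointwise bound immediately. Your plan instead compares the quadratic energies of competitors and then tries to convert an energy gap $E(v)-E(v^*)\le\epsilon$ into an $L^2$ bound via a ``Poincar\'e-type coercivity estimate,'' asserting that $M = C_\beta^m(M_s\cup M_g)$ implies a spectral gap for the continuum Dirichlet form. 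That implication is not established anywhere and is not a triviality: the assumption $M = C_\beta^m(M_s\cup M_g)$ is used in the paper to prove an $\ell_\infty$ operator bound $\|A_*^m\|_\infty\le 1-\alpha$ (Lemma \ref{lem:contraction}), and an $\ell_\infty$ contraction does not automatically yield an $L^2$ spectral gap for a non-self-adjoint operator; you would need a separate argument here. Moreover, even granting the $L^2$ bound, passing from $\|v-v^*\|_{L^2}$ to the pointwise estimate the theorem asks for requires another step (your ``after kernel smoothing'' remark), which is plausible via Cauchy--Schwarz and boundedness of $\nk$ but is unstated. This chain is the technical crux of your proposal, and you acknowledge it, but as written it is the one place where the argument is not yet a proof.

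A second, more concrete issue: your Riemann-sum estimate $\sup_{w}|\bar E_\calC(w) - E(S_\calC w)| \le \eta(\alpha)$ implicitly requires $k$ to be uniformly continuous. The radial kernel $k_r(x,y)=\mathbbm{1}(d(x,y)\le r)$, which is one of the paper's two motivating kernels, is not continuous on the sphere $\{d(x,y)=r\}$, so a naive Riemann-sum bound fails. The paper introduces $\mu$-continuity precisely to accommodate this: discontinuities on a $\mu\times\mu$-null set are allowed, and Lemma \ref{lemma:alpha_cov_existence_of_Delta_1} is stated in those terms. Your argument should be weakened to $\mu$-continuity, at which point the ``uniform over all covers'' claim needs care.

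Finally, a simplification you are leaving on the table: for part 2 of the theorem, the current-flow formulation gives $J_{tot}=E(v^*)$ and $\bar J_{tot}^\calC=\bar E_\calC(\bar v^\calC)$ (the standard graph identity carries over), so $R^\mu = 1/E(v^*)$ and $\bar R^\calC = 1/\bar E_\calC(\bar v^\calC)$. Your energy comparison thus bounds $|\bar R^\calC - R^\mu|$ directly, with no detour through voltages and no coercivity needed. The coercivity step is only required for part 1 (the pointwise voltage statement), which is where the gap above actually bites.
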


This result implies that for a sufficiently small value of $\alpha$, we can essentially replace our sample $X_n$ with any $\alpha$-cover, $\calC$, of $M$.

\subsection{A note on the advantages of using an $\alpha$-cover} The advantages of defining the ER on an $\alpha$-cover are two-fold:
\begin{enumerate}
    \item  With the $\alpha$-cover, the size of the graph and, therefore, the computational complexity of computing the ER can be controlled independently of $n$.
    \item The approximation of the density can be refined in a continuous manner by updating the local probabilities using more samples from $\mu(M)$, without the need to change the size of the graph.
\end{enumerate}

For example, with $\alpha = 2^{-l}\eta$, it follows that $|\CC|$ and, therefore, the graph size grows as $\CO((\eta/\alpha)^{\textsf{ddim}(M)})$. This means that the size of the $\alpha$-cover depends only on the resolution we want $\alpha$, the span of the data $\eta$, and the doubling dimension $\textsf{ddim}(M)$ of $M$ (Definition \ref{def:DoublingDimension}). 

We note that the $\alpha$-cover, with region-wise scaling, satisfies the criteria for a streaming algorithm \cite{muthukrishnan2005datastreams}, allowing for continuous refinement of the graph weights, without increasing the size of the graph.
The problem is that calculating the effective resistance between sets requires solving the Schur complement with respect to each set, which is computationally expensive; see Section \ref{section:schur_comp_formulation_er}.

\begin{remark}
    We note that the use of $\alpha$-cover and region-wise scaling is not restricted to the region-based ER proposed in this paper. It can also be used for computing the standard ER.
\end{remark}

%%%%%%%%%%%%%%%%%%%%%%%%%%%%%%
%%%%%%% Experiments %%%%%%%%%%
%%%%%%%%%%%%%%%%%%%%%%%%%%%%%%

\section{Experiments}\label{section:experiments}
In this section, we demonstrate the region-based effective resistance (region-based ER) in the large graph limit and show that it converges to a meaningful limit. We divide the section into three sets of experiments. In Section \ref{section:non-trivial_limit}, we replicate some of the experiments conducted in Von Luxburg et al.~\cite{von2010getting} and show that the region-based ER does not suffer from the trivial limit issue that standard ER suffer from. In Section \ref{section:Meaningful_limit} we take this a step further and demonstrate the convergence of region-based ER to a meaningful limit. Finally, in Section \ref{section:experiment_eps_cov}, we demonstrate a computationally efficient way to extend the calculations to large graphs in a controlled manner.

%we present four sets of experiments to shed light on the properties of the region-based effective resistance (region-based ER) compared to the standard effective resistance ER. 

%In particular, we show that where the standard definition converges to the Von-Luxburg limit as shown by \cite{von2010getting}, the region-based effective resistance does not. More importantly, our experiments demonstrate on several point clouds that the region-based effective resistance converges to a meaningful limit.

We note that to calculate the region-based ER, we utilize the Schur complement of the graph Laplacian with respect to these sets; see Appendix \ref{section:schur_comp_formulation_er} for more details. 

\paragraph{Setting:} Throughout this section, we consider standard ER and region-based ER on data sets $X_n\sim \mu^n(M)$, sampled from a distribution over a metric space $(M,d)$. For the standard ER we consider a resistor graph as described in Section \ref{section:resistor_graphs}. For the region-based ER we use a metric graph as defined in Definition \ref{def:metric_resistor_graph}. The standard ER between two nodes $x_i,x_j$ in the graph is denoted $R_{ij} \coloneqq R(x_i, x_j)$. For the region-based ER, we introduce the notation $R^s_{ij} \coloneqq R_n(X_i, X_j)$. We use a radial kernel to determine the sets $X_j, X_i$ associated with $x_i, x_j$. Namely $X_i = \{x\in X_n : \mathbbm{1}(d(x, x_i) \leq r)\}$ and similarly for $X_j$. Here $r_s$ is the source radius (which varies for each experiment).

%Here $R_{ij}$ is the ER between nodes $i,j$ and $R^s_{ij}$ is the ER between the sets $X_i, X_j$.

%%%%%%%%%%%%%%%%%%%%%%%%%%%%%%%
%% Lost in space experiments %%
%%%%%%%%%%%%%%%%%%%%%%%%%%%%%%%
\subsection{Region-based ER does not converge to the Von-Luxburg limit}\label{section:non-trivial_limit}
In this section, we replicate some of the experiments conducted in  Von Luxburg et al.~\cite{von2010getting}. For the experiments, we calculate both the standard ER and the region-based ER proposed in this paper. We show that where the standard definition of ER converges to a trivial limit as shown by Von Luxburg et al.~\cite{von2010getting}, the region-based ER does not.

We are interested in the convergence of the region-based ER compared to the standard ER as the number of samples used to construct the graph increases. We construct the graph on a sample $X_n$. Let $\eta_{ij} \coloneqq 1/D_i + 1/D_j$ be the Von-Luxburg limit for the standard ER; similarly, let $\eta^s_{ij} \coloneqq 1/D^s_i + 1/D^s_j$ be the  Von-Luxburg limit for the region-based ER. Here $D_i, D_j$ are the degrees of nodes $i,j$, respectively. Similarly, $D^s_i, D^s_j$ are the degrees associated with the sets $X_i, X_j$, see Appendix \ref{section:reduced_graph} and Lemma \ref{lemma:interp_reduced_graph} for more details. We then consider the max and mean of the relative deviation from the Von-Luxburg limit, namely:

%the following quantities, which we refer to as the relative deviation from the Von-Luxburg limit.
\begin{equation}\label{eq:rel_dev_von_lux_standardER}
    \max_{ij}|R_{ij} - \eta_{ij}|/R_{ij} \quad \text{and} \quad  \sMean[|R_{ij} - \eta_{ij}|/R_{ij}].
\end{equation}
and
\begin{equation}\label{eq:rel_dev_von_lux_regionbasedER}
    \max_{ij}|R^s_{ij} - \eta^s_{ij}|/R^s_{ij} \quad \text{and} \quad   \sMean[|R^s_{ij} - \eta^s_{ij}|/R^s_{ij}].
\end{equation}

We consider the convergence of the relative deviation from the Von-Luxburg limit on two data sets that are similar to those studied in \cite{von2010getting}. These are:
\begin{itemize}[label=$\diamond$]
    \item Uniform 3-dim domain
    \item USPS data-set of handwritten digits (roughly $9200$ samples in $256$ dimensions)
\end{itemize}

\begin{figure}[h!]
    \centering
    \subfloat[\label{subfig:uniform_vonlux_region_False}\protect\centering Uniform domain/Standard ER]{\includegraphics[width=0.44\textwidth]{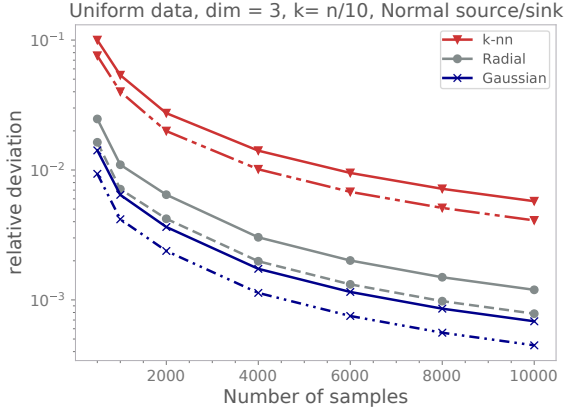} }
        \hfill
    \subfloat[\label{subfig:uniform_vonlux_region_True}\protect\centering Uniform domain/Region-based ER]{\includegraphics[width=0.44\textwidth]{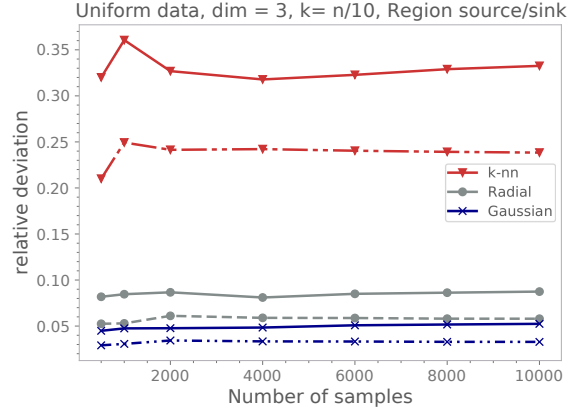} }
        \hfill
    \subfloat[\label{subfig:Usps_vonlux_region_False}\protect\centering USPS data set/Standard ER]{\includegraphics[width=0.44\textwidth]{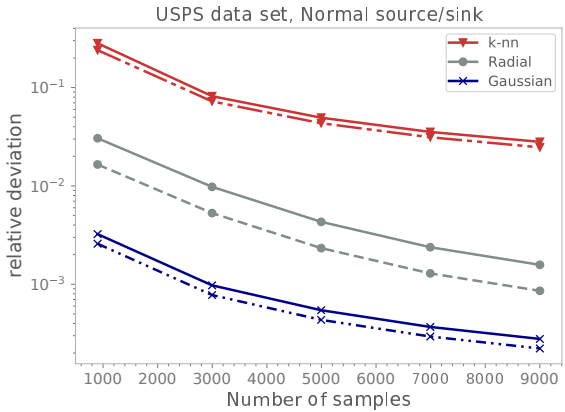} }
        \hfill
    \subfloat[\label{subfig:Usps_vonlux_region_True}\protect\centering USPS data set/Region-based ER]{\includegraphics[width=0.44\textwidth]{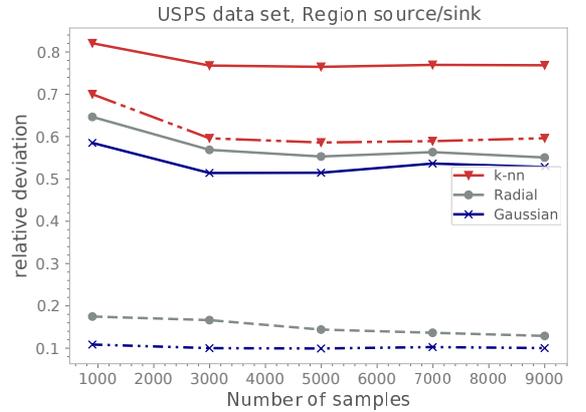} }
        \hfill
    \caption{This figure demonstrates that standard ER converges to the Von-Luxburg limit, while region-based ER does not (which is a good thing). Here solid lines show maximal relative deviations, and dashed lines show mean relative deviations. The x-axis shows the number of samples used to construct the graph. The y-axis is the relative deviation of the effective resistance from the Von-Luxburg limit.} 
    \label{fig:von_lux_exp}%
\end{figure}

On each data set, we build a graph using the kernels outlined in section \ref{section:resistor_graphs_in_metric_spaces} and also include the nearest neighbor kernel, $\Gamma(x,y) = \mathbbm{1}(y\in \CN_\kappa(x))\vee (x\in \CN_\kappa(y))$ to better replicate the corresponding experiments in Von Luxburg et al.~\cite{von2010getting}. Similarly to these experiments, we also select the radius of the radial kernel and the bandwidth of the Gaussian kernel to be the maximal k-nn distance in the data. Note that for the USPS data set, we let $k=100$, and for the uniform data set, we let $k=n/100$ where $n$ is the number of samples. We let the source radius be the maximal 20-nn distance in the data. 

\paragraph{Uniform domain} The results for the uniform domain are shown in Figure \ref{subfig:uniform_vonlux_region_False}-\ref{subfig:uniform_vonlux_region_True}. The figure shows the max and mean relative deviation from the Von-Luxburg limit for the standard ER (Figure \ref{subfig:uniform_vonlux_region_False}) and the region-based ER (Figure \ref{subfig:uniform_vonlux_region_True}). We see that the standard ER converges quickly to the Von-Luxburg limit, which corresponds to the results in Von Luxburg et al.~\cite{von2010getting}. Meanwhile, we see that for the region-based ER, convergence to the Von-Luxburg limit does not occur, which confirms our theoretical results. 

\paragraph{USPS data set} A similar set of experiments is shown for the USPS data set in Figure \ref{subfig:Usps_vonlux_region_False}-\ref{subfig:Usps_vonlux_region_True}. We observe the same behavior as for the uniform domain. The standard ER converges quickly to the Von-Luxburg limit, while the region-based ER does not converge to this limit. Again, this confirms our theoretical findings with respect to the region-based ER.

\subsection{Meaningful limit}\label{section:Meaningful_limit}
In the previous section, we saw that the region-based ER does not converge to the Von-Luxburg limit as was the case for the standard ER. However, it remains to be shown that the effective resistance under this new definition converges towards a meaningful limit. In this section, we take a closer look at this by considering the following two experiments
\begin{itemize}[label=$\diamond$]
    \item Convergence to a meaningful limit on a half-moon of increasing density
    \item Meaningful ordering of points on a Swiss roll.
\end{itemize}

%\begin{figure}[htb!]
%    \centering
%    \subfloat[\label{subfig:halfmoon_data_factor2}\protect\centering High density half-moon (pink) over low density background (grey). The gray dotted line on the half moon indicates the geodesic distance between the points $x_i, x_j, x_k$, and $x_p$. we consider the region-based effective resistance distance between each of the points.]{\includegraphics[width=0.48\textwidth]{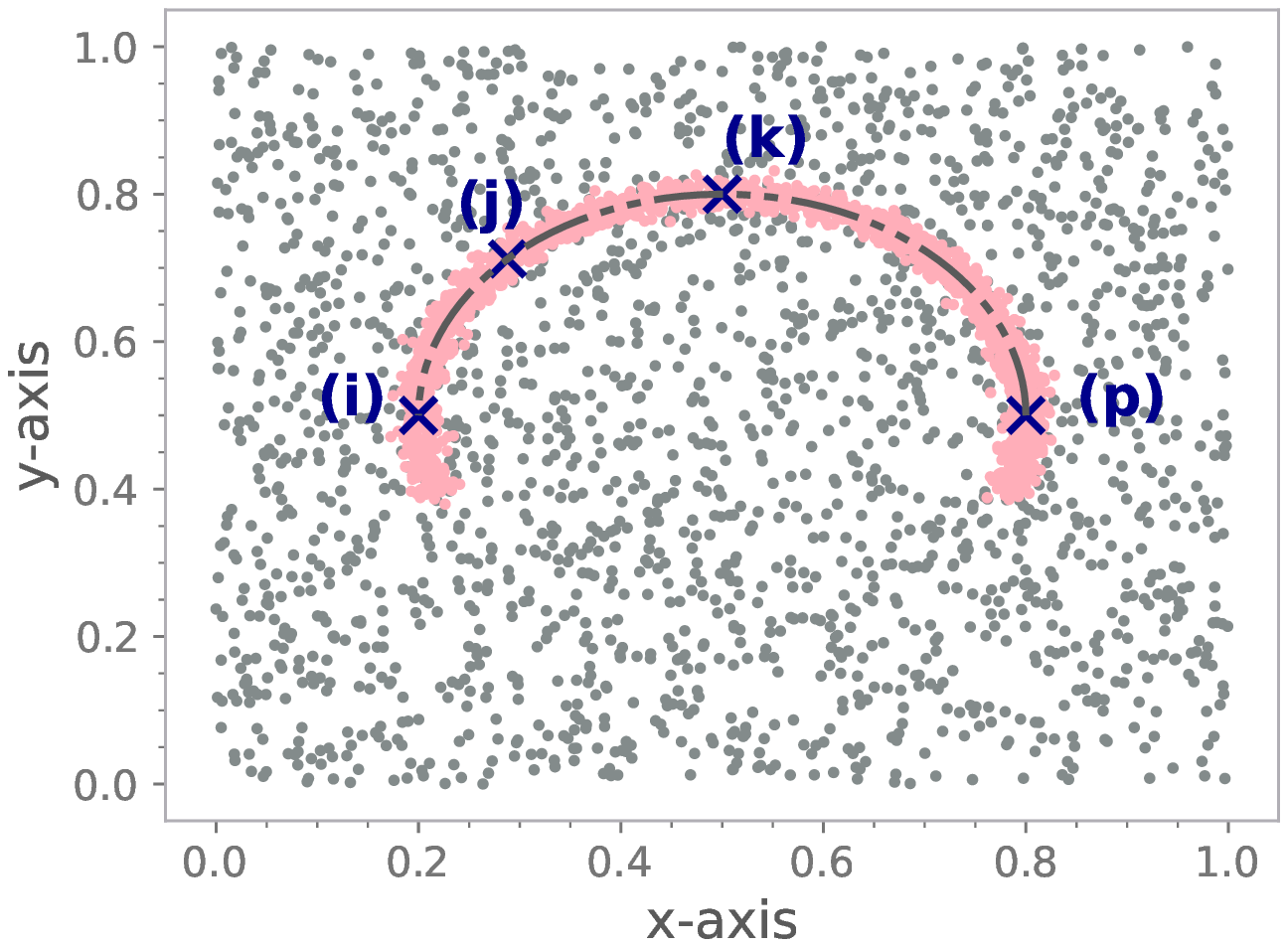} }
%        \hfill
    %\subfloat[\label{subfig:swiss_roll_3D_illustration_ink}\protect\centering Data distribution with Swiss roll shape. We consider the region-based ER distance between each of the points $1,\cdots, 6$. The locations of the points on the Swiss roll are marked by the intersection between the pink line and the blue triangles, respectively.]{\includegraphics[width=0.48\textwidth]{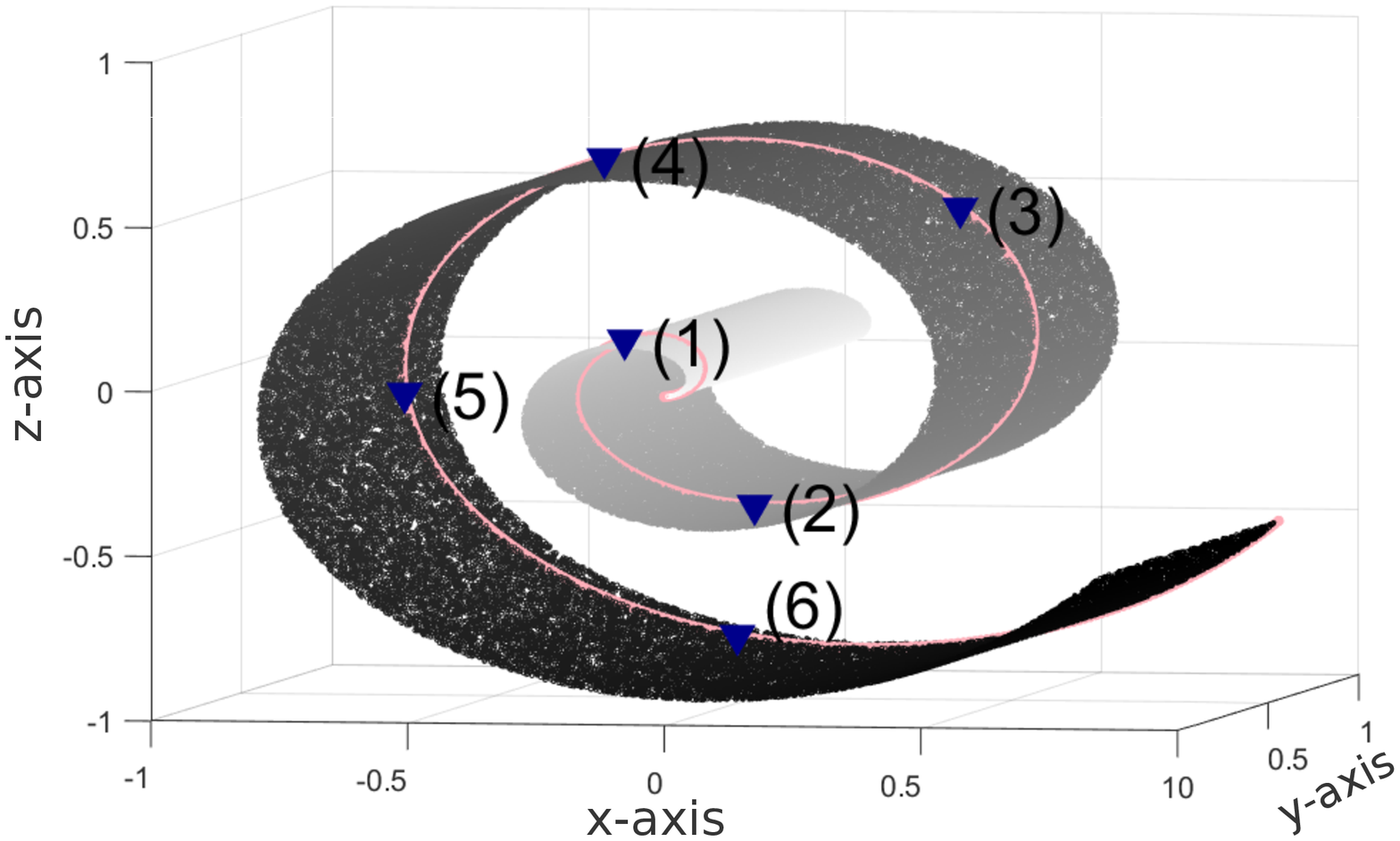} }
 %       \hfill
 %   \caption{The two experimental setups we consider to demonstrate that our proposed definition of effective resistance has a meaningful limit as the number of points in the graph increases.} 
%    \label{fig:Meaningfull_limit_examples}%
%\end{figure}

%%%%%%%%%%%%%%%%%%%%%%%%%%%
%% Half moon experiments %%
%%%%%%%%%%%%%%%%%%%%%%%%%%%
\paragraph{Half-moon experiment} We consider a data distribution that consists of a background $[0, 1]^2$ with low density ($10000$ samples) and a half-moon with increasing density (samples in the interval $[100, \dots, 16000]$). This point cloud is illustrated in Figure \ref{subfig:halfmoon_data_factor2}.

The purpose of this experiment is to demonstrate that the region-based ER converges to the "geodesic distance" along the half-moon (see the grey dotted line in Figure \ref{subfig:halfmoon_data_factor2}), as the number of samples on the half-moon increases. This is a meaningful limit since the ER-based distance should consider all paths, and as the density of the half-moon becomes increasingly dominant, the distance should converge to the distance of paths along this curve.

\begin{figure}[htb!]
    \centering
    \subfloat[\label{subfig:halfmoon_data_factor2}\protect\centering]{\includegraphics[width=0.48\textwidth]{Figures/halfmoon_data_factor2.eps} }
        \hfill
    \subfloat[\label{subfig:ratio_ers_factor2.eps}\protect\centering]{\includegraphics[width=0.44\textwidth]{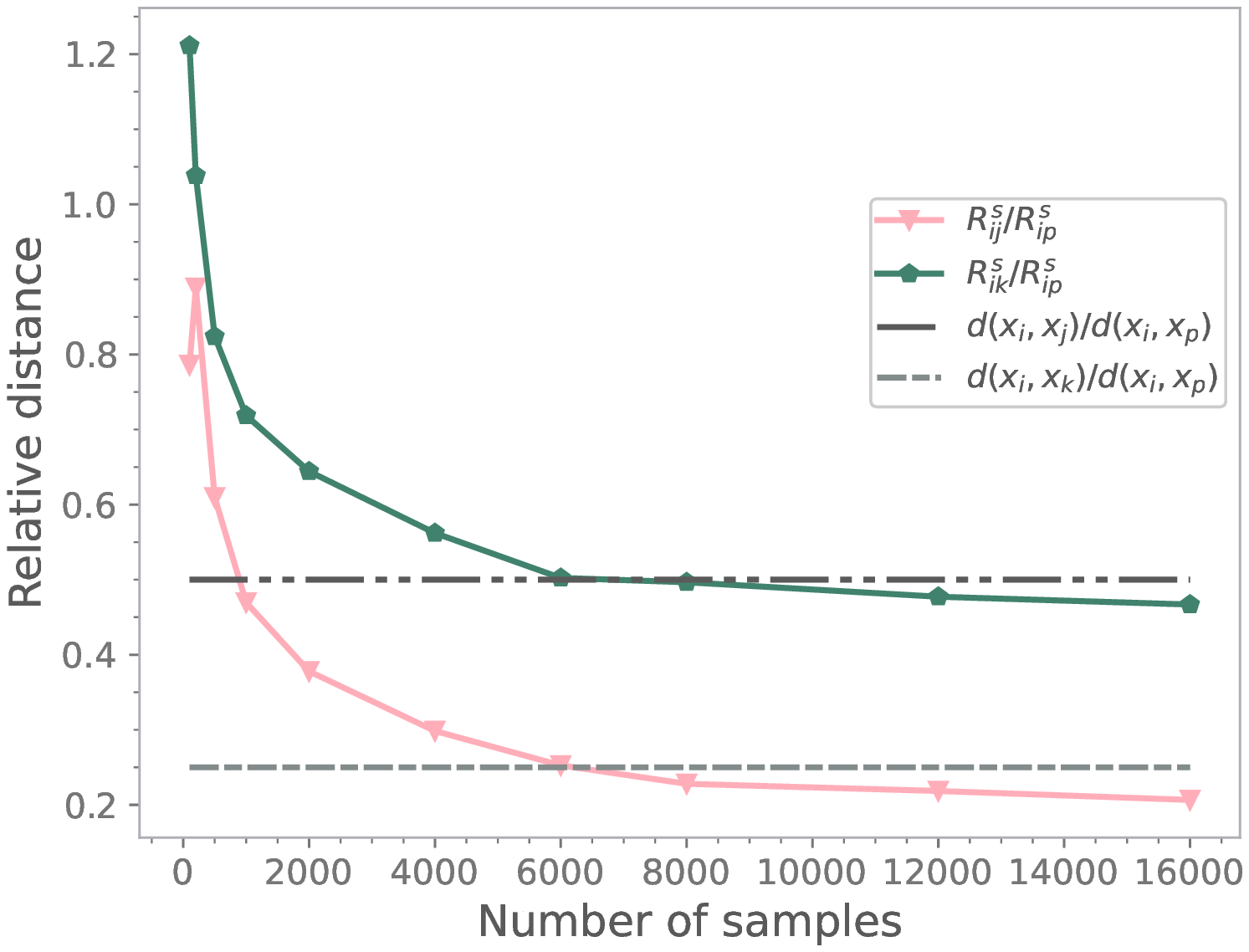} }
        \hfill
    \caption{The region-based ER between points on the half-moon converges to the distance along the half-moon. (This is what we want to see). (a) High-density half-moon (pink) over low-density background (grey). The points we consider are labeled $i,j,k$, and $p$. (b) Grey dotted lines shows $\Gamma_{ijp}$ and $\Gamma_{ikp}$ (See Eq. \eqref{eq:definition_of_halfmoon_geodesic_ratios}). Pink line shows $R^s_{ij}/R^s_{ip}$ and Green line shows $R^s_{ik}/R^s_{ip}$.} 
    \label{fig:Meaningfull_limit_examples}%
\end{figure}

The half-moon we consider is sampled from a circle segment with radius $t=0.3$ and angle $\theta\in[-20, 200]^\circ$ with a Gaussian distribution $\CN((t, \theta), 0.01)$ for each $\theta$. Along the half-moon, we consider four points $x_i, x_j, x_k$, and $x_p$; placed respectively at $\theta_i = 0^\circ$, $\theta_j = 45^\circ$, $\theta_k = 90^\circ$, and $\theta_p = 180^\circ$. We construct a graph on the point cloud using a radial kernel with radius $r=0.08$. For the source and sink regions, we use a source radius $r_s = 0.05$ centered on each source-sink point, respectively. 

Let $d(\cdot,\cdot)$ denote the distance along the half-moon arch as illustrated in Figure \ref{subfig:halfmoon_data_factor2} by the grey dotted line. In order to compare the region-based ER distance with $d(\cdot,\cdot)$ we need to consider these distances relative to some reference distance. Because of this, we introduce the distance between points $i$ and $p$ as the reference. We then have
\begin{equation}\label{eq:definition_of_halfmoon_geodesic_ratios}
    \Gamma_{ijp} \coloneqq d(x_i, x_j)/d(x_i, x_p) = 0.25 \quad \text{and} \quad \Gamma_{ikp} \coloneqq d(x_i, x_k)/d(x_i, x_p) = 0.5
\end{equation}

Figure \ref{subfig:ratio_ers_factor2.eps} shows the region-based ER ratios $R^s_{ij}/R^s_{ip}$ and $R^s_{ik}/R^s_{ip}$ as the number of points sampled from the half moon increases. The ratios $R^s_{ij}/R^s_{ip}$ and $R^s_{ik}/R^s_{ip}$ converges towards $\Gamma_{ijp}$ and $\Gamma_{ikp}$ respectively. This is expected because, as the density on the half-moon increases, the effective resistance, which considers all possible paths, should be increasingly dominated by the paths along the half-moon. 

\begin{remark}\label{remark:halfmoon_remark}
We note that had the region-based ER converged to the Von-Luxburg limit, we would not have observed the convergence in Figure \ref{subfig:ratio_ers_factor2.eps}. This is because the Von-Luxburg limit only depends on the degrees of the respective sets, which for the half-moon would be the same for all points $i,j,k,p$. Therefore, one would expect $R^s_{ij}/R^s_{ip}$ and $R^s_{ik}/R^s_{ip}$ to converge to $1$ if this was the case.
\end{remark}

\paragraph{Swiss roll experiment} We consider a data distribution shaped as a Swill roll and compare the relative distance between five points along the Swiss roll surface as indicated in Figure \ref{subfig:swiss_roll_3D_illustration_ink}. We consider the source radius $r_s=0.1$ centered at the point and use a radial kernel with radius $r=0.2$ to construct the graph. 

\begin{figure}[htb!]
    \centering
    \subfloat[\label{subfig:swiss_roll_3D_illustration_ink}\protect\centering ]{\includegraphics[width=0.48\textwidth]{Figures/swiss_roll_3D_illustration_ink.eps} }
        \hfill
    \subfloat[\label{subfig:effres_scaling}\protect\centering 
    ]{\includegraphics[width=0.44\textwidth]{} }
        \hfill
    \caption{The ordering of the lines in (b) is meaningful (which is a good thing). Experiment demonstrating that the region-based ER distance gives a meaningful ordering of the distance between points and maintains this as the number of samples used to construct the graph increases. (a) Data distribution. Blue triangles indicate the location of the points $1,\dots, 6$. (b) The y-axis shows the region-based ER scaled by a factor of $10^{-5}$. The x-axis shows the number of samples used to construct the graph. The pink line corresponds to the region-based ER $R^s_{12}$ between $1,2$. Similarly, for $R^s_{13}$ (green),  $R^s_{14}$ (gray), and $R^s_{15}$ (blue). } 
    \label{fig:er_convergence_on_halfmoon}%
\end{figure}

The region-based ER $R^s_{1i}$ between point $1$ and respectively points $i \in [2,3,4,5]$ is shown in Figure \ref{subfig:effres_scaling}. Using $R^s_{1i}$ as a measure of distance between the points, we see that region-based ER gives a natural ordering of the distance from $1$ to the other points, which is maintained as the number of samples increases. Namely, that $R^s_{12} < \dots <  R^s_{15}$.

\begin{remark}
We note that had the region-based ER converged to the Von-Luxburg limit, this ordering would not have been maintained for the same reason as discussed in the half-moon experiment; See remark \ref{remark:halfmoon_remark}. 
\end{remark}

%\begin{figure}[htb!]
%    \centering
    %\subfloat[\label{subfig:effres_scaling}\protect\centering Region-based effective resistance distances $R^s$  between points on Swiss roll]{\includegraphics[width=0.44\textwidth]{Figures/effres_scaling.eps} }
%        \hfill
    %\subfloat[\label{subfig:effres_standard_no_scaling}\protect\centering Standard effective resistance  distances $R$ between points on Swiss roll]{\includegraphics[width=0.44\textwidth]{Figures/effres_standard_no_scaling.eps} }
 %       \hfill
 %   \caption{Meaningful limit on a Swiss roll using region-based effective resistance} 
    %\label{fig:swiss_roll_meaningful_limit}%
%\end{figure}

\subsection{Example on the benefit of using an $\alpha$-cover graph}\label{section:experiment_eps_cov}
We include a simple experiment to demonstrate the use of an $\alpha$-cover and region-wise scaling, which was introduced in Section \ref{section:epsilon_cover_and_region_wise_scaling}. We consider data sampled from a non-uniform density $\mu([0, 1])$ consisting of two regions of high density separated by a region of low density (See Figure \ref{fig:data_distribution_nonuni1D}). We consider five points $\CJ = \{1,\dots, 5\}$ marked by blue triangles in Figure \ref{fig:data_distribution_nonuni1D} and calculate the region-based ER $R^s_{1j}$ between $1$ and points $j \in \CJ\backslash \{1\}$. In the experiment, the source radius is $r_s = 0.1$, and we use a radial kernel with radius $r=0.1$. 

We calculate the region-based ER using two different graphs; 
\begin{enumerate}[label=(\Alph*)]
    \item Graph built on an $\alpha$-cover with $\alpha=2/3 \times 3^{-6}$ (1122 centers) and with region-wise scaling
    \item Graph built with samples directly from $\mu([0, 1])$ and with point-wise scaling.
\end{enumerate}

\noindent We note that in order to construct the $\alpha$-cover, we use the cover-tree algorithm \cite{beygelzimer2006cover, oslandsbotn2022Streamrak}. 

\begin{figure}[h]
\includegraphics[width=\textwidth]{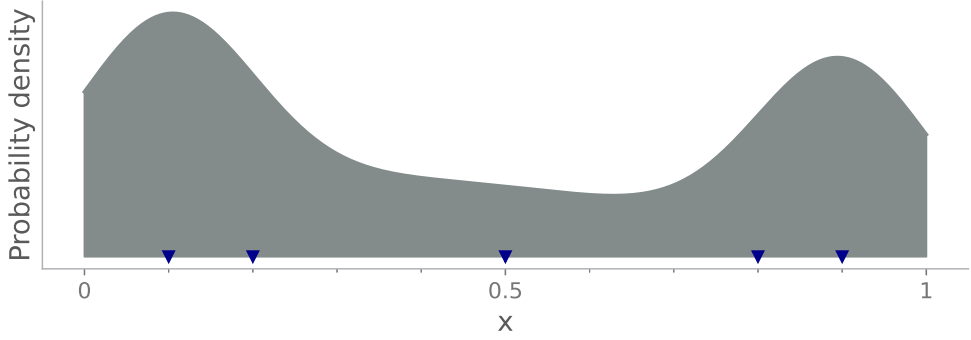}
 \caption{Non-uniform data distribution $\mu([0, 1])$. The y-axis shows the density of the data distribution  $\mu([0, 1])$. The x-axis shows the support. The points we consider are marked by the blue triangles labeled $(1), \dots, (5)$. } 
 \label{fig:data_distribution_nonuni1D}%
\end{figure}

In the experiment, the region-based ER $R^s_{1j}$ between point $1$ and points $j\in [2,3,4,5]$ are calculated for both the $\alpha$-cover graph with $p_i$ scaling (A) and the density graph (B). Figure \ref{fig:effres_epsilon_cover_experiment} shows that the region-based ER on the graph (A) converges to the same limit as the region-based ER on graph (B) when increasing the number of samples used to estimate the local probabilities $p_i$. 

This is what we wanted to show because it means that region-based ER can be calculated in two ways, either using a graph constructed on an $\alpha$-cover with appropriate scaling or by using a graph constructed directly on samples from $\mu([0, 1])$. The benefit of constructing the graph on the $\alpha$-cover is that the graph size will be independent of the number of samples. At the same time, accuracy can still be increased by improving the estimates of the local probabilities $p_i$. This has the desirable property that it satisfies the condition of a streaming algorithm; See Section \ref{section:computational_complexity_considerations} for more details.

\begin{figure}[htb!]
    \centering
    \subfloat[\label{subfig:effres_no_epscov}\protect\centering Graph built on samples from density]{\includegraphics[width=0.43\textwidth]{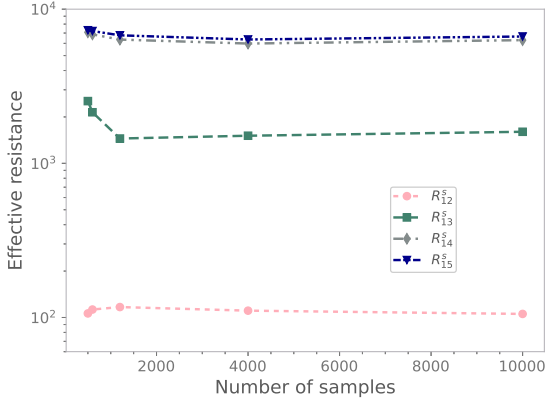} }
        \hfill
    \subfloat[\label{subfig:effres_fixed_epscov_density_scaling}\protect\centering Graph built on $\alpha$-cover with region-wise scaling]{\includegraphics[width=0.43\textwidth]{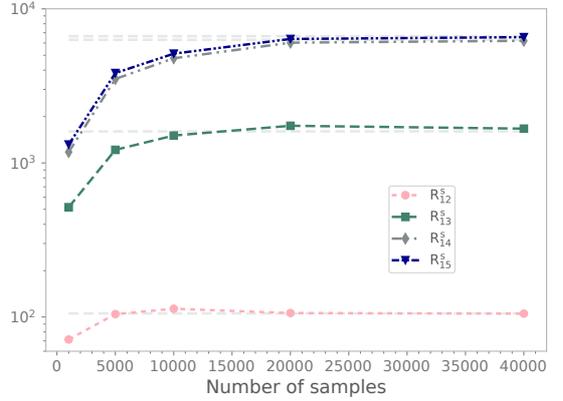} }
        \hfill
    
     \caption{Demonstration of region-based ER on $\alpha$-cover. We see that the ER in (b) converges to the asymptotics of the ER in (a) (weak gray lines). (This 
    is a good thing). The dotted lines corresponds to region-based ER $R^s_{1j}$ between point $1$ and points $j\in [2,3,4,5]$ respectively. The pink line corresponds to $R^s_{12}$, the green line to $R^s_{13}$, the grey line to $R^s_{14}$ and the blue line to $R^s_{15}$. The y-axis shows the region-based ER, while the x-axis is different for each sub-figure. (a) x-axis shows the number of samples used to construct the graph. (b) x-axis shows the number of samples used to estimate the local probabilities $p_i$ for scaling of the $\alpha$-cover graph.} 
    \label{fig:effres_epsilon_cover_experiment}%
\end{figure}

Furthermore, using an $\alpha$-cover graph with region-wise scaling, a smaller graph can be used to calculate the region-based ER, provided sufficient samples are used to estimate the local probabilities $p_i$. Since estimating these local probabilities is cheap, time is saved. Table \ref{table:time_example_epscov_graph} illustrates this; the table has to be seen in relation to the convergence results in Figure \ref{fig:effres_epsilon_cover_experiment}.

\begin{table}[!h]
\caption{Example of the time saved by calculating ER on an $\alpha$-cover graph (graph A) instead of a graph constructed directly on the samples from the distribution (graph B). The first column shows the graphs used to calculate the ER; These are Graphs (A) and (B), described earlier. The second column is the number of samples used (The number in parenthesis is the size of the $\alpha$-cover). The last column is the time to calculate the region-based ER using the Schur complement on the two graph types (The time in parenthesis is the time to estimate the local probabilities $p_i$ for the $\alpha$-cover).}\label{table:time_example_epscov_graph}
\begin{center}
\begin{tabular}{c|c|c} 
Graph type & Number of samples & Time (s)\\ \hline
Graph (A) & 21122 (1122) & 0.04s + (0.07s)\\  
Graph (B) & 4000 &  4.6s \\  
\end{tabular}
\end{center}
\end{table}

\begin{remark}
    An additional note is that also for this experiment, using the region-based ER as a measure of distance gives a meaningful ordering, namely, $R^s_{12} < R^s_{13} < R^s_{14} < R^s_{15}$. Moreover, as more samples are used to estimate the local probabilities, the distance between the samples increases. Due to the shape of the density, it makes sense that the distance between points should be larger when the density is taken into account. Especially points separated by the region of low density, which is also what we observe.

\end{remark}

\section*{Acknowledgments}
AC was partially funded by NSF DMS 1819222 and 2012266, and a gift from Intel research. YF is funded by the NIH grant NINDS (PHS) U19NS107466 Reverse Engineering the Brain Stem Circuits that Govern Exploratory Behavior. RB is funded by NSF under CNS 1804829. AO is funded by Simula Research Laboratory.

\bibliographystyle{unsrt}
\bibliography{References}

\begin{thebibliography}{10}

\bibitem{spielman2011graph}
Daniel~A Spielman and Nikhil Srivastava.
\newblock Graph sparsification by effective resistances.
\newblock {\em SIAM Journal on Computing}, 40(6):1913--1926, 2011.

\bibitem{herbster2006prediction}
Mark Herbster and Massimiliano Pontil.
\newblock Prediction on a graph with a perceptron.
\newblock {\em Advances in neural information processing systems}, 19, 2006.

\bibitem{zhang2019detecting}
Teng Zhang and Changjiang Bu.
\newblock Detecting community structure in complex networks via resistance
  distance.
\newblock {\em Physica A: Statistical Mechanics and its Applications},
  526:120782, 2019.

\bibitem{ham2004kernel}
Jihun Ham, Daniel~D Lee, Sebastian Mika, and Bernhard Sch{\"o}lkopf.
\newblock A kernel view of the dimensionality reduction of manifolds.
\newblock In {\em Proceedings of the twenty-first international conference on
  Machine learning}, page~47, 2004.

\bibitem{forcey2020phylogenetic}
Stefan Forcey and Drew Scalzo.
\newblock Phylogenetic networks as circuits with resistance distance.
\newblock {\em Frontiers in Genetics}, 11:1177, 2020.

\bibitem{tauch2015measuring}
Sotharith Tauch, William Liu, and Russel Pears.
\newblock Measuring cascade effects in interdependent networks by using
  effective graph resistance.
\newblock In {\em 2015 IEEE Conference on Computer Communications Workshops
  (INFOCOM WKSHPS)}, pages 683--688, 2015.

\bibitem{kocc2014impact}
Yakup Ko{\c{c}}, Martijn Warnier, Piet~Van Mieghem, Robert~E. Kooij, and
  Frances~M.T. Brazier.
\newblock The impact of the topology on cascading failures in a power grid
  model.
\newblock {\em Physica A: Statistical Mechanics and its Applications},
  402:169--179, 2014.

\bibitem{wang2015network}
Xiangrong Wang, Yakup Koç, Robert~E. Kooij, and Piet Van~Mieghem.
\newblock A network approach for power grid robustness against cascading
  failures.
\newblock In {\em 2015 7th International Workshop on Reliable Networks Design
  and Modeling (RNDM)}, pages 208--214, 2015.

\bibitem{cavraro2018graph}
Guido Cavraro and Vassilis Kekatos.
\newblock Graph algorithms for topology identification using power grid
  probing.
\newblock {\em IEEE Control Systems Letters}, 2(4):689--694, 2018.

\bibitem{lovasz1993random}
L{\'a}szl{\'o} Lov{\'a}sz.
\newblock Random walks on graphs.
\newblock {\em Combinatorics, Paul erdos is eighty}, 2(1-46):4, 1993.

\bibitem{boyd2005mixing}
Stephen~P Boyd, Arpita Ghosh, Balaji Prabhakar, and Devavrat Shah.
\newblock Mixing times for random walks on geometric random graphs.
\newblock In {\em ALENEX/ANALCO}, pages 240--249, 2005.

\bibitem{avin2007cover}
Chen Avin and Gunes Ercal.
\newblock On the cover time and mixing time of random geometric graphs.
\newblock {\em Theoretical Computer Science}, 380(1-2):2--22, 2007.

\bibitem{von2010getting}
Ulrike Luxburg, Agnes Radl, and Matthias Hein.
\newblock Getting lost in space: Large sample analysis of the resistance
  distance.
\newblock In {\em Advances in Neural Information Processing Systems},
  volume~23. Curran Associates, Inc., 2010.

\bibitem{von2014hitting}
Ulrike Von~Luxburg, Agnes Radl, and Matthias Hein.
\newblock Hitting and commute times in large random neighborhood graphs.
\newblock {\em The Journal of Machine Learning Research}, 15(1):1751--1798,
  2014.

\bibitem{song2019extension}
Yue Song, David~J Hill, and Tao Liu.
\newblock On extension of effective resistance with application to graph
  laplacian definiteness and power network stability.
\newblock {\em IEEE Transactions on Circuits and Systems I: Regular Papers},
  66(11):4415--4428, 2019.

\bibitem{klein1993resistance}
Douglas~J Klein and Milan Randi{\'c}.
\newblock Resistance distance.
\newblock {\em Journal of mathematical chemistry}, 12(1):81--95, 1993.

\bibitem{jorgensen2008operator}
Palle~ET Jorgensen and PJ~Pearse Erin.
\newblock Operator theory and analysis of infinite networks.
\newblock {\em arXiv preprint arXiv:0806.3881}, 3, 2008.

\bibitem{ghosh2008minimizing}
Arpita Ghosh, Stephen Boyd, and Amin Saberi.
\newblock Minimizing effective resistance of a graph.
\newblock {\em SIAM Review}, 50(1):37--66, 2008.

\bibitem{gillani2021queueing}
Iqra~Altaf Gillani and Amitabha Bagchi.
\newblock A queueing network-based distributed laplacian solver for directed
  graphs.
\newblock {\em Information Processing Letters}, 166:106040, 2021.

\bibitem{beygelzimer2006cover}
Alina Beygelzimer, Sham Kakade, and John Langford.
\newblock Cover trees for nearest neighbor.
\newblock In {\em Proceedings of the 23rd international conference on Machine
  learning}, pages 97--104, 2006.

\bibitem{oslandsbotn2022Streamrak}
Andreas Oslandsbotn, {\v Z}eljko Kereta, Valeriya Naumova, Yoav Freund, and
  Alexander Cloninger.
\newblock Streamrak a streaming multi-resolution adaptive kernel algorithm.
\newblock {\em Applied Mathematics and Computation}, 426:127112, 2022.

\bibitem{muthukrishnan2005datastreams}
S.~Muthukrishnan.
\newblock Data streams: Algorithms and applications.
\newblock {\em Foundations and Trends® in Theoretical Computer Science},
  1(2):117--236, 2005.

\bibitem{abraham2006advances}
Ittai Abraham, Yair Bartal, and Ofer Neimany.
\newblock Advances in metric embedding theory.
\newblock In {\em Proceedings of the thirty-eighth annual ACM symposium on
  Theory of computing}, pages 271--286, 2006.

\end{thebibliography}

\newpage
\appendix
\renewcommand{\thesection}{\Alph{section}}

\section{Definitions}

\begin{definition}[The doubling dimension](Adapted from \cite{abraham2006advances})\\
Let $(M,d)$ be a metric space. The \textbf{doubling constant} of $M$ is the minimal $\kappa$ required to cover a ball $B_r(x)$ by $\kappa$ balls of radius $r/2$, for all $x\in M$ and for all $r >0$.
The \textbf{doubling dimension} of $X$ is defined as $\textsf{ddim}(M) = \log_2(\kappa)$.
\label{def:DoublingDimension}
\end{definition}

\subsection{Schur complement formulation of effective resistance between sets}\label{section:schur_comp_formulation_er}
\cite{song2019extension} offered an explicit expression for the effective resistance between sets $X_a, X_b$ in terms of the Schur complement. We restate this expression here, as we will use it when calculating the effective resistance for our experiments. Let $L/L_{cc}$ be the Schur complement, Definition \ref{def:schur_complement}, of the Laplacian $L$ with respect to the block $L_{cc}$, where $L_{cc}$ is the block corresponding to nodes in the set $X_c = X\backslash (X_a \cup X_b)$. The effective resistance between the sets $X_a, X_b$  can then be defined as
\begin{equation}\label{eq:eff_res_schur_def}
    R^s(X_a, X_b) = (e_{X_a}^\top (L/L_{cc}) e_{X_a})^{-1}
\end{equation}
where $e_{X_a}$ is the vector with all ones for $i\in X_a$ and zero otherwise.

\begin{definition}[Schur complement]\label{def:schur_complement}
Consider the block partition of the matrix $M=\begin{bmatrix}
    A & B \\ C, & D
\end{bmatrix}$, where $A\in \bbR^{n\times n}$, $B\in \bbR^{n\times m}$,
$C\in \bbR^{m\times n}$ and $C\in \bbR^{m\times m}$. Provided $D$ is non-singular, we define the Schur complement of $M$ as $M/D = A - BD^{-1}C$. 
\end{definition}

\section{Proof of Proposition \ref{prop:unique_voltage_solution}}

\subsection{Properties of Contractions}

In this section, we develop several tools that we will use throughout our entire proofs section. Let $Z$ denote a space. We will be interested in functions, $v: Z \to \reals$, as well as operators $A: v \mapsto (Av: Z \to \reals)$ that take functions to functions. For our purposes, $Z$ will often be $\M$, the underlying metric space, but will sometimes also be a finite sample from $\M$.

We begin by defining the $\ell_\infty$-norm, which will play a key role in our analysis. 
\begin{definition}\label{defn:infty_norm}
Let $v: Z \to \reals$ be a map. Then $||v||_\infty$ denotes the $\ell_\infty$ norm of $v$, and is defined as $||v||_\infty = \sup_{z \in Z} |v(z)|.$
\end{definition}

We can also define the $\ell_\infty$-norm of an operator.
\begin{definition}\label{defn:operator_norm}
Let $A$ be an operator, meaning it maps functions ($Z \to \reals$) to other functions. Then $$||A||_\infty = \sup_{||v||_\infty > 0, v: Z \to \reals} \frac{||Av||_\infty}{||v||_\infty}.$$ 
\end{definition}

We will be especially interested in \textit{contractions}, which are operators with infinity norm strictly less than $1$. We are also interested in contractions combined with translations. We call such operators nice.

\begin{definition}\label{defn:nice}
An operator, $T$, is \textbf{nice}, if there exists a contraction $A$ and a map $b$ such that $Tv = Av + b$. 
\end{definition}

We are now prepared for our first useful result:

\begin{lemma}\label{lem:fixed_point}
Let $T$ be a \textbf{nice} operator. Then there is a unique map $v: Z \to \reals$ such that $Tv(z) = v(z)$ for all $z \in Z$. 
\end{lemma}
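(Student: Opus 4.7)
The claim is a direct instance of the Banach fixed-point theorem applied to the Banach space of bounded maps $Z \to \reals$ equipped with the sup-norm of Definition \ref{defn:infty_norm}. My plan splits into three steps: verify $T$ is a strict contraction on a complete space, iterate to produce a fixed point, and then use the contraction to force uniqueness.

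First, writing $Tv = Av + b$ with $\|A\|_\infty < 1$ and $b: Z \to \reals$ a fixed map, I would note the linearity-style identity
\[
\|Tv_1 - Tv_2\|_\infty \;=\; \|A(v_1-v_2)\|_\infty \;\leq\; \|A\|_\infty \, \|v_1 - v_2\|_\infty ,
\]
which shows that $T$ is a strict contraction with constant $c := \|A\|_\infty < 1$ on the set of bounded functions. Assuming $b$ is bounded (which is the case in all of our intended applications, since source/sink boundary data lives in $[0,1]$), $T$ maps the Banach space of bounded functions $\ell^\infty(Z)$, equipped with $\|\cdot\|_\infty$, into itself.

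Second, for existence I would run Picard iteration: set $v_0 \equiv 0$ and define $v_{n+1} = T v_n$. The contraction estimate gives $\|v_{n+1} - v_n\|_\infty \leq c^n \|v_1 - v_0\|_\infty$, so by the triangle inequality the tail sums $\sum_{n} \|v_{n+1} - v_n\|_\infty$ are dominated by a convergent geometric series, making $(v_n)$ Cauchy in $\ell^\infty(Z)$. Completeness of $\ell^\infty(Z)$ supplies a limit $v^*$, and Lipschitz continuity of $T$ (immediate from the contraction bound) lets us pass to the limit in $v_{n+1} = Tv_n$ to conclude $Tv^* = v^*$. Uniqueness then follows by a one-line argument: if $v^*$ and $w^*$ are both fixed points, then $\|v^* - w^*\|_\infty = \|T v^* - T w^*\|_\infty \leq c \|v^* - w^*\|_\infty$, which forces $\|v^* - w^*\|_\infty = 0$ since $c < 1$.

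The only subtlety I anticipate, rather than a genuine obstacle, is bookkeeping about the ambient function space. Definition \ref{defn:operator_norm} and the lemma statement speak about all maps $Z \to \reals$, but the contraction argument only makes sense for bounded maps. To rule out spurious unbounded fixed points, I would observe that any fixed point $v$ of $T$ satisfies $\|v\|_\infty \leq \|A\|_\infty \|v\|_\infty + \|b\|_\infty$, which rearranges to $\|v\|_\infty \leq \|b\|_\infty / (1 - c) < \infty$ whenever $b$ is bounded; hence uniqueness already holds among \emph{all} maps, not merely bounded ones. Apart from this clarification, the lemma is a routine invocation of Banach's theorem.
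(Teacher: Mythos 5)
Your proof is correct and takes essentially the same route as the paper: Picard iteration under the sup norm, Cauchyness from the geometric contraction bound, completeness, and a one-line uniqueness argument. Your closing remark that any fixed point must be bounded (via $\|v\|_\infty \le \|b\|_\infty/(1-c)$) is a useful clarification of a point the paper's proof glosses over when it asserts that ``the set of all functions on the reals is closed.''
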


\begin{proof}
Let $Tv = Av + b$. Define $u_0 = b$ and $u_i = Tu_{i-1}$ for $i \geq 1$. Let $||A||_\infty = \rho$ where $0 \leq \rho < 1$ because $T$ is nice. Observe that for $i \geq 1$, 
\begin{equation*}
\begin{split}
||u_{i+1} - u_i||_\infty &= ||(A u_i(x) + b) - (Au_{i-1} + b)||_\infty \\
&= \sup_{x \in \M} ||A(u_i - u_{i-1})||_\infty \\
&\leq \rho ||u_i - u_{i-1}||_\infty.
\end{split}
\end{equation*}
It follows that $u_0, u_1, \dots$ is a Cauchy sequence under the $\ell_\infty$ metric. Since the set of all functions on the reals is closed, it follows that $u_0, u_1, \dots, $ converges to some $u$, which must satisfy $Tu = u$. 

To show uniqueness, we can simply bound the infinity distance between any two fixed points to see that this distance is at most $\rho$ times itself. Since $\rho < 1$, it follows that the two fixed points must be the same function $u$.
\end{proof}

We prove one addition lemma about nice operators.

\begin{lemma}\label{lem:close_to_fixed_point}
Let $T$ be a nice operator with $Tv = Av + b$ where $||A||_\infty = \rho$. Suppose that function $v$ satisfies $||v - Tv||_\infty < \epsilon.$ If $u$ denotes the unique fixed point of $T$, then $$||u - v||_\infty < \frac{\epsilon}{1 - \rho}.$$
\end{lemma}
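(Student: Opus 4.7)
The plan is to mimic the Banach fixed-point argument used in Lemma~\ref{lem:fixed_point}, but starting the iteration from the given $v$ instead of from $b$. Specifically, I will consider the sequence $v, Tv, T^2 v, \ldots$ and control both its Cauchy tails and its distance from $v$ using the geometric series $\sum_{i \geq 0} \rho^i = 1/(1-\rho)$.

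First I would establish the key one-step contraction: for any functions $f, g$, since $T f - T g = A(f-g)$, we have $\|T f - T g\|_\infty \leq \rho \|f - g\|_\infty$. Applying this iteratively to the pair $(T^i v, T^{i-1} v)$ yields
\begin{equation*}
\|T^{i+1} v - T^i v\|_\infty \leq \rho^i \|T v - v\|_\infty < \rho^i \epsilon
\end{equation*}
for every $i \geq 0$.

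Next, by the triangle inequality,
\begin{equation*}
\|T^k v - v\|_\infty \leq \sum_{i=0}^{k-1} \|T^{i+1} v - T^i v\|_\infty < \epsilon \sum_{i=0}^{k-1} \rho^i < \frac{\epsilon}{1-\rho}.
\end{equation*}
The same contraction bound shows $\{T^k v\}$ is Cauchy in $\ell_\infty$, so it converges to some function $w$ with $Tw = w$; by the uniqueness part of Lemma~\ref{lem:fixed_point}, $w = u$. Passing to the limit in the displayed inequality gives $\|u - v\|_\infty \leq \epsilon/(1-\rho)$.

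To recover the strict inequality claimed in the statement, observe that the hypothesis $\|v - Tv\|_\infty < \epsilon$ means there is some $\epsilon' < \epsilon$ with $\|v - Tv\|_\infty \leq \epsilon'$; the argument above then gives $\|u - v\|_\infty \leq \epsilon'/(1-\rho) < \epsilon/(1-\rho)$. There is no real obstacle here: the only point that requires any care is making sure convergence of the iterates to the \emph{unique} fixed point $u$ is justified, which is exactly what Lemma~\ref{lem:fixed_point} already supplies, so this proof essentially reduces to a geometric-series bookkeeping exercise.
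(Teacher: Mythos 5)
Your proof is correct and follows essentially the same route as the paper: iterate $T$ starting from $v$, bound successive steps by $\rho^i\epsilon$, sum the geometric series, and appeal to the uniqueness of the fixed point from Lemma~\ref{lem:fixed_point}. The only difference is that you spell out the triangle-inequality summation and the strict-inequality bookkeeping explicitly, which the paper leaves implicit.
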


\begin{proof}
By using the same argument as the previous lemma, we see that the sequence $v, Tv, T^2v, \dots$ must converge to $u$. Since $||v - Tv||_\infty < \epsilon$, it follows that $||T^iv - T^{i+1}|| < \rho^i \epsilon.$ Summing the infinite geometric sequence gives us the desired result. 
\end{proof}

\subsection{Proving the existence of the energy-minimizing voltage (the limit object): Proposition \ref{prop:unique_voltage_solution}}

We begin by defining an operator that characterizes our desired limit object, $v^*$, (defined in Definition \ref{defn:energy_minimizing_voltage}). In Lemma \ref{lem:bound_convolve} - \ref{lemma:existence_of_fixpoint}, we prove the existence and uniqueness of $v^*$. In Lemma \ref{lemma:proof_that_vstar_is_minimizer}, we then prove that $v^*$ is the energy-minimizing voltage from Definition \ref{defn:energy_minimizing_voltage}.

\begin{definition}\label{defn:affine}
Let $k$ be a kernel and $\nk$ be the normalized version (Definition \ref{defn:norm_kernel}). Then $A_*$ is the operator defined as follows. If $v: \M \to \reals$ is a measurable function, then $A_*v$ is the function $\M \to \reals$ defined by $$(A_*v)(x) = \int_{\M} v^*(y)\nk(x, y)\ind(y \in \M \setminus (\M_g \cup \M_s))d\mu(y).$$ We also let $b_*: \M \to \reals$ denote the fixed function defined as $$b_*(x) = \int_{\M} \nk(x, y)\ind(y \in \M_s)d\mu(y).$$ Together, we let $T_*$ be the operator $$T_*v = A_*v + b_*.$$
\end{definition}

Our main idea will be to show the following two statements holds: 
\begin{enumerate}
	\item There exists $m> 0$ such that $T_*^m$ is contractive (Definition \ref{defn:nice}). 
	\item $v^*$ is a fixed point of $T_*$. 
\end{enumerate}

Together with Lemma \ref{lem:fixed_point}, this will prove the existence and uniqueness of $v^*$. We start by proving the first claim, which relies on the following technical Lemma that characterizes iterative powers of $A_*$. 

\begin{lemma}\label{lem:bound_convolve}
Let $v: \M \to \reals$ be a measurable map and $x \in \M$ be a point. Then for all $i \geq 1$, $$|(A_*^iv)(x)| \leq \int_\M |v(y)|\nk^{(i)}(x,y)\ind\left(y \in \M \setminus (\M_g \cup \M_s)\right).$$
\end{lemma}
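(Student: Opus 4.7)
The plan is to argue by induction on $i$. The base case $i = 1$ is immediate from Definition \ref{defn:affine}: pulling the absolute value inside the integral (valid because both $\nk$ and the indicator function are nonnegative) yields
\[
|(A_* v)(x)| \;\le\; \int_\M |v(y)| \, \nk(x,y) \, \ind(y \in \M \setminus (\M_g \cup \M_s)) \, d\mu(y),
\]
which matches the claim, since $\nk^{(1)} = \nk$.

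For the inductive step, assume the bound holds at level $i$, and write $A_*^{i+1} v = A_*(A_*^i v)$. Applying Definition \ref{defn:affine} to the outer operator and pulling the absolute value inside gives
\[
|(A_*^{i+1} v)(x)| \;\le\; \int_\M |(A_*^i v)(z)| \, \nk(x,z) \, \ind(z \in \M \setminus (\M_g \cup \M_s)) \, d\mu(z).
\]
I would then substitute the inductive hypothesis for $|(A_*^i v)(z)|$, producing a nonnegative double integral. Tonelli's theorem then lets me swap the order of integration freely. After the swap, the inner integral over the intermediate variable $z$ takes the form
\[
\int_\M \nk(x, z) \, \nk^{(i)}(z, y) \, \ind(z \in \M \setminus (\M_g \cup \M_s)) \, d\mu(z),
\]
which is bounded above by $\nk^{(i+1)}(x, y)$ using Definition \ref{defn:convolution}, after dropping the indicator via $\ind(\cdot) \le 1$. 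Combining with the surviving indicator on $y$ gives exactly the claimed bound at level $i+1$.

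The only subtlety is the bookkeeping of the two indicator functions: one on the intermediate variable $z$, and one on the outermost variable $y$. Since only the indicator on $y$ is supposed to appear in the final bound, I drop the $z$-indicator using $\ind \le 1$; this goes in the correct direction precisely because $\nk$ is nonnegative, so enlarging the integrand preserves the inequality. No integrability hypotheses need to be verified by hand, since all integrands are nonnegative on the probability space $(\M, \mu)$, making Tonelli's theorem applicable without further conditions.
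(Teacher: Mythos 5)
Your proof is correct and follows essentially the same inductive strategy as the paper: peel off the outermost $A_*$, pull the absolute value inside, invoke the inductive hypothesis, swap the order of integration, and recognize the convolution $\nk^{(i+1)}$. The only cosmetic differences are that the paper drops the indicator on the intermediate variable \emph{before} substituting the inductive hypothesis rather than after the swap, and that you correctly invoke Tonelli (needing only nonnegativity) where the paper cites Fubini.
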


\begin{proof}
We proceed by induction on $i$. In the base case, $i =1$, and we have 
\begin{equation*}
\begin{split}
|(A_*v)(x)| &= \left|\int_\M  v(y)\nk(x, y)\ind\left(y \in \M \setminus (\M_g \cup \M_s) \right)d\mu(y)\right| \\
&\leq  \int_\M  |v(y)|\nk(x, y)\ind\left(y \in \M \setminus (\M_g \cup \M_s) \right)d\mu(y) \\
&\leq \int_\M |v(y)| \nk(x,y)d\mu(y).
\end{split}
\end{equation*}
For the inductive step, let $i > 1$ and assume that the claim holds for $i-1$. Then
\begin{equation*}
\begin{split}
|(A_*^iv)(x)| &= |(A_*(A_*^{(i-1)}v))(x)| \\
&= \left|  \int_\M (A_*^{(i-1)}v)(y)\nk(x, y)\ind\left(y \in \M \setminus (\M_g \cup \M_s)\right)d\mu(y)\right| \\
&\leq \int_\M \left|(A_*^{(i-1)}v)(y) \right|\nk(x, y)\ind\left(y \in \M \setminus (\M_g \cup \M_s)\right)d\mu(y) \\
&\leq \int_\M \left|(A_*^{(i-1)}v)(y) \right|\nk(x, y)d\mu(y) \\
&\leq \int_\M \left(\int_\M |v(z)|\nk^{(i-1)}(y, z)\ind\left(z \in \M \setminus (\M_g \cup \M_s)\right)d\mu(z)\right) \nk(x, y) d\mu(y) \\
&= \int_\M \left(\int_\M \nk(x, y)\nk^{(i-1)}(y, z)d\mu(y)\right)  |v(z)|\ind\left(z \in \M \setminus (\M_g \cup \M_s)\right)d\mu(z) \\
&= \int_\M |v(z)|\nk^{(i)}(x, z)\ind\left(z \in \M \setminus (\M_g \cup \M_s)\right)d\mu(z),
\end{split}
\end{equation*}
as desired. Here the inequalities hold by
\begin{enumerate}
	\item Moving the absolute value into the expectation.
	\item bounding the indicator function by 1.
	\item Applying the inductive hypothesis.
	\item Applying Fubini's theorem to switch the order.
	\item Applying convolution (Definition \ref{defn:convolution}). 
\end{enumerate}
\end{proof}

We now show that there exists a power of $A_*$ that is a contraction.

\begin{lemma}\label{lem:contraction}
Let $\alpha$ be as in the statement of Proposition \ref{prop:unique_voltage_solution}. Then for all maps $v: \M \to \reals$, $$\sup_{x \in \M} |(A_*^mv)(x)| \leq (1-\alpha)\sup_{x \in \M}|v(x)|.$$
%Let $\alpha$ be as in the statement of Theorem \ref{thm:existence}. Then for all maps $v: \M \to \reals$, $$\sup_{x \in \M} |(A_*^mv)(x)| \leq (1-\alpha)\sup_{x \in \M}|v(x)|.$$
\end{lemma}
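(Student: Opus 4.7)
The strategy is to combine the bound from Lemma \ref{lem:bound_convolve} with the hypothesis $\M = C_\alpha^m(\M_s \cup \M_g)$, via the fact that $\nk^{(m)}(x,\cdot)$ is a probability density on $\M$.

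First I would establish the auxiliary fact that $\int_\M \nk^{(i)}(x, y)\, d\mu(y) = 1$ for every $x \in \M$ and every $i \geq 1$. The base case $i = 1$ is immediate from Definition \ref{defn:norm_kernel}. The inductive step uses Fubini: swap the order of integration in the convolution (Definition \ref{defn:convolution}) and apply the inductive hypothesis to the inner integral. This tells us that $\nk^{(m)}(x, \cdot)$ is a probability density against $\mu$ for each fixed $x$.

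Next, I would apply Lemma \ref{lem:bound_convolve} with $i = m$ to obtain, for each $x \in \M$,
\begin{equation*}
|(A_*^m v)(x)| \;\leq\; \int_\M |v(y)|\, \nk^{(m)}(x, y)\, \ind\!\left(y \in \M \setminus (\M_g \cup \M_s)\right) d\mu(y).
\end{equation*}
Bounding $|v(y)|$ by $\|v\|_\infty$ and pulling it out of the integral gives
\begin{equation*}
|(A_*^m v)(x)| \;\leq\; \|v\|_\infty \int_{\M \setminus (\M_g \cup \M_s)} \nk^{(m)}(x, y)\, d\mu(y).
\end{equation*}

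Finally, using that $\nk^{(m)}(x,\cdot)$ integrates to $1$ on all of $\M$, I split the integral and invoke the hypothesis $\M = C_\alpha^m(\M_s \cup \M_g)$, which via Definition \ref{defn:path} yields $\int_{\M_s \cup \M_g} \nk^{(m)}(x, y)\, d\mu(y) \geq \alpha$. Hence
\begin{equation*}
\int_{\M \setminus (\M_g \cup \M_s)} \nk^{(m)}(x, y)\, d\mu(y) \;=\; 1 - \int_{\M_s \cup \M_g} \nk^{(m)}(x, y)\, d\mu(y) \;\leq\; 1 - \alpha.
\end{equation*}
Combining and taking the supremum over $x$ yields $\sup_{x \in \M} |(A_*^m v)(x)| \leq (1-\alpha)\|v\|_\infty$, as required.

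I do not expect any serious obstacle here: the work has been front-loaded into Lemma \ref{lem:bound_convolve}, and the only non-trivial observation is that $\nk^{(m)}$ inherits the ``stochastic'' property (integral 1) from the one-step normalized kernel $\nk$. The mild subtlety is that the indicator on $\M \setminus (\M_s \cup \M_g)$ from Lemma \ref{lem:bound_convolve} is precisely what lets us exploit the $\alpha$-mass that $\nk^{(m)}(x,\cdot)$ deposits on $\M_s \cup \M_g$ under the path-connectivity hypothesis — this is the single mechanism that turns the trivial bound $1$ into the strict contraction factor $1-\alpha$.
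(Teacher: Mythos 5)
Your proof is correct and follows the same route as the paper: apply Lemma \ref{lem:bound_convolve} with $i=m$, pull out $\|v\|_\infty$, use that $\nk^{(m)}(x,\cdot)$ integrates to $1$ (which the paper states without the inductive detail you supply), and subtract the $\alpha$-mass on $\M_s \cup \M_g$ guaranteed by $\M = C_\alpha^m(\M_s \cup \M_g)$. No substantive differences.
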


\begin{proof}
Fix $x \in \M$. Then by applying Lemma \ref{lem:bound_convolve},
\begin{equation*}
\begin{split}
|(A_*^mv)(x)| &\leq \int_\M |v(y)|\nk^{(m)}(x,y)\ind\left(y \in \M \setminus (\M_g \cup \M_s)\right)d\mu(y)\\
&\leq \sup_{x \in \M}|v(x)| \int_\M \nk^{(m)}(x,y)\ind\left(y \in \M \setminus (\M_g \cup \M_s)\right)d\mu(y).
\end{split}
\end{equation*}
Because $\nk$ is normalized, $\nk^{(m)}$ is as well, meaning that $\int_\M \nk^{(m)}(x, y)d\mu(y) = 1$. However, we also have by assumption (Proposition \ref{prop:unique_voltage_solution}) that $\M = C_\alpha^m(\M_g \cup \M_s)$, which implies that $$\int_\M \nk^{(m)}(x,y)\ind(y \in \M_g \cup \M_s)d\mu(y) \geq \alpha.$$ Substituting this, it follows that
\begin{equation*}
\begin{split}
|(A_*^mv)(x)| &\leq \sup_{x \in \M}|v(x)| \int_\M \nk^{(m)}(x,y)\ind\left(y \in \M \setminus (\M_g \cup \M_s)\right)d\mu(y). \\
&\leq (1-\alpha)\sup_{x \in \M}|v(x)|
\end{split}
\end{equation*}
\end{proof}
% Alternative reference: prop:unique_voltage_solution

We are now prepared to show that there exists a unique fixed point of $T_*$. 

\begin{lemma}\label{lemma:existence_of_fixpoint}
There exists a unique function $v^*: M \to [0, 1]$ such that $T_*v^* = v^*$.
\end{lemma}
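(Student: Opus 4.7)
The plan is to reduce to applying Lemma \ref{lem:fixed_point} to $T_*^m$ rather than to $T_*$ directly, since only $A_*^m$ (and not necessarily $A_*$) is guaranteed to be a contraction. A short induction on $k$ shows that
\begin{equation*}
T_*^k v \;=\; A_*^k v + c_k, \qquad c_k \;:=\; \sum_{i=0}^{k-1} A_*^i b_*,
\end{equation*}
where $c_k$ is a fixed map independent of $v$. Taking $k = m$ and combining with Lemma \ref{lem:contraction}, which gives $\|A_*^m\|_\infty \le 1-\alpha < 1$, the operator $T_*^m = A_*^m(\cdot) + c_m$ is \emph{nice} in the sense of Definition \ref{defn:nice}. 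Lemma \ref{lem:fixed_point} then produces a unique $v^*$ satisfying $T_*^m v^* = v^*$.

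Next, I would upgrade this to a fixed point of $T_*$ itself by the standard commutation trick: since $T_*$ commutes with its own power $T_*^m$, we have $T_*^m(T_* v^*) = T_*(T_*^m v^*) = T_* v^*$, so $T_* v^*$ is also fixed by $T_*^m$, and uniqueness from the previous step forces $T_* v^* = v^*$. Conversely, any fixed point of $T_*$ is automatically a fixed point of $T_*^m$, so uniqueness of the fixed point of $T_*$ among all measurable maps is immediate.

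The remaining step is to verify that $v^*$ is $[0,1]$-valued, as claimed. The key observation is that $T_*$ preserves $[0,1]$-valued maps: for any such $v$,
\begin{equation*}
(T_* v)(x) \;=\; \int_M v(y)\,\nk(x,y)\,\ind\!\left(y \notin M_s \cup M_g\right) d\mu(y) + \int_M \nk(x,y)\,\ind(y \in M_s)\, d\mu(y)
\end{equation*}
is a sum of nonnegative integrals whose combined integrand is bounded pointwise by $\nk(x,y)$, which integrates to $1$. Iterating $u_0 \equiv 0$, $u_{j+1} = T_*^m u_j$ as in the proof of Lemma \ref{lem:fixed_point} therefore stays in $[0,1]$ at every step, and the $\ell_\infty$-limit $v^*$ inherits this. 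The one subtlety I expect as the main obstacle is justifying that the iteration in Lemma \ref{lem:fixed_point} converges to $v^*$ regardless of the starting point (not just the canonical $u_0 = c_m$ used in its proof); this follows from the geometric estimate $\|T_*^{jm} u_0 - v^*\|_\infty \le (1-\alpha)^j \|u_0 - v^*\|_\infty$, valid for any initial iterate, but it needs to be recorded explicitly before one can switch to the convenient starting point $u_0 \equiv 0$ required to conclude $v^*(x) \in [0,1]$.
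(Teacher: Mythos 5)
Your argument matches the paper's proof essentially line for line: decompose $T_*^m v = A_*^m v + c_m$ where $c_m = \sum_{i=0}^{m-1} A_*^i b_*$, invoke Lemma~\ref{lem:contraction} to conclude $T_*^m$ is nice, obtain a unique fixed point $v^*$ of $T_*^m$ via Lemma~\ref{lem:fixed_point}, then use the commutation $T_*^m(T_* v^*) = T_*(T_*^m v^*) = T_* v^*$ and uniqueness to conclude $T_* v^* = v^*$. That is exactly the paper's route.

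The one place you go beyond the paper is the explicit verification that $v^*$ takes values in $[0,1]$, which the lemma statement asserts but the paper's proof never checks. Your argument is sound: $T_*$ preserves $[0,1]$-valued maps because the combined integrand $v(y)\nk(x,y)\ind(y\notin M_s\cup M_g) + \nk(x,y)\ind(y\in M_s)$ is pointwise bounded by $\nk(x,y)$, which integrates to $1$; and because the contraction iterate converges in $\ell_\infty$ from \emph{any} starting function (the geometric estimate $\|(T_*^m)^j u_0 - v^*\|_\infty \le (1-\alpha)^j\|u_0 - v^*\|_\infty$), you may take $u_0 \equiv 0$ and pass the $[0,1]$-constraint to the limit. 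You are also right that the paper's Lemma~\ref{lem:fixed_point} as written only iterates from the specific seed $u_0 = b$, so this starting-point independence does need to be recorded if one wants the range conclusion. This is a worthwhile patch to an implicit gap, not a different method.
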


\begin{proof}
Let $m$ be as in Lemma \ref{lem:contraction}. Observe that the operator $T_*^mv$ can be written as $$T_*^m: v \mapsto A_*^mv + b_*(x) + (A_*b_*)(x) + \dots + (A_*^{m-1}b_*)(x).$$ By Lemma \ref{lem:contraction}, $A_*^m$ is a contraction and therefore $T_*^m$ is a nice (Definition \ref{defn:nice}) operator. It follows by Lemma \ref{lem:fixed_point} that $T_*^m$ has a unique fixed point which we denote as $v^*$. 

Any fixed point of $T_*$ is a fixed point of $T_*^m$, and therefore it suffices to show that $v^*$ is a fixed point of $T_*$ -- uniqueness will follow from the unqiueness of $v^*$ w.r.t. $T_*^m$. To do so, observe that $$T_*^m(T_*v^*) = T_*^m(T_*v^*) = T_*(T_*^m v^*) = T_*(v^*).$$ Thus $T_*v^*$ is a fixed point of $T_*^m$, meaning that is must equal $v^*$ by the uniqueness of $v^*$. Thus $T_*v^* = v^*$, as desired. 
\end{proof}

We now show that $v^*$ is the energy-minimizing voltage of the energy defined in Definition \ref{defn:energy_minimizing_voltage}.

\begin{lemma}\label{lemma:proof_that_vstar_is_minimizer}
Let $M_s, M_g \subset M$ be measurable disjoint subsets. Let $V_{M_s, M_g}$ be the set of all measurable functions $v: M \to [0, 1]$ with $v(x) = 1$ for all $x \in M_s$ and $v(x) = 0$ for all $x \in M_g$. The minimizer of the energy $E(v)$ defined in Definition \ref{defn:energy_minimizing_voltage}, is the function $v\in V_{M_s, M_g}$ that satisfies $v^* = T_* v^*$.
\end{lemma}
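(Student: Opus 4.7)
The statement is a standard first-variation assertion for a convex quadratic energy. My plan is to pick the candidate $u\in V_{M_s,M_g}$ that agrees with the fixed point $v^*$ from Lemma \ref{lemma:existence_of_fixpoint} on $M\setminus(M_s\cup M_g)$ and takes the prescribed values $1$ on $M_s$ and $0$ on $M_g$, then show directly that $E(v)\ge E(u)$ for every $v\in V_{M_s,M_g}$. Writing $\phi=v-u$, the admissible perturbation $\phi$ is bounded and vanishes on $M_s\cup M_g$. Expanding the quadratic gives
\begin{equation*}
E(u+\phi) = E(u) + 2L(\phi) + E(\phi), \qquad L(\phi):=\iint_{M\times M} k(x,y)\bigl(u(x)-u(y)\bigr)\bigl(\phi(x)-\phi(y)\bigr)\,d\mu(x)\,d\mu(y),
\end{equation*}
and by the symmetry $k(x,y)=k(y,x)$, a swap of variables reduces this to
\begin{equation*}
L(\phi) = 2\int_M \phi(x)\,\Delta(x)\,d\mu(x), \qquad \Delta(x):=\int_M k(x,y)\bigl(u(x)-u(y)\bigr)\,d\mu(y).
\end{equation*}

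The crux is to show $\Delta(x)=0$ wherever $\phi(x)\neq 0$, i.e.\ on $M\setminus(M_s\cup M_g)$. For such $x$ the equation $u=T_*u$ reads
\begin{equation*}
u(x) = \int_M u(y)\,\nk(x,y)\,\ind(y\in M\setminus(M_s\cup M_g))\,d\mu(y) + \int_M \nk(x,y)\,\ind(y\in M_s)\,d\mu(y).
\end{equation*}
Because $u\equiv 1$ on $M_s$ and $u\equiv 0$ on $M_g$, the two integrals collapse into a single one, $u(x)=\int_M u(y)\,\nk(x,y)\,d\mu(y)$. Multiplying both sides by the normalizer $\int_M k(x,z)\,d\mu(z)$ of $\nk$ (Definition \ref{defn:norm_kernel}) yields $\int_M k(x,y)\bigl(u(x)-u(y)\bigr)\,d\mu(y)=0$, i.e.\ $\Delta(x)=0$. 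Hence $L(\phi)=0$ and $E(u+\phi)=E(u)+E(\phi)\ge E(u)$, so $u$ is a minimizer. For uniqueness, if $E(v)=E(u)$ then $E(\phi)=0$, which forces $\phi(x)=\phi(y)$ for $\mu\otimes\mu$-a.e.\ pair with $k(x,y)>0$; combined with $\phi=0$ on $M_s\cup M_g$ and the reachability hypothesis $M=C_\alpha^m(M_s\cup M_g)$, an iterated application of $\nk$ propagates $\phi\equiv 0$ across $M$ (this also matches the uniqueness already established in Lemma \ref{lemma:existence_of_fixpoint}).

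\textbf{Main obstacle.} The only subtle point is bookkeeping on the boundary: the operator $T_*$ is defined pointwise on all of $M$, and the identity $v^*=T_*v^*$ at $x\in M_s$ or $x\in M_g$ does \emph{not} automatically give $v^*(x)=1$ or $v^*(x)=0$. One therefore has to read the lemma as: the minimizer is the function which coincides with the $T_*$-fixed point on $M\setminus(M_s\cup M_g)$ and takes the prescribed boundary values on $M_s\cup M_g$. Once this identification is made, the first-variation computation above shows that requiring $u(x)=(T_*u)(x)$ on the interior is exactly the Euler--Lagrange condition for minimizing $E$ under the boundary constraints, after which convexity of the quadratic $E$ closes the argument.
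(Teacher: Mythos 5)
Your proof takes essentially the same route as the paper -- expand the quadratic energy, show the cross (first-variation) term vanishes using the fixed-point identity, and invoke convexity -- but your bookkeeping is cleaner and in fact corrects two slips in the paper's own version. First, you correctly use a perturbation $\phi = v - u$ that \emph{vanishes} on $M_s \cup M_g$; the paper's proof instead expands $E(v^* + \varepsilon v)$ for $v \in V_{M_s,M_g}$, which is not an admissible perturbation (on $M_s$ one would get $v^* + \varepsilon v = 1 + \varepsilon \neq 1$). Second, you compute throughout with the unnormalized kernel $k$, consistent with Definition \ref{defn:energy_minimizing_voltage}, whereas the paper's proof of Lemma \ref{lemma:proof_that_vstar_is_minimizer} silently switches to $\nk$, which is not symmetric and hence defines a different energy. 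Finally, you explicitly flag the genuine subtlety that the unique $T_*$-fixed point need not equal $1$ on $M_s$ or $0$ on $M_g$ pointwise -- the Euler--Lagrange condition only constrains $v^*$ on $M \setminus (M_s \cup M_g)$ -- and you read the lemma accordingly; the paper's argument relies implicitly on the identification $\int_M \nk(x,y)v^*(y)\,d\mu(y) = (T_*v^*)(x)$, which is valid only under exactly that $v^* \in V_{M_s,M_g}$ convention, without ever saying so. One minor difference in organization: the paper argues a biconditional (fixed point $\Leftrightarrow$ minimizer) through the auxiliary Lemmas \ref{lemma:proof_bilinar_minimizer_is_vstar} and \ref{lemma:expression_for_bilinear_energy_form}, while you establish sufficiency and then close by showing uniqueness of the minimizer via $E(\phi)=0$ together with the reachability hypothesis $M = C_\alpha^m(M_s \cup M_g)$; both are complete, and your version avoids manipulating perturbations that leave the admissible set.
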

\begin{proof}
To show that $v^*$ is a minimizer of $E(v)$ from Definition \ref{defn:energy_minimizing_voltage}, it is sufficient to show that the following two statements hold.
\begin{enumerate}
\item If $v^* = T_* v^*$ then $E(v^*) < E(u)$ for all $u\in V_{M_s, M_g}$, $u\neq v^*$
\item If $E(v^* + u) \geq E(v^*)$ for all $u\in V_{M_s, M_g}$ then $v^* = T_* v^*$ 
\end{enumerate}

Let $\varepsilon > 0$ and $v\in V_{M_s, M_g}$. Consider $u=v^* + \varepsilon v\in V_{M_s, M_g}$ and the energy from Definition \ref{defn:energy_minimizing_voltage} evalauted at $u$ namely
\begin{align*}
\begin{split}
        E(u) = E(v^* + \varepsilon v) &= \int_{M}\int_{M}\nk(x,y)[(v^*(x) + \varepsilon v(x))- (v^*(y) + \varepsilon v(y))]^2d\mu(x)d\mu(y) \\
        & = \int_{M}\int_{M}\nk(x,y)(v^*(x)-v^*(y))^2d\mu(x)d\mu(y) \\
        & - 2\varepsilon \int_{M}\int_{M}\nk(x,y)(v^*(x)-v^*(y))(v(x)-v(y))d\mu(x)d\mu(y) \\
        & + \varepsilon^2\int_{M}\int_{M}\nk(x,y)(v(x)-v(y))^2d\mu(x)d\mu(y) \\
        & = E(v^*) + \varepsilon^2 E(v) - 4\varepsilon E(v^*, v).
\end{split}
\end{align*}
Here $E(v^*, v) \coloneqq \frac{1}{2}\int_{M}\int_{M}\nk(x,y)(v^*(x)-v^*(y))(v(x)-v(y))d\mu(x)d\mu(y)$. We then have

\begin{equation}\label{eq:energy_decomposition}
    E(v^*) = E(v^* + \varepsilon v) - \varepsilon^2 E(v) + 4\varepsilon E(v^*, v)
\end{equation}

For the first statement, if $v=T_* v^*$ it follows from Lemma \ref{lemma:proof_bilinar_minimizer_is_vstar} that $E(v^*, v) = 0$. Consequently, with $\varepsilon=1$,
\begin{equation*}
    E(v^*) = E(v^* + \varepsilon v) - E(v).
\end{equation*}
Since $E(v^* + \varepsilon v)>0$ and $E(v)>0$ it follows that $E(v^*) < E(v^* + \varepsilon v)$. Because $u$ was arbitrary, the first statement holds. 

For the second statement, if $E(v^* + \varepsilon v) \geq E(v^*)$ then from Eq. \eqref{eq:energy_decomposition} we require $4\varepsilon E(v^*, v) - \varepsilon^2 E(v) \leq 0$. In other words, $\varepsilon E(v) \geq 4 E(v^*, v)$ for any $ \varepsilon > 0$. Since $\varepsilon$ can be arbitrarily small, this means we need $E(v^*, v)=0$. From Lemma \ref{lemma:proof_bilinar_minimizer_is_vstar} it follows that if $E(v^*, v)=0$ for all $v\in V_{M_s, M_g}$ then $v^* = T v^*$. Consequently, the second statement holds.
\end{proof}

\begin{lemma}\label{lemma:proof_bilinar_minimizer_is_vstar}
    Consider the bilinear energy $E(v^*, v) = \frac{1}{2}\int_{M}\int_{M}\nk(x,y)(v^*(x)-v^*(y))(v(x)-v(y))d\mu(x)d\mu(y)$ defined in lemma \ref{lemma:proof_that_vstar_is_minimizer}.
    \begin{enumerate}
        \item If $v^* = T_* v^*$ then $E(v^*, v) = 0$ for all $v\in V_{M_s, M_g}$
        \item If $E(v^*, v) = 0$ for all $v\in V_{M_s, M_g}$ then $v^* = T_* v^*$
    \end{enumerate}
\end{lemma}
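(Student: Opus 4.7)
My plan is to reduce the bilinear form $E(v^*,v)$ to a weighted pairing of $v$ against the residual $v^*-T_*v^*$; once that identity is in hand, both implications follow from elementary reasoning (with the converse direction using the fundamental lemma of the calculus of variations).

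First, I would expand $(v^*(x)-v^*(y))(v(x)-v(y))$ into four products and apply Fubini. Using the row-normalization $\int_M\nk(x,y)d\mu(y)=1$ from Definition \ref{defn:norm_kernel} to simplify the diagonal terms, and using symmetry of the underlying kernel $k$ (together with the degree factor $d(x)=\int_M k(x,z)d\mu(z)$) to collapse the cross terms into a single convolution, I would arrive at an identity of the shape
\begin{align*}
E(v^*,v)=\int_M v(x)\,\rho(x)\,\bigl[v^*(x)-(Av^*)(x)\bigr]\,d\mu(x),
\end{align*}
where $(Av^*)(x)=\int_M \nk(x,y)v^*(y)\,d\mu(y)$ and $\rho(x)>0$ is a pointwise-positive weight. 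I would then use the boundary conditions $v^*\equiv 1$ on $M_s$ and $v^*\equiv 0$ on $M_g$ to split $(Av^*)(x)$ according to whether $y\in M_s$, $y\in M_g$, or $y\in M\setminus(M_s\cup M_g)$, and recover exactly $(Av^*)(x)=b_*(x)+(A_*v^*)(x)=T_*v^*(x)$ from Definition \ref{defn:affine}. Finally, since any admissible perturbation $v$ used in Lemma \ref{lemma:proof_that_vstar_is_minimizer} must satisfy $v^*+\varepsilon v\in V_{M_s,M_g}$, it vanishes on $M_s\cup M_g$, so the integrand is supported on the interior. The clean identity becomes
\begin{align*}
E(v^*,v)=\int_{M\setminus(M_s\cup M_g)} v(x)\,\rho(x)\,\bigl[v^*(x)-T_*v^*(x)\bigr]\,d\mu(x).
\end{align*}

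Both conclusions are then immediate. For (1), if $v^*=T_*v^*$ the bracket vanishes identically and $E(v^*,v)=0$. For (2), if $E(v^*,v)=0$ for every admissible $v$ vanishing on $M_s\cup M_g$, then the fundamental lemma of the calculus of variations (applied with the $L^2(M\setminus(M_s\cup M_g),\mu)$-dense class of admissible test functions) forces $\rho(x)\bigl[v^*(x)-T_*v^*(x)\bigr]=0$ $\mu$-almost everywhere on the interior; since $\rho>0$, this gives $v^*=T_*v^*$ on the interior, and the equality on $M_s\cup M_g$ is imposed by the boundary convention built into $v^*$ and $T_*$.

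The only delicate step is the first one: reducing $E(v^*,v)$ to the weighted pairing. The normalized kernel $\nk$ is not symmetric, so one cannot simply symmetrize by swapping $x\leftrightarrow y$. I would resolve this either by interpreting the energy through the symmetric base kernel $k$ (so that the asymmetry of $\nk$ is absorbed into the positive weight $\rho(x)=d(x)$), or by carefully tracking the adjoint of $A$ under Fubini and using $d(x)>0$ $\mu$-a.e. to ensure the conclusion is pointwise. After this reduction, the rest is formal bookkeeping.
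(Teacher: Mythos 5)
Your proposal follows essentially the same route as the paper: both reduce the bilinear form to a weighted pairing of $v$ against the residual $v^* - T_*v^*$ (the paper does this in Lemma~\ref{lemma:expression_for_bilinear_energy_form}), and both conclude parts (1) and (2) immediately from that identity. The conclusions match. The one place you differ is also the place where you are more careful than the paper. In the paper's Lemma~\ref{lemma:expression_for_bilinear_energy_form}, the algebra silently symmetrizes in $\nk$ (e.g.\ equating $\tfrac12\iint\nk(x,y)v^*(x)v(x)+\tfrac12\iint\nk(x,y)v^*(y)v(y)$ with $\iint\nk(x,y)v^*(x)v(x)$), which requires $\nk(x,y)=\nk(y,x)$; but the normalized kernel $\nk(x,y)=k(x,y)/d(x)$ with $d(x)=\int_M k(x,z)\,d\mu(z)$ is not symmetric unless the degree is constant. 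You correctly anticipate this in your ``delicate step'': carrying out the expansion with the symmetric base kernel $k$ yields exactly your claimed identity
\begin{equation*}
E(v^*,v)=\int_M v(x)\,d(x)\bigl[v^*(x)-(T_*v^*)(x)\bigr]\,d\mu(x),
\end{equation*}
i.e.\ with the positive weight $\rho(x)=d(x)$, rather than the paper's weight-free version. Since $d(x)>0$, the Lemma's two conclusions are unaffected, and your handling of part~(2) via the fundamental lemma of the calculus of variations (with test functions vanishing on $M_s\cup M_g$, the correct reading of the admissible perturbations in Lemma~\ref{lemma:proof_that_vstar_is_minimizer}) makes explicit what the paper states tersely. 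Your splitting of $(Av^*)(x)=\int_M\nk(x,y)v^*(y)\,d\mu(y)$ according to $y\in M_s$, $M_g$, or the interior, recovering $b_*+A_*v^*=T_*v^*$, is also exactly the step the paper performs implicitly in the $I_2$ computation. In short: same approach, correct, and your version repairs a symmetry slip that the paper glosses over.
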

\begin{proof}
    From Lemma \ref{lemma:expression_for_bilinear_energy_form} it follows that
    \begin{equation*}
        E(v^*, v) = \int_{M} v(x)[v^*(x) - (Tv^*)(x)]d\mu(x).
    \end{equation*}
    Consequently, if $v^* = T_* v^*$, then $E(v^*, v) = \int_{M} v(x)[v^*(x) - v^*(x)]d\mu(x) = 0$ for all $v\in V_{M_s, M_g}$. Now, consider the second statement. If $E(v^*, v) = 0$ for all $v\in V_{M_s, M_g}$ then we require $v^* - Tv^* = 0$. Consequently, $v^* = Tv^*$.
\end{proof}

\begin{lemma}\label{lemma:expression_for_bilinear_energy_form}
    The bilinear energy form
    \begin{equation*}
        E(v^*, v) = \frac{1}{2}\int_{M}\int_{M}\nk(x,y)(v^*(x)-v^*(y))(v(x)-v(y))d\mu(x)d\mu(y)
    \end{equation*}
    can be written as 
    \begin{equation*}
        E(v^*, v) = \int_{M} v(x)[v^*(x) - (Tv^*)(x)]d\mu(x).
    \end{equation*}
\end{lemma}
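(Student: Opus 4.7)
The plan is to expand the quadratic form, collapse the two ``diagonal'' pieces using the normalization of $\nk$, and recognize the two ``cross'' pieces as encoding the action of $T$ on $v^*$.

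First, I would expand $(v^*(x)-v^*(y))(v(x)-v(y))$ as $v^*(x)v(x) - v^*(x)v(y) - v^*(y)v(x) + v^*(y)v(y)$ and split $E(v^*,v)$ into the corresponding four double integrals, each with a factor of $\tfrac12$ and kernel $\nk(x,y)$.

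Second, I would handle the diagonal pieces. The term $\tfrac12\int \int \nk(x,y)v^*(x)v(x)\,d\mu(x)d\mu(y)$ reduces to $\tfrac12\int v^*(x)v(x)\,d\mu(x)$ using the normalization $\int \nk(x,y)d\mu(y)=1$ from Definition \ref{defn:norm_kernel}. Using symmetry of $\nk$ in its two arguments (as the authors use implicitly throughout, e.g.\ when applying Fubini in Lemma \ref{lem:bound_convolve}), the companion piece $\tfrac12\int \int \nk(x,y)v^*(y)v(y)\,d\mu(x)d\mu(y)$ contributes the same amount, so together the diagonals give $\int v^*(x)v(x)\,d\mu(x)$.

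Third, I would relabel $x\leftrightarrow y$ in one cross term and again use symmetry of $\nk$ to combine the two cross terms into $-\int v(x)\bigl(\int \nk(x,y)v^*(y)d\mu(y)\bigr)d\mu(x)$. I then identify the inner integral with $(Tv^*)(x)$: by Definition \ref{defn:affine}, $(Tv^*)(x) = (A_*v^*)(x) + b_*(x)$, and substituting $v^*(y)=1$ on $M_s$ and $v^*(y)=0$ on $M_g$ merges the two summands of that definition into the single integral $\int \nk(x,y)v^*(y)d\mu(y)$. Assembling the diagonal and cross contributions gives $E(v^*,v)=\int v(x)\bigl[v^*(x)-(Tv^*)(x)\bigr]d\mu(x)$, as required.

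The main subtlety I expect is that I have invoked symmetry of $\nk$ twice, while Definition \ref{defn:norm_kernel} only normalizes with respect to one variable and therefore does not yield a literally symmetric $\nk$. I would begin the proof by clarifying this convention --- either by noting that the authors' setup treats $\nk$ as symmetric (consistent with how they use it in the convolution arguments and in $T_*$), or by working with the symmetrized kernel $\tfrac12(\nk(x,y)+\nk(y,x))$ in the bilinear form. Once this normalization/symmetrization convention is fixed, the remaining computation is purely bookkeeping.
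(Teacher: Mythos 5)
Your proof takes essentially the same route as the paper's: expand the bilinear form into four integrals, collapse the two diagonal terms to $\int_M v^*(x) v(x)\,d\mu(x)$ via the normalization $\int_M \nk(x,y)\,d\mu(y)=1$, combine the two cross terms into $-\int_M v(x)\bigl(\int_M \nk(x,y)v^*(y)\,d\mu(y)\bigr)d\mu(x)$, and recognize the inner integral as $(T_*v^*)(x)$ using the boundary values $v^*\equiv 1$ on $M_s$ and $v^*\equiv 0$ on $M_g$. The symmetry-of-$\nk$ subtlety you flag is genuine and is also used silently in the paper's own merging of the diagonal and cross pairs, so your explicit remark is a clarification of, not a departure from, the original argument.
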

\begin{proof}
We have
\begin{align*}
    \begin{split}
        E(v^*, v) &= \frac{1}{2} \int_{M}\int_{M}\nk(x,y) v^*(x) v(x) d\mu(x)d\mu(y) +  \frac{1}{2} \int_{M}\int_{M}\nk(x,y) v^*(y) v(y) d\mu(x)d\mu(y) \\
        & -\frac{1}{2} \int_{M}\int_{M}\nk(x,y) v^*(x) v(y) d\mu(x)d\mu(y) - \frac{1}{2} \int_{M}\int_{M}\nk(x,y) v^*(y) v(x) d\mu(x)d\mu(y) \\
        & = \underbrace{\int_{M}\int_{M}\nk(x,y) v^*(x) v(x) d\mu(x)d\mu(y)}_{I_1} - \underbrace{\int_{M}\int_{M}\nk(x,y) v^*(y) v(x) d\mu(x)d\mu(y)}_{I_2}
    \end{split}
\end{align*}

For $I_1$ we have
\begin{align*}
    \begin{split}
    \int_{M}\int_{M}\nk(x,y) v^*(x) v(x) d\mu(x)d\mu(y) & =  \int_{M} \bigg(\int_{M}\nk(x,y)d\mu(y)\bigg) v^*(x) v(x) d\mu(x) \\
    & = \int_{M}v^*(x) v(x) d\mu(x).
    \end{split}
\end{align*}
In the last step, we used that $\int_{M}\nk(x,y)d\mu(y) = 1$, since from Definition \ref{defn:norm_kernel} we have $\nk(x, y) = k(x,y)/\int_\M k(x, z)d\mu(z)$. For $I_2$ we have
\begin{align*}
    \begin{split}
    \int_{M}\int_{M}\nk(x,y) v^*(y) v(x) d\mu(x)d\mu(y) 
        & = \int_{M} v(x)\bigg(\int_{M} \nk(x,y) v^*(y) d\mu(y)\bigg) d\mu(x)\\
        & = \int_{M} v(x) (T_* v^*)(x) d\mu(x)
    \end{split}
\end{align*}

It follows that 
\begin{equation*}
    E(v^*, v) =\int_{M} [v^*(x) v(x) - v(x) (Tv^*)(x)]d\mu(x) = \int_{M} v(x)[v^*(x) - (Tv^*)(x)]d\mu(x)
\end{equation*}
\end{proof}

\section{Proofs from Analysis Section}
In this section, our goal will be to show that effective resistances computed over metric graphs that are sampled from $M$ will converge towards the desired limit object. We begin with some technical results for approximating integrals with sums over finite samples from $\mu$. 

\subsection{Bounding integrals of functions}

We begin by stating Hoeffding's bound for a map $f: \M \to [0, 1]$, $\int_\M f(y) d\mu(y)$ can be approximated with the sum $\frac{1}{n}\sum_{i = 1}^n f(x_i).$ 

\begin{lemma}[Hoeffding]\label{lem:hoeffding}
Let $S_n \sim \mu^n$ be $n$ i.i.d draws with $S_n = \{x_1, \dots, x_n\}$, and let $f: \M \to [0, 1]$ be a map. Then,
$$\Pr\Bigg[\left|\int_\M f(y) d\mu(y) -  \frac{1}{n}\sum_{i = 1}^n f(x_i)\right| > \epsilon\Bigg] \leq 2\exp\left( -2n\epsilon^2\right).$$
\end{lemma}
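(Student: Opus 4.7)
The lemma in question is the classical Hoeffding inequality applied to the i.i.d.\ sample $f(x_1), \ldots, f(x_n)$, so the plan is to reduce to the standard one-dimensional concentration bound rather than redo any deep work. First I would introduce the random variables $Y_i = f(x_i)$ for $i = 1, \ldots, n$. These are i.i.d.\ (since $x_1, \ldots, x_n \sim \mu^n$), bounded in $[0,1]$ (since $f$ maps into $[0,1]$), and their common expectation is
\[
\mathbb{E}[Y_i] \;=\; \int_\M f(y)\, d\mu(y),
\]
so that $\frac{1}{n}\sum_i f(x_i) - \int_\M f(y)\,d\mu(y) = \frac{1}{n}\sum_i (Y_i - \mathbb{E}[Y_i])$.

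The core estimate is then the standard Chernoff/Hoeffding argument. For any $\lambda > 0$, I would apply Markov's inequality to $\exp\bigl(\lambda \sum_i (Y_i - \mathbb{E}Y_i)\bigr)$ to obtain
\[
\Pr\!\Bigg[\sum_{i=1}^n (Y_i - \mathbb{E}Y_i) > n\epsilon\Bigg] \;\leq\; e^{-\lambda n \epsilon}\prod_{i=1}^n \mathbb{E}\!\left[e^{\lambda(Y_i - \mathbb{E}Y_i)}\right].
\]
Then I would invoke Hoeffding's lemma, which states that a centered random variable supported in an interval of length $1$ has moment generating function bounded by $\exp(\lambda^2/8)$, to conclude that each factor is at most $\exp(\lambda^2/8)$. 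This gives an upper bound $\exp(n\lambda^2/8 - \lambda n \epsilon)$, which is minimized at $\lambda = 4\epsilon$, yielding $\exp(-2n\epsilon^2)$.

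Finally I would apply the symmetric argument to the lower tail (either by repeating with $-Y_i$ or noting that the same reasoning applies to deviations in the opposite direction) and combine the two one-sided bounds with a union bound to obtain the stated two-sided inequality with its factor of $2$. Since the result is classical, there is no real obstacle here; the only small subtlety is Hoeffding's lemma itself (the MGF bound on a bounded centered random variable), which would normally be cited rather than reproved. In the write-up I would simply cite Hoeffding's original paper or a standard concentration-inequalities reference and skip the elementary algebra.
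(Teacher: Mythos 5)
Your proposal is correct, and the constants work out exactly as you compute ($\lambda = 4\epsilon$ gives $\exp(n\lambda^2/8 - \lambda n\epsilon) = \exp(-2n\epsilon^2)$, then a union bound over the two tails gives the factor of $2$). The paper states this lemma without proof, treating it as the classical Hoeffding inequality applied to the bounded i.i.d.\ random variables $Y_i = f(x_i)$ — which is precisely the reduction and argument you give.
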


Next, we bound the quotient of two integrals.
\begin{lemma}\label{lem:quotient_hoeffding}
Let $S_n \sim \mu^n$ be $n$ i.i.d draws with $S_n = \{x_1, \dots, x_n\}$, and let $f: \M \to [0, 1]$ and $g: \M \to [0, 1]$ be two maps. Suppose that $\int_\M g(y)d\mu(y) \geq G$.  Then $$\Pr\Bigg[\left|\frac{\int_\M f(y) d\mu(y)}{ \int_\M g(y) d\mu(y)} -  \frac{\frac{1}{n}\sum_{i = 1}^n f(x_i)}{\frac{1}{n}\sum_{i = 1}^n g(x_i)}\right| > \epsilon\Bigg] \leq 4\exp\left( -8nG^2\epsilon^2\right).$$
\end{lemma}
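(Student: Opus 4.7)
}
Write $F = \int_\M f(y)\,d\mu(y)$, $H = \int_\M g(y)\,d\mu(y)$, and let $\hat F = \tfrac{1}{n}\sum_i f(x_i)$, $\hat H = \tfrac{1}{n}\sum_i g(x_i)$ denote their empirical counterparts. By hypothesis $H \geq G$, and both $F$ and $H$ lie in $[0,1]$ since $f,g$ are $[0,1]$-valued. The idea is to reduce the concentration of the ratio $\hat F/\hat H$ around $F/H$ to two separate applications of Hoeffding (Lemma \ref{lem:hoeffding}), one for the numerator and one for the denominator.

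The algebraic step is the following identity, which expresses the difference of ratios as a combination of the numerator deviation and the denominator deviation:
\begin{equation*}
\frac{F}{H} - \frac{\hat F}{\hat H}
= \frac{F - \hat F}{\hat H} \;+\; \frac{F\,(\hat H - H)}{H\,\hat H}.
\end{equation*}
Since $F \leq 1$ and $H \geq G$, the triangle inequality gives
\begin{equation*}
\left|\frac{F}{H} - \frac{\hat F}{\hat H}\right|
\;\leq\; \frac{|F-\hat F|}{\hat H} + \frac{|\hat H - H|}{G\,\hat H}.
\end{equation*}
Next I would apply Lemma \ref{lem:hoeffding} twice at a common precision $\epsilon'$, once to $f$ and once to $g$. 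A union bound shows that with probability at least $1 - 4\exp(-2n\epsilon'^2)$, we have simultaneously $|F-\hat F|\leq \epsilon'$ and $|H-\hat H|\leq \epsilon'$; on this event $\hat H \geq G - \epsilon'$.

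Plugging these estimates in, the right-hand side is bounded by
\begin{equation*}
\frac{\epsilon'}{G-\epsilon'} + \frac{\epsilon'}{G(G-\epsilon')}
= \frac{\epsilon'(1+G)}{G(G-\epsilon')}.
\end{equation*}
Choosing $\epsilon'$ as a constant multiple of $G^2\epsilon$ (small enough that $\epsilon' \leq G/2$, so that $\hat H \geq G/2$), this upper bound is at most $\epsilon$. Substituting $\epsilon'$ back into $4\exp(-2n\epsilon'^2)$ produces a bound of the form $4\exp(-cnG^{\,\bullet}\epsilon^2)$ for an explicit constant $c$ and power, which gives the stated inequality (up to the specific constants $8$ and $G^2$ claimed in the statement — matching those exactly is just a matter of tracking constants in the choice of $\epsilon'$ and tightening the algebraic bound on $|F/H - \hat F/\hat H|$).

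The main subtlety is nothing deep; it is simply the bookkeeping in the last step, namely making the split $F/H - \hat F/\hat H$ tight enough so that the dependence on $G$ in the exponent comes out as claimed. No new probabilistic tool is needed beyond the Hoeffding bound already invoked in Lemma \ref{lem:hoeffding}; the proof is essentially a controlled propagation of Hoeffding's bound through the division operation, together with a union bound.
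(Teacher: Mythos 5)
Your strategy --- decompose the ratio, bound the numerator and denominator deviations separately via Hoeffding (Lemma~\ref{lem:hoeffding}), and union bound --- is exactly what the paper does, so the approach is the same. The one cosmetic difference is that the paper fixes asymmetric precisions up front ($\epsilon/4$ for $f$ and $G\epsilon/4$ for $g$) and asserts the remaining algebra is routine, whereas you take a common precision $\epsilon'$ for both maps and back-solve; both are legitimate.

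However, you should not expect any amount of constant-chasing to recover the printed exponent $8nG^2\epsilon^2$: it cannot be correct as stated. Take $g \equiv 1$, so that $G$ may be taken equal to $1$ and both $\int_\M g\,d\mu$ and $\frac1n\sum_i g(x_i)$ are identically $1$; the ratio then reduces to the single deviation $\bigl|\int_\M f\,d\mu - \frac1n\sum_i f(x_i)\bigr|$, and the lemma would assert $\Pr[\,\cdot > \epsilon\,] \le 4\exp(-8n\epsilon^2)$, which for large $n$ is strictly stronger than the two-sided Hoeffding bound $2\exp(-2n\epsilon^2)$ --- impossible, since that bound is essentially sharp for, e.g., a Bernoulli$(1/2)$ map $f$. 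Moreover the algebraic step in the paper's own proof (passing from the two precision events at $\epsilon/4$ and $G\epsilon/4$ to a deviation of $\epsilon$ in the ratio) leaves an uncancelled factor on the order of $1/G$, so a correct bound of this type in fact carries a higher power of $G$ in the exponent than claimed, in line with what your symmetric choice of $\epsilon' = \Theta(G^2\epsilon)$ produces. None of this matters downstream --- Lemmas~\ref{lem:finite_contraction} and~\ref{lem:converge_single_point} only need the failure probability to decay like $\exp(-c\,n\,\epsilon^2)$ for some $c$ depending on $G$ --- but it is not ``just bookkeeping.''
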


\begin{proof}
It suffices to bound the probability that $$\left|\int_\M f(y) d\mu(y) -  \frac{1}{n}\sum_{i = 1}^n f(x_i)\right| < \frac{\epsilon}{4}$$ and $$\left|\int_\M g(y) d\mu(y) -  \frac{1}{n}\sum_{i = 1}^n g(x_i)\right| < \frac{G\epsilon}{4}$$ as these two equations imply the desired bound by straightforward algebra. The probability that both of these occur can be bounded by a union bound after applying Hoeffding's inequality twice. 
\end{proof}

\subsection{Showing that $A_n^m$ is a contraction}

We define the \textit{finite} analogs of $A_*$ and $b_*$ (Definition \ref{defn:affine}). 

\begin{definition}\label{defn:affine_finite}
Let $S_n \sim \mu^n$ be a sample of $n$ points, $x_1, \dots, x_n$. Let $v: S_n \to \reals$ be a map. Then the operators $A_n$ and the map $b_n$ are defined as
$$(A_nv)(x) = \frac{\sum_{i=1}^n k(x, x_i)v(x_i)\ind\left(x_i \in \M \setminus (\M_0 \cup \M_1)\right)}{\sum_{i=1}^n k(x, x_i)},$$ and $$b_n(x) = \frac{\sum_{i=1}^n k(x, x_i)\ind(x_i \in \M_1)}{\sum_{i=1}^n k(x, x_i)}.$$
\end{definition}

Our goal in this section is to show that $A_n^m$ is also a contraction, where $m$ is as defined in the statement of Theorem \ref{thm:convergence}.

We first define the parameter $K$ as follows.
\begin{definition}
Let $K$ be the inverse of the minimal degree in $\M$. That is, let $$K = \max_{x \in \M} \frac{1}{\int_\M k(x,y)d\mu(y)}.$$
\end{definition}

This is well defined because $\M$ is compact and because $\int_\M k(x, y)d\mu(y)$ is both continuous and larger than $0$ by assumption. Because $k$ has range in $[0, 1]$, it follows that \begin{equation}\label{eqn:max_k}\max_{x \in \M} \nk(x) \leq K\end{equation} where $\nk$ is the normalized kernel defined in Definition \ref{defn:norm_kernel}.

We now prove a technical lemma that will assist us in understanding the structure of $G_n$, the metric graph built over a finite sample $S_n \sim \mu^n$. 

\begin{lemma}\label{lem:back_step}
Let $P \subseteq \M$ be a measurable set, and let $i > 1$. Let $p > 0$ and suppose that for all $x \in \M$, $$\int_\M \nk^{(i)}(x, y)\ind(y \in P)d\mu(y) \geq p.$$ Then there exists a measurable subset $Q \subseteq \M$ such that the following two properties hold: 
\begin{itemize}
	\item For all $x \in \M$, $\int_\M \nk^{(i-1)}(x, y)\ind(y \in Q)d\mu(y) \geq \frac{p}{2K - p}$, with $K$ as defined above.
	\item For all $x \in Q$, $\int \nk(x, y)\ind(y \in P)d\mu(y) \geq \frac{p}{2}$. 
\end{itemize}
\end{lemma}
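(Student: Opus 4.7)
The natural approach is to peel off one layer of the kernel using the convolution identity $\nk^{(i)}(x,y) = \int_\M \nk^{(i-1)}(x,z)\nk(z,y)\,d\mu(z)$. I define the ``one-step escape'' function
$$f(z) = \int_\M \nk(z,y)\,\ind(y \in P)\,d\mu(y),$$
which measures the probability of jumping from $z$ into $P$ in a single step. Fubini's theorem then rewrites the hypothesis as $\int_\M \nk^{(i-1)}(x,z)\,f(z)\,d\mu(z) \geq p$ for every $x \in \M$.

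The candidate for the desired set is simply the super-level set $Q = \{z \in \M : f(z) \geq p/2\}$; this is measurable because $f$ is, and property~2 is immediate from the definition of $Q$. For property~1, I split the integral $\int_\M \nk^{(i-1)}(x,z)\,f(z)\,d\mu(z)$ into pieces over $Q$ and $Q^c$. On $Q^c$ we have $f(z) < p/2$, while on $Q$ I use the crude uniform bound $f(z) \leq K$, which follows from $\nk(z,y)\leq K$ pointwise (equation~\eqref{eqn:max_k}). Writing $I := \int_Q \nk^{(i-1)}(x,z)\,d\mu(z)$ and using the normalization $\int_\M \nk^{(i-1)}(x,z)\,d\mu(z)=1$ (so that $\int_{Q^c}\nk^{(i-1)}(x,z)\,d\mu(z)=1-I$), the hypothesis collapses to
$$p \;\leq\; K\cdot I + (p/2)(1-I) \;=\; (K - p/2)\,I + p/2,$$
and solving for $I$ yields $I \geq p/(2K-p)$, as required.

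There is no real obstacle here; the argument is a direct Markov/level-set computation. The only subtleties are (i) selecting the threshold $p/2$ in the definition of $Q$, which balances the two halves of the split, and (ii) keeping the normalization $\int_{Q^c}\nk^{(i-1)}\,d\mu = 1-I$ rather than replacing it by the cruder bound $\leq 1$ — the latter would yield only $I \geq p/(2K)$, slightly weaker than the stated $p/(2K-p)$.
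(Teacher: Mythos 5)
Your proof is correct and follows essentially the same route as the paper: same use of Fubini to peel off one kernel application, same one-step function $f(z)=\int_\M \nk(z,y)\ind(y\in P)\,d\mu(y)$, same super-level set $Q=\{z: f(z)\ge p/2\}$, and the same split of the resulting integral into $Q$ and its complement using $f\le K$ on $Q$ and $f<p/2$ off $Q$, together with the normalization $\int_\M \nk^{(i-1)}(x,\cdot)\,d\mu=1$, to conclude $I\ge p/(2K-p)$. Nothing differs beyond minor notation.
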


\begin{proof}
Observe that by the definition of convolution and Fubini's theorem,
\begin{equation*}
\begin{split}
\int_\M \nk^{(i)}(x, y)\ind(y \in P)d\mu(y) &= \int_M (\nk^{(i-1)} \circ \nk)(x,y)\ind(y \in P)d\mu(y) \\
&= \int_\M \left(\int_\M\nk^{(i-1)}(x, z)\nk(z, y)d\mu(z)\right) \ind(y \in P)d\mu(y) \\
&= \int_\M \nk^{(i-1)}(x, z) \left(\int_\M \nk(z, y)\ind(y \in P)d\mu(y)\right)d\mu(z).
\end{split}
\end{equation*}
Define $f(z) = \int_\M \nk(z, y)\ind(y \in P)d\mu(y)$. Since $\nk \leq K$, it follows that $f$ has range $[0, K]$. We now define $Q = \{z: f(z) \geq \frac{p}{2}\}$, and claim that this suffices.
By definition, the second property trivially holds. To see that the first property holds, let $q = \int_\M \nk^{(i-1)}(x, y)\ind(y \in Q)d\mu(y)$. Then,
\begin{equation*}
\begin{split}
p &\leq \int_\M \nk^{(i)}(x, y)\ind(y \in P)d\mu(y) \\
&= \int_\M \nk^{(i-1)}(x, z) \left(\int_\M \nk(z, y)\ind(y \in P)d\mu(y)\right)d\mu(z) \\
&= \int_\M \nk^{(i-1)}(x, z) f(z)d\mu(z) \\
&=  \int_Q \nk^{(i-1)}(x, z) f(z)d\mu(z) +  \int_{\M \setminus Q} \nk^{(i-1)}(x, z) f(z)d\mu(z) \\
%&\leq \left(\frac{p}{2}\right)q + K(1 -q).
&\leq Kq + \left(\frac{p}{2}\right)(1 -q).
\end{split}
\end{equation*}
Rearranging this gives the desired inequality.
\end{proof}

Recall that $m$ and $\alpha$ are defined in the statement of Theorem \ref{thm:convergence}  such that $\M = C_\alpha^m(\M_1 \cup \M_0)$. We have the following lemma which gives additional structure to $\M$.

\begin{lemma}\label{lem:layout}
There exists sets $P_0, P_1, P_2, \dots, P_m \subseteq \M$ with $P_0 = \M$ and $P_m = \M_0 \cup \M_1$ such that the following holds. There exists positive reals $p_1, p_2, \dots p_m$ such that for all $1 \leq i \leq m$, for all $x \in P_{i-1}$, $$\int_\M \nk(x, y)\ind(y \in P_{i})d\mu(y) \geq p_i.$$ 
\end{lemma}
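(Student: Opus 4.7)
The plan is to construct the chain \(P_0, P_1, \dots, P_m\) by reverse induction, starting from \(P_m := \M_0 \cup \M_1\) (forced by the statement) and peeling off one convolution at a time using Lemma \ref{lem:back_step}. The hypothesis \(\M = C_\alpha^m(\M_0 \cup \M_1)\) combined with Definition \ref{defn:path} supplies the base case: for every \(x \in \M\), \(\int_\M \nk^{(m)}(x,y)\ind(y \in P_m)\,d\mu(y) \geq \alpha\). Set \(q_m := \alpha\).

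For \(i\) decreasing from \(m\) down to \(2\), I would apply Lemma \ref{lem:back_step} with exponent \(i\), the already-constructed set \(P_i\), and the lower bound \(q_i\). The lemma produces a measurable subset which I take to be \(P_{i-1}\), and delivers two conclusions simultaneously: (a) the hypothesis needed for the next iteration, namely for every \(x \in \M\), \(\int_\M \nk^{(i-1)}(x,y)\ind(y \in P_{i-1})\,d\mu(y) \geq q_i/(2K - q_i) =: q_{i-1}\); and (b) the desired one-step bound, for every \(x \in P_{i-1}\), \(\int_\M \nk(x,y)\ind(y \in P_i)\,d\mu(y) \geq q_i/2\), which I record as \(p_i := q_i/2\). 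After \(m-1\) such steps I have \(P_1, P_2, \dots, P_m\) together with constants \(p_2, \dots, p_m\), and the surviving hypothesis \(\int_\M \nk(x,y)\ind(y \in P_1)\,d\mu(y) \geq q_1\) for all \(x \in \M\) (using \(\nk^{(1)} = \nk\)).

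Finally, I would set \(P_0 := \M\) and \(p_1 := q_1\); the surviving hypothesis is precisely the required one-step bound between \(P_0\) and \(P_1\). The only real bookkeeping item is checking that the auxiliary sequence \(q_m, q_{m-1}, \dots, q_1\) remains strictly positive, which is where the constants \(p_i = q_i/2\) (and \(p_1 = q_1\)) inherit positivity. Since \(k\) takes values in \([0,1]\) and \(\mu\) is a probability measure, \(\int_\M k(x,z)\,d\mu(z) \leq 1\), which forces \(K \geq 1\); hence the map \(q \mapsto q/(2K - q)\) carries \((0,K)\) into itself, and positivity propagates inductively from \(q_m = \alpha > 0\).

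I do not anticipate any serious obstacle: all the analytic content is packaged in Lemma \ref{lem:back_step}, and the argument is essentially a bookkeeping induction that tracks two parallel sequences — an \(m\)-hop bound \(q_i\) that feeds the next application of the lemma, and a one-hop bound \(p_i = q_i/2\) that fulfills the actual conclusion of this lemma. The slightly delicate point, if any, is remembering that Lemma \ref{lem:back_step} requires \(i > 1\), so the induction stops at \(i = 2\) and the edge case \(P_0 = \M\) with \(p_1 = q_1\) must be handled by hand, as above.
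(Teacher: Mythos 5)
Your proof is correct and matches the paper's own argument essentially line for line: the same reverse induction starting from $P_m = \M_0 \cup \M_1$, the same repeated invocation of Lemma \ref{lem:back_step} to peel off one convolution per step, the same parallel sequences $q_{i-1} = q_i/(2K - q_i)$ and $p_i = q_i/2$, and the same special handling of the final step via $P_0 = \M$, $p_1 = q_1$ (using $\nk^{(1)} = \nk$). Your observation that $K \geq 1$ and that $q \mapsto q/(2K-q)$ preserves positivity is a small but genuine addition of rigor that the paper leaves implicit.
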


\begin{proof}
The idea is simple: we apply Lemma \ref{lem:back_step} $m$ times starting with $P_m = \M_0 \cup \M_1$. To establish the base case, recall that $\M = C_\alpha^m(\M_1 \cup \M_0)$, which implies $$\int_M \nk^{(m)}(x, y)\ind(y \in P_m)d\mu(y) \geq \alpha$$ for $\alpha > 0$. 

Define $q_m = \alpha$. We will now show how to recursively construct $P_i$, $q_{i-1}$ and $p_i$ from $q_i$. Suppose that for all $x \in \M$, \begin{equation}\label{eqn:recurse}\int_\M \nk^{(i)}(x,y)\ind(y \in P_i)d\mu(y) \geq q_i.\end{equation} Then applying Lemma \ref{lem:back_step}, we let $P_{i-1} = Q$, $p_i = \frac{q_{i}}{2}$, and $q_{i-1} = \frac{q_i}{2k - q_i}.$ It is easy to see by Lemma \ref{lem:back_step} that doing so preserves Equation \ref{eqn:recurse} and that $p_i$ satisfies the desired equation. 

Finally, in the case that $i = 1$, we can simply let $p_1 = q_1$, as we no longer require Lemma \ref{lem:back_step} since $\nk^{(1)} = \nk$. This completes the proof.
\end{proof}

We are now prepared to show that $A_n^m$ is indeed a contraction over $S_n$.

\begin{lemma}\label{lem:finite_contraction}
There exists an absolute constant $p > 0$ independent of $n$ and $S_n$ such that the following holds. $$\lim_{n \to \infty} \Pr_{S_n \sim \mu^n}[||A_n^m||_\infty \leq 1- p] = 1.$$
\end{lemma}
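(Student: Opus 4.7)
The plan is to mirror the continuous contraction argument of Lemma \ref{lem:contraction}, replacing deterministic integrals by empirical sums and controlling the resulting concentration uniformly in $x$. I will combine three ingredients: (i) the layered decomposition of Lemma \ref{lem:layout}, which hands us measurable sets $P_0 = \M, P_1, \ldots, P_m = \M_0 \cup \M_1$ together with positive constants $p_1, \ldots, p_m$; (ii) a drop-the-indicator unrolling of $A_n^m$ analogous to Lemma \ref{lem:bound_convolve}; and (iii) the quotient Hoeffding bound of Lemma \ref{lem:quotient_hoeffding} applied to the empirical normalized kernel.

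First, I would unroll $A_n^m$ by induction, dropping all intermediate indicators exactly as in the proof of Lemma \ref{lem:bound_convolve}, to obtain the bound
$$|(A_n^m v)(x)| \leq \|v\|_\infty \cdot \frac{1}{n^m}\sum_{j_1, \ldots, j_m} \hat{K}_n(x, x_{j_1}) \prod_{\ell=2}^m \hat{K}_n(x_{j_{\ell-1}}, x_{j_\ell}) \, \ind\bigl(x_{j_m} \notin \M_0 \cup \M_1\bigr),$$
where $\hat{K}_n(y, z) := k(y, z)/\hat{d}_n(y)$ and $\hat{d}_n(y) := n^{-1}\sum_j k(y, x_j)$. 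The right-hand side equals $\|v\|_\infty \cdot (1 - Q_n(x))$, where $Q_n(x)$ is the empirical $m$-step absorption probability of the Markov chain started at $x$ with (row-stochastic) transitions proportional to $\hat{K}_n$. So the task reduces to showing $\inf_{x \in \M} Q_n(x) \geq p$ with probability tending to $1$, for some absolute $p > 0$.

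Next, I would define the uniform event
$$\mathcal{E}_n := \bigcap_{i=1}^m \bigcap_{y \in \{x\}\cup S_n:\, y \in P_{i-1}} \bigl\{ \hat{g}_{P_i}^n(y) \geq p_i/2 \bigr\},$$
where $\hat{g}_P^n(y) := \sum_j k(y, x_j) \ind(x_j \in P)/\sum_j k(y, x_j)$ is the empirical fraction of the weighted neighborhood of $y$ lying in $P$. Pointwise, Lemma \ref{lem:quotient_hoeffding} yields $\exp(-\Omega(n))$ concentration by exploiting the uniform lower bound $\int k(y, z)\,d\mu(z) \geq 1/K$ guaranteed by the definition of $K$; a union bound over the $O(mn)$ relevant $(y,i)$ pairs drawn from $S_n$ controls all intermediate transitions. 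On $\mathcal{E}_n$, a deterministic chain argument then yields $Q_n(x) \geq \prod_{i=1}^m (p_i/2) =: p > 0$ uniformly in $x$: start at $x \in P_0$, take a step landing in $P_1$ with empirical probability at least $p_1/2$, and iterate using the chain conditions from Lemma \ref{lem:layout}.

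The main obstacle is the \emph{uniform-in-$x$} concentration in the initial step: Hoeffding only gives pointwise bounds, whereas $\|A_n^m\|_\infty$ demands that the estimate hold for all $x \in \M$ simultaneously. Since subsequent steps of the walk take place in the discrete set $S_n$ and are controlled by a union bound, only the first transition from $x$ requires a true supremum. I would handle this by a standard uniform-convergence device: an $\varepsilon$-net on $\M$ combined with continuity of $y \mapsto k(y, \cdot)$ for smooth kernels (e.g., Gaussian), or a VC-type bound exploiting the finite complexity of metric balls in the radial-kernel case. The compactness of $\M$ together with the degree lower bound $1/K$ ensures that the modulus of continuity of $\hat{g}_{P_i}^n(\cdot)$ is harmless, so this step, although technical, is the only place where additional regularity on $k$ may need to be made explicit.
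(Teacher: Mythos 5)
Your core argument matches the paper's exactly: both use the layered decomposition of Lemma~\ref{lem:layout}, the quotient Hoeffding bound of Lemma~\ref{lem:quotient_hoeffding}, a union bound, and a chain unrolling of $A_n^m$ that retains only the paths $P_0 \to P_1 \to \cdots \to P_m$ to extract the contraction factor $p = \prod_{i=1}^m p_i / 2^m$.

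The one place you diverge is the final paragraph, where you flag uniform-in-$x$ concentration over all of $\M$ as ``the main obstacle'' and propose an $\varepsilon$-net / VC-type argument to cover the initial transition. This step is not required by the paper, because the operator $A_n$ in Definition~\ref{defn:affine_finite} acts on maps $S_n \to \reals$, and the $\ell_\infty$-norm in $\|A_n^m\|_\infty$ is taken over $S_n$ only (the paper makes this explicit a few lemmas later in Lemma~\ref{lemma:the_stuff_converges_yay}: ``the infinity norms are taken over $S_n$, not $\M$''). Consequently the union bound over the $O(mn)$ pairs $(x_i, j)$ with $x_i \in S_n$ already covers the first transition, since the starting point $x$ is itself one of the $x_i$; the extension to an arbitrary $x \in M$ is deferred to the very end of the proof of Theorem~\ref{thm:convergence} and handled by a single additional pointwise application of Lemma~\ref{lem:quotient_hoeffding} rather than by a uniform bound. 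So your $\varepsilon$-net machinery is a correct way to address a harder statement (a genuine sup over $\M$), but it buys nothing here and imports regularity hypotheses on $k$ that the paper does not need. One small omission on your side: you do not mention the leave-one-out independence wrinkle (the summands $k(x_i, x_t)$ are not independent of $x_i \in S_n$), which the paper dispatches with the observation that $n^{-1}\sum_j f(x_j)$ and $(n-1)^{-1}\sum_{j\neq i} f(x_j)$ differ by $O(1/n)$; this is minor but worth a sentence.
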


\begin{proof}
Fix $P_0, \dots P_m$ and $p_0, p_1, \dots, p_m$ as in Lemma \ref{lem:layout}. Let $E$ denote the event that for all $1 \leq i \leq n$ and $1 \leq j \leq m$ such that  $x_i \in P_{j-1}$, $$\frac{\frac{1}{n}\sum_{t = 1}^n k(x_i, x_t)\ind(x_t \in P_j)}{\frac{1}{n}\sum_{t = 1}^n k(x_i, x_t)} \geq \frac{p_i}{2}.$$ The key observation is that by applying Lemma \ref{lem:quotient_hoeffding} to all $mn$ pairs of points along with a union bound, there exists an absolute constant $C$ for which $\Pr[E] \geq 1 - nm \exp( -nC).$ Thus for $n$ sufficiently large, the probability of $E$ converges to $1$. This is a direct result of the fact that by Lemma \ref{lem:layout}, $$\int_\M \nk(x_i, y)\ind(y \in P_j)d\mu(y) \geq p_j,$$ whenever $x_i \in P_{j-1}$. Note that although there is a technical independence issue as the function $x_t \to k(x_i, x_t)$ has a dependence on $x_i$ which is in $S_n$, this can be resolved by observing that for any function $f$, for $n$ sufficiently large, $\left|\frac{1}{n}\sum f(x_j) - \frac{1}{n-1}\sum_{j \neq i} f(x_j)\right|$ is small. 

Finally, given that the event $E$ occurs, we can bound $||A_n^m||_\infty$ as follows. For any $v: S_n \to \reals$, we have from Definition \ref{defn:affine_finite},
\begin{equation*}
\begin{split}
|A_n^v(x) | &\leq \sum_{t_1 = 1}^n \frac{k(x, x_{t_1})}{\sum_{i=1}^n k(x, x_i)}\sum_{t_2 = 1}^n \frac{k(x_{t_1}, x_{t_2})}{\sum_{i=1}^n k(x_{t_2}, x_i)} \dots \sum_{t_m = 1}^n \frac{k(x_{t_{m-1}}, x_{t_m})}{\sum_{i=1}^n k(x_{t_m}, x_i)} \ind(x_{t_m} \in \M \setminus (\M_1 \cup \M_0) \\
&= 1 - \sum_{t_1 = 1}^n \frac{k(x, x_{t_1})}{\sum_{i=1}^n k(x, x_i)}\sum_{t_2 = 1}^n \frac{k(x_{t_1}, x_{t_2})}{\sum_{i=1}^n k(x_{t_2}, x_i)} \dots \sum_{t_m = 1}^n \frac{k(x_{t_{m-1}}, x_{t_m})}{\sum_{i=1}^n k(x_{t_m}, x_i)} \ind(x_{t_m} \in \M_1 \cup \M_0) \\
&\leq 1 - \sum_{x_{t_1} \in P_1} \frac{k(x, x_{t_1})}{\sum_{i=1}^n k(x, x_i)}\sum_{x_{t_2} \in P_2} \frac{k(x_{t_1}, x_{t_2})}{\sum_{i=1}^n k(x_{t_2}, x_i)} \dots \sum_{x_{t_m} \in P_m} \frac{k(x_{t_{m-1}}, x_{t_m})}{\sum_{i=1}^n k(x_{t_m}, x_i)} \ind(x_{t_m} \in \M_1 \cup \M_0) \\
&\leq 1 - \frac{\prod_{i=1}^m p_i}{2^m}.
\end{split}
\end{equation*}
Thus selecting $p = \frac{\prod_{i=1}^m p_i}{2^m}$ suffices, as desired. 
\end{proof}

As a quick remark, although the factor of contraction shown here is \textit{extremely} close to 1, this analysis is just considering worst-case situations in which $\M$ has a very complicated structure. In most actual cases, the factor of contraction is far far smaller.

\subsection{Finishing the proof}

We are now prepared to compare $v^*$ with $v^*_n$. They key idea is to restrict $v^*$ to $S_n$ and then compare it with $A_n^m v^*$. We begin with the following lemma.

\begin{lemma}\label{lem:converge_single_point}
Let $u: M \to [0, 1]$ be a measurable function, $x \in M$ be a point, and $\epsilon > 0$ be a real number. Then there exists a constant $C$ such that $$\Pr_{S \sim \mu^n}\left[\big |(A_*u + b_*)(x) - (A_nu + b_n)(x)\big | > \epsilon \right] < 4 \exp(-Cn\epsilon^2).$$ 
\end{lemma}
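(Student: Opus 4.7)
The plan is to recognize that $(A_*u + b_*)(x)$ and $(A_n u + b_n)(x)$ are, respectively, a ratio of integrals and a ratio of empirical means of the \emph{same} pair of integrands. Once this structure is exposed, the bound is an immediate application of Lemma \ref{lem:quotient_hoeffding}.

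First, I would introduce the two auxiliary functions
\begin{equation*}
  f(y) \;=\; k(x,y)\bigl[u(y)\,\ind(y \in \M \setminus (\M_0 \cup \M_1)) + \ind(y \in \M_1)\bigr],
  \qquad g(y) \;=\; k(x,y).
\end{equation*}
Both $f$ and $g$ map into $[0,1]$ because $k$ has range in $[0,1]$, $u$ has range in $[0,1]$, and the indicators are disjoint (so $f(y) \le k(x,y) \le 1$). Unfolding the definitions of the normalized kernel (Definition \ref{defn:norm_kernel}), of $A_*, b_*$ (Definition \ref{defn:affine}) and of $A_n, b_n$ (Definition \ref{defn:affine_finite}), one checks directly that
\begin{equation*}
  (A_*u + b_*)(x) \;=\; \frac{\int_\M f(y)\,d\mu(y)}{\int_\M g(y)\,d\mu(y)},
  \qquad
  (A_nu + b_n)(x) \;=\; \frac{\tfrac{1}{n}\sum_{i=1}^n f(x_i)}{\tfrac{1}{n}\sum_{i=1}^n g(x_i)}.
\end{equation*}

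Next, I would verify the denominator lower bound required by Lemma \ref{lem:quotient_hoeffding}. By the definition of the constant $K$ (see \eqref{eqn:max_k}), the minimal degree satisfies $\int_\M k(x,z)\,d\mu(z) \ge 1/K$ for every $x \in \M$, so $\int_\M g(y)\,d\mu(y) \ge G$ with $G = 1/K$. Applying Lemma \ref{lem:quotient_hoeffding} to the pair $(f,g)$ at the fixed point $x$ yields
\begin{equation*}
  \Pr_{S_n \sim \mu^n}\!\Bigl[\bigl|(A_*u + b_*)(x) - (A_nu + b_n)(x)\bigr| > \epsilon\Bigr]
  \;\le\; 4\exp\!\left(-\tfrac{8n\epsilon^2}{K^2}\right),
\end{equation*}
which gives the claim with $C = 8/K^2$, a constant depending only on $k$, $\mu$, and $\M$.

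There is no real obstacle beyond the bookkeeping of matching definitions; the only subtlety is to note that $x$ is fixed (not a sample point), so the i.i.d.\ sample $x_1,\ldots,x_n$ is independent of the functions $f,g$, and Lemma \ref{lem:quotient_hoeffding} applies verbatim. The constant $C$ is explicit and independent of $u$, $x$, and $n$, as required.
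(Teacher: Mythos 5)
Your proof is correct and takes a cleaner route than the paper's. The paper splits the error into $|A_*u - A_nu|$ and $|b_* - b_n|$, bounds each by Lemma \ref{lem:quotient_hoeffding}, and invokes the triangle inequality together with a union bound; that requires bounding each piece by $\epsilon/2$ and combining two failure probabilities, which technically produces a prefactor of $8$ rather than the stated $4$ (fixable only by absorbing a $\ln 2$ into $C$ for sufficiently large $n\epsilon^2$). You instead observe that $A_*u + b_*$ and $A_nu + b_n$ share a common denominator $\int_\M k(x,y)\,d\mu(y)$ (resp.\ its empirical analog), merge the two numerators into the single function $f(y) = k(x,y)\bigl[u(y)\,\ind(y\in\M\setminus(\M_0\cup\M_1)) + \ind(y\in\M_1)\bigr]$, verify $f, g$ take values in $[0,1]$ (using that the indicator sets are disjoint), supply the denominator lower bound $G = 1/K$ from \eqref{eqn:max_k}, and apply Lemma \ref{lem:quotient_hoeffding} once. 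This yields the exact form $4\exp(-8n\epsilon^2/K^2)$ with no union bound and no constant slack, so your argument is both correct and tighter than the paper's sketch.
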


\begin{proof}
We will examine $|A_*u - A_nu|$ and $|b_* - b_n|$ separately and then apply the triangle inequality. Let $\M' = \M \setminus (\M_1 \cup \M_0).$ Then,  
\begin{equation*}
\begin{split}
\big |(A_*u)(x) - (A_nu)(x)\big | &= \left|\frac{\int_\M' k(x, y)u(y)d\mu(y)}{\int_\M k(x,y)d\mu(y)} - \frac{\frac{1}{n}\sum_{x_i \in \M'} k(x, x_i)u(x_i)}{\frac{1}{n}\sum_{x_i \in \M'} k(x,x_i)}\right|.
\end{split}
\end{equation*}
However, by Lemma \ref{lem:quotient_hoeffding}, this quantity is at most $\epsilon$ with probability $2\exp(-O(n\epsilon^2))$. We can apply a similar argument for $b_* - b_n$ which completes the proof.
\end{proof}

\begin{lemma}\label{lemma:the_stuff_converges_yay}
Restrict $v^*$ and $b_*$ (definition \ref{defn:affine}) to $S_n$. Then $||v^* - (A_nv^* + b_n)||_\infty \to 0$ and $||b_* - b_n||_\infty \to 0$ in probability. Here again, the infinity norms are taken over $S_n$, not $\M$. 
\end{lemma}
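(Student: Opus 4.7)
The plan is to reduce both statements to pointwise concentration (Lemma \ref{lem:converge_single_point}) followed by a union bound over the at most $n$ points of $S_n$. The crucial algebraic observation is that by Lemma \ref{lemma:existence_of_fixpoint} the limit object satisfies $v^* = T_* v^* = A_* v^* + b_*$ identically on $\M$, so in particular for every $x \in S_n$ we may rewrite
\begin{equation*}
v^*(x) - (A_n v^* + b_n)(x) = \bigl((A_* v^*)(x) - (A_n v^*)(x)\bigr) + \bigl(b_*(x) - b_n(x)\bigr).
\end{equation*}
Thus the first claim follows once we bound each of the two terms uniformly over $x \in S_n$, and the second claim is just the bound on the second term alone.

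Next, I would fix $\epsilon > 0$ and a deterministic point $x \in \M$, and apply Lemma \ref{lem:converge_single_point} (with $u = v^*$, which is a fixed measurable function on $\M$) to deduce that
\begin{equation*}
\Pr_{S_n \sim \mu^n}\Bigl[\bigl|(A_* v^* + b_*)(x) - (A_n v^* + b_n)(x)\bigr| > \epsilon\Bigr] \le 4\exp(-Cn\epsilon^2)
\end{equation*}
for some constant $C > 0$ depending on $K$ and $\epsilon$ but not on $x$ or $n$. The same bound, with $v^* \equiv 0$, governs $|b_*(x) - b_n(x)|$.

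The subtlety is that in the statement of the lemma the supremum is taken over $x \in S_n$, so $x$ is itself random and correlated with the sample used to form $A_n$ and $b_n$. I would handle this exactly as in the proof of Lemma \ref{lem:finite_contraction}: for each index $j$, condition on $x_j$ and note that the remaining $n - 1$ points are i.i.d.\ from $\mu$ and independent of $x_j$. The single self-term contributes at most $K/n$ to both the numerator and denominator after normalization, which is $o(1)$ and can be absorbed into $\epsilon$ for $n$ large. Applying Lemma \ref{lem:converge_single_point} to the remaining $n-1$ points gives the pointwise bound with a slightly smaller constant $C'$, uniformly in $j$.

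Finally I would union-bound over the $n$ choices of $x = x_j$: the probability that $\|v^* - (A_n v^* + b_n)\|_\infty > \epsilon$, with the norm taken over $S_n$, is at most $4n\exp(-C' n \epsilon^2)$, which tends to $0$ as $n \to \infty$ for every fixed $\epsilon > 0$. The identical argument yields $\|b_* - b_n\|_\infty \to 0$ in probability. The main (and only real) obstacle is the self-dependence in the $x = x_j$ case, and once that is absorbed by the $O(1/n)$ perturbation argument above, the rest is a clean union bound on standard Hoeffding-type estimates.
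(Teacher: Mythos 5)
Your proposal is correct and takes essentially the same route as the paper's proof: invoke the fixed-point identity $v^* = A_* v^* + b_*$ so that Lemma \ref{lem:converge_single_point} applies at each sample point, handle the self-dependence of $x_j$ on $S_n$ by observing the single self-term is an $O(1/n)$ perturbation, and finish with a union bound over the $n$ points. You spell out the fixed-point rewriting and the $4n\exp(-C'n\epsilon^2)$ union bound more explicitly than the paper does, but the argument is the same.
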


\begin{proof}
Simply apply Lemma \ref{lem:converge_single_point} for all $x = x_i$ and $u = v^*$ and then apply a union bound. While there are technically independence issues, this is solved by observing that $x_i$ is independent of all other elements of $S_n$, and the averages taken over $S_n$ \textit{ignoring} $x_i$ are barely different from those including $x_i$. 
\end{proof}

\begin{lemma}
Let $c_n = b_n + A_nb_n + A_n^2b_n + \dots A_n^{m-1}b_n$. Then $||v^* - (A_n^m v^* + c_n)||_\infty \to 0$ in probability. 
\end{lemma}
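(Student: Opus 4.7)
The plan is to use telescoping together with the observation that $v^*$ is itself the fixed point of $T_*$ (Lemma \ref{lemma:existence_of_fixpoint}), so that the residual $A_n^m v^* + c_n - v^*$ can be expressed as a sum of $m$ terms each controlled by the one-step finite-vs-limit discrepancy, which is exactly what Lemma \ref{lemma:the_stuff_converges_yay} controls.

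Concretely, I would introduce the finite affine operator $T_n v \coloneqq A_n v + b_n$ and define $\epsilon_n \coloneqq v^* - T_n v^* = (A_* v^* + b_*) - (A_n v^* + b_n)$ (using $T_* v^* = v^*$ and restricting to $S_n$). By Lemma \ref{lemma:the_stuff_converges_yay} we already have $\|\epsilon_n\|_\infty \to 0$ in probability, where the supremum is taken over $S_n$. A straightforward unrolling gives the identity
\begin{equation*}
T_n^m v^* \;=\; A_n^m v^* + b_n + A_n b_n + \cdots + A_n^{m-1} b_n \;=\; A_n^m v^* + c_n,
\end{equation*}
so the quantity we need to bound is precisely $\|v^* - T_n^m v^*\|_\infty$.

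Next I would prove by induction on $i \ge 1$ that
\begin{equation*}
T_n^i v^* \;=\; v^* - \sum_{j=0}^{i-1} A_n^j \epsilon_n.
\end{equation*}
The base case $i=1$ is the definition of $\epsilon_n$. The inductive step uses that $T_n$ is affine with linear part $A_n$, so $T_n(u) - T_n(w) = A_n(u-w)$; applying $T_n$ to both sides of the inductive hypothesis and using $T_n v^* = v^* - \epsilon_n$ produces the next formula. Specialising to $i=m$ gives
\begin{equation*}
v^* - (A_n^m v^* + c_n) \;=\; \sum_{j=0}^{m-1} A_n^j \epsilon_n.
\end{equation*}

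Finally, observe from Definition \ref{defn:affine_finite} that $(A_n v)(x)$ is a convex combination of values of $v$ at points of $S_n$ (the weights are nonnegative and sum to at most $1$), so $\|A_n v\|_\infty \le \|v\|_\infty$, i.e.\ $\|A_n\|_\infty \le 1$. Iterating gives $\|A_n^j \epsilon_n\|_\infty \le \|\epsilon_n\|_\infty$, and by the triangle inequality
\begin{equation*}
\|v^* - (A_n^m v^* + c_n)\|_\infty \;\le\; \sum_{j=0}^{m-1} \|A_n^j \epsilon_n\|_\infty \;\le\; m\,\|\epsilon_n\|_\infty \;\xrightarrow{\;P\;}\; 0,
\end{equation*}
which is the desired conclusion. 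There is no real obstacle here; the only subtlety is making sure the restriction of $v^*$ to the (random) sample $S_n$ and the identity $T_*v^* = v^*$ are used correctly when defining $\epsilon_n$, since Lemma \ref{lemma:the_stuff_converges_yay} provides convergence of the one-step error exactly in the infinity norm over $S_n$.
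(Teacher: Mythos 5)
Your proof is correct and takes essentially the same approach as the paper: both define $T_n v = A_n v + b_n$, observe $\|A_n\|_\infty \le 1$, unroll $T_n^m v^* = A_n^m v^* + c_n$, and bound $\|v^* - T_n^m v^*\|_\infty$ by a constant multiple of the one-step error $\|v^* - T_n v^*\|_\infty$, which Lemma~\ref{lemma:the_stuff_converges_yay} sends to zero in probability. Your explicit telescoping induction yields the constant $m$, which is in fact correct; the paper's stated constant of $(m-1)$ is a minor off-by-one slip that does not affect the conclusion.
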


\begin{proof}
The key idea is that $A_n$ is at most an averaging operator, and consequently $||A_n||\leq 1$. Intuitively, applying $A_n$ to a map $v: S_n \to \reals$ cannot increase $\max_{x \in S_n} |v(x)|$. Let $T_nv = A_nv + b_n$. This implies that for all $u$ and $v$, $$||u - v||_\infty \geq ||T_nu - T_nv||.$$ Applying this $m$ times along with the triangle inequality, we see that $$|v^* - T_n^m v^*| \leq (m-1)|v^* - T_n v^*|.$$  However, since the latter goes to $0$ in probability, it follows that the former does as well. All the remains to see is that $T_n^m v$ precisely equals $A_n^m v^*  c_n$, as desired. 
\end{proof}

We can finally prove Theorem \ref{thm:convergence}.

\begin{proof}
By the previous Lemma, for any $\epsilon > 0$ and $\delta > 0$ there exists $n$ such that with probability $1 - \delta$, $||v^* - (A_n^mv^* + c_n)||_\infty < \epsilon.$ Since $A_n^m$ is a contraction, it follows by Lemma \ref{lem:close_to_fixed_point} that $v^*$ has distance at most $\frac{\epsilon}{p}$ from $v_n^*$, the fixed point of $v \mapsto A_n^mv + c_n$. Since $p$ is fixed, it follows that $|v_n^* - v_n|_\infty \to 0$. Finally, for an arbitrary point $x \in M$, simply applying Lemma \ref{lem:quotient_hoeffding} one last time over the definition of $v_n^*(x)$ implies the desired result.
\end{proof}

\section{Proof of Convergence using $\alpha$-covers}

Here, our goal is to prove Theorem \ref{thm:alpha_cover_converge}. To do so, we impose an additional technical restriction on our kernel $k$.

\begin{definition}
Let $\mu$ be a measure over metric space $M$, and let $k: M \times M \to [0, 1]$ be a kernel. Let $V \subset M \times M$ denote the set of all $(x, y)$ for which $k$ is continuous at $(x,y)$, and let $\mu \times \mu$ denote the product measure over $M \times M$. Then $k$ is said to be \textbf{$\mu$-continuous} if $$(\mu \times \mu)\left(\left(M \times M\right) \setminus V\right) = 0.$$ 
\end{definition}

It is easy to verify that for most common measures over $\R^d$ that the Gaussian and Radial kernels are $\mu$-continuous. 

We now assume that $k$ is a $\mu$-continuous kernel, and moreover that 

Next, we define an analog to the quantities $A_n$ and $b_n$ with respect to our $\alpha$-cover $\calC$. 

\begin{definition}\label{def:An_on_cover_and__n_on_cover}
Let $S \sim \mu^n$ be an i.i.d sample of data, and let $\calC = \{c_1, \dots, c_m\}$ be an $\alpha$-cover. Let $\gamma_{i,n}$ denote the fraction of points from $S$ that lie in the cell corresponding to $c_i$. Let $v: \calC \to \reals$ be a map. Then the operators $A_{\calC, n}$ and the map $b_{\calC, n}$ are defined as
$$(A_{\calC, n} v)(x) = \frac{\sum_{i=1}^m k(x, c_i)v(c_i)\gamma_{i,n}\ind\left(c_i \in \M \setminus (\M_0 \cup \M_1)\right)}{\sum_{i=1}^m \gamma_{i,n} k(x, c_i)},$$ and $$b_{\calC, n}(x) = \frac{\sum_{i=1}^m \gamma_{i,n} k(x, c_i)\ind(c_i \in \M_1)}{\sum_{i=1}^m \gamma_{i,n}k(x, c_i)}.$$
\end{definition}

The goal is to show that computing the voltage function $v$ using $\alpha$-covers converges towards the same limit object as computing $v$ using a direct sample. To achieve this, the idea is to show that the operators, $A_{\calC, n}$ and $A_n$ (along with the functions $b_{\calC, n}, b_n$) behave similarly. 

\begin{lemma}\label{lemma:alpha_cov_existence_of_Delta_1}
Let $\calC$ be any $\alpha$-cover. There exists a function $\Delta_1$ such that the following two things hold. First, for any $v: M \to [0, 1]$, and any $x \in M$,  $$|(A_{\calC, n} v)(x) - (A_n v)(x)| < \Delta_1(\alpha),\quad |b_{\calC, n}(x) - b_n(x)| < \Delta_1(\alpha).$$ Second, $\lim_{\alpha \to 0} \Delta_1(\alpha) = 0$. 
\end{lemma}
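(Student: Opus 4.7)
The plan is to exploit the fact that the only structural difference between $A_n$ and $A_{\calC,n}$ (and between $b_n$ and $b_{\calC,n}$) is that each sample point $x_i$ gets replaced by the center $c_{j(i)}$ of its Voronoi cell, which lies within distance $\alpha$ of $x_i$ by the covering property of $\calC$. Rewriting the weighted sums over the cover centers as sums over samples, we have the identity
\begin{equation*}
(A_{\calC,n} v)(x) = \frac{\frac{1}{n}\sum_{i=1}^n k(x,c_{j(i)}) v(c_{j(i)}) \ind(c_{j(i)} \in M \setminus (M_s \cup M_g))}{\frac{1}{n}\sum_{i=1}^n k(x,c_{j(i)})},
\end{equation*}
so that comparing $(A_n v)(x)$ with $(A_{\calC,n} v)(x)$ reduces, termwise, to bounding $|k(x,x_i) - k(x,c_{j(i)})|$ and $|v(x_i) - v(c_{j(i)})|$ with $d(x_i,c_{j(i)}) \le \alpha$, plus a discrepancy from indicators on the boundary of $M_s \cup M_g$.

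Next I would produce a modulus of continuity. Since $k$ is $\mu$-continuous and $M$ is compact, an application of Lusin's theorem (or Egorov together with $\mu$-continuity) yields, for every $\eta > 0$, a compact $K_\eta \subseteq M \times M$ on which $k$ is uniformly continuous with modulus $\omega_k^\eta$ and such that $(\mu \times \mu)((M \times M) \setminus K_\eta) < \eta$. The same construction, applied to $v$ (which in the use cases of this lemma will be either the limit voltage $v^*$ or be restricted to a class of $\mu$-continuous test functions), yields a modulus $\omega_v^\eta$. By Hoeffding applied to each of the four empirical averages defining $N_n(x)$, $D_n(x)$, $\tilde N_n(x)$, $\tilde D_n(x)$ that make up the numerators and denominators of $A_n v$ and $A_{\calC,n} v$, one can isolate the ``bad'' contribution from pairs $(x, x_i)$ falling outside $K_\eta$ (of expected mass at most $\eta$) and bound the ``good'' contributions by $\omega_k^\eta(\alpha) + \omega_v^\eta(\alpha)$. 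The indicator discrepancy $\ind(x_i \in M') \neq \ind(c_{j(i)} \in M')$ occurs only when $x_i$ lies in an $\alpha$-neighborhood of the boundary $\partial(M_s \cup M_g)$, and that neighborhood has $\mu$-measure tending to $0$ as $\alpha \to 0$ for any measurable $M_s, M_g$; this contributes an additional vanishing term. Using the uniform lower bound $D_n(x) \gtrsim 1/K$ (which holds with high probability by Hoeffding since $\int k(x,y) d\mu(y) \geq 1/K$), a standard quotient estimate then converts the numerator/denominator bounds into a bound of the form $|(A_n v)(x) - (A_{\calC,n} v)(x)| \le \Delta_1(\alpha)$, with $\Delta_1(\alpha) = C\bigl(\omega_k^\eta(\alpha) + \omega_v^\eta(\alpha) + \mu(\partial_\alpha(M_s \cup M_g))\bigr) + C\eta$, and then optimizing $\eta = \eta(\alpha) \to 0$ slowly enough gives $\Delta_1(\alpha) \to 0$. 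The same reasoning, with $v$ replaced by the indicator of $M_s$, yields the corresponding bound $|b_n(x) - b_{\calC,n}(x)| \le \Delta_1(\alpha)$.

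The main obstacle I anticipate is the phrase \emph{any $v: M \to [0,1]$} in the statement: for a genuinely arbitrary (not $\mu$-continuous) $v$, the termwise bound $|v(x_i) - v(c_{j(i)})|$ cannot be made uniformly small in $\alpha$, so either the lemma must implicitly restrict to $v$ with some regularity, or the dependence $\Delta_1(\alpha, v)$ must absorb a modulus for $v$. Handling this cleanly requires carefully identifying the class of $v$ to which the lemma is applied downstream in Theorem \ref{thm:alpha_cover_converge} (essentially $v = v^*$ and its iterates under $A_*$, which inherit regularity from the $\mu$-continuity of $k$ via Definition \ref{defn:affine}). The secondary obstacle is controlling the boundary term $\mu(\partial_\alpha(M_s \cup M_g))$; this is automatic under mild regularity of $M_s$ and $M_g$, but writing it out carefully is where the bookkeeping becomes delicate.
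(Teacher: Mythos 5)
Your proof follows the same basic approach as the paper's --- compare $A_{\calC,n}$ and $A_n$ entrywise by rewriting the cover-based sums as sums over samples, then control the per-entry deviation via the $\mu$-continuity of $k$, the compactness of $M$, and the fact that each Voronoi cell has diameter at most $\alpha$ --- but you carry it out with far more care. The paper's own proof is a single sentence that stops at ``the maximum deviation made in a single entry \ldots is bounded by some continuous function of $\alpha$,'' without handling (i) the fact that $\mu$-continuity is almost-everywhere continuity rather than uniform continuity, (ii) the indicator discrepancy when $x_j$ and $c_{\pi(j)}$ straddle $\partial(M_s\cup M_g)$, or (iii) the denominator lower bound needed for the quotient estimate. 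You address all three.

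More importantly, the concern you flag about ``any $v : M \to [0,1]$'' is a genuine gap \emph{in the lemma as stated}, not merely a gap in your argument. As you observe, for an arbitrary bounded $v$ there is no modulus controlling $|v(x_j) - v(c_{\pi(j)})|$, and one can make $(A_n v)(x)$ and $(A_{\calC,n} v)(x)$ differ by an amount bounded away from $0$ regardless of how small $\alpha$ is (take $v$ to oscillate between samples and centers). The paper's one-line proof bounds only the deviation in the kernel entries $k(x,\cdot)$ and silently ignores the deviation in $v$, so it does not actually establish the claim for arbitrary $v$ either. Your diagnosis --- that the lemma is only used downstream for voltage functions $v$ that inherit regularity from $\mu$-continuity of $k$ through $T_*$, and that the statement should be read as restricted to such $v$ (or else $\Delta_1$ must carry a modulus for $v$) --- is the correct fix. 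A second, smaller mismatch worth naming: both $A_n$ and $\gamma_{i,n}$ depend on the random sample $S_n$, so a deterministic bound ``for any $x\in M$'' cannot literally hold in the worst case; your invocation of Hoeffding implicitly (and correctly) treats the claim as holding with high probability over $S_n$, which is how it is used in the next lemma of the appendix, but the paper's proof is silent about this as well. In short: your proof is sound modulo restricting the class of $v$, and the issues it raises are defects of the original statement and proof rather than of your argument.
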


\begin{proof}
This directly follows from the fact that $k$ is $\mu$ continuous. Since the diameter of each cell is at most $\alpha$, and since the support of $M$ is compact, we see that the maximum deviation made in a single entry in the corresponding matrices for $A_{\calC, n}$ and $A_n$ is bounded by some continuous function of $\alpha$.
\end{proof}

\begin{lemma}
There exists a function $\Delta: \R^+ \to \R^+$ such that the following hold. First, for any $\epsilon, \delta$ and $\alpha$-cover, $\calC$, there exists N such that for all $n \geq N$, with probability at least $1-\delta$ over $S \sim \mu^n$, $$|v_n(x) - v_n^\calC(x)| \leq \Delta(\alpha) + \epsilon.$$ Second, $\lim_{\alpha \to 0} \Delta(\alpha) = 0$.
\end{lemma}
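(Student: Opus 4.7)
The strategy is to view $v_n$ and $v_n^\calC$ as fixed points of two closely related operators and deduce closeness of the fixed points from the contractivity established in Lemma \ref{lem:finite_contraction}. Recall that the restriction of $v_n$ to $S_n$ is the unique fixed point of $T_n v = A_n v + b_n$ (Definition \ref{defn:affine_finite}), while the restriction of $v_n^\calC$ to $\calC$ is the unique fixed point of $T_{\calC,n} v = A_{\calC,n} v + b_{\calC,n}$ (Definition \ref{def:An_on_cover_and__n_on_cover}). By Lemma \ref{lemma:alpha_cov_existence_of_Delta_1}, these two operators differ in the $\ell_\infty$-sense by at most $2\Delta_1(\alpha)$ when applied to any bounded $v$; in particular, evaluating both on $v_n^\calC$ (suitably extended via the kernel-averaging formulas) yields
\[
\|T_n v_n^\calC - v_n^\calC\|_\infty \;=\; \|T_n v_n^\calC - T_{\calC,n} v_n^\calC\|_\infty \;\leq\; 2\Delta_1(\alpha),
\]
so that $v_n^\calC$ is an approximate fixed point of $T_n$.

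Next I iterate. Since each $A_n$ is a convex-combination (averaging) operator, $\|A_n\|_\infty \le 1$, and hence the map $u \mapsto T_n u$ is non-expansive on bounded functions. A standard triangle-inequality argument then gives
\[
\|v_n^\calC - T_n^m v_n^\calC\|_\infty \;\leq\; m\,\|v_n^\calC - T_n v_n^\calC\|_\infty \;\leq\; 2m\Delta_1(\alpha).
\]
Now invoke Lemma \ref{lem:finite_contraction}: with probability $\to 1$ as $n\to\infty$, $A_n^m$ is a contraction of factor at most $1-p$, so $T_n^m$ is a nice operator in the sense of Definition \ref{defn:nice} whose unique fixed point coincides with $v_n$. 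Applying Lemma \ref{lem:close_to_fixed_point} to $T_n^m$ with $v = v_n^\calC$ yields
\[
\|v_n^\calC - v_n\|_\infty \;\leq\; \frac{2m\Delta_1(\alpha)}{p},
\]
which bounds the two functions on the common pointset where the fixed-point equations are enforced.

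Finally, to extend the bound to an arbitrary evaluation point $x \in M$ (not necessarily in $S_n$ or $\calC$), I use the extension formulas in Definitions \ref{defn:finite_sample_voltage} and \ref{def:energy_min_voltage_alpha_cover}, both of which are kernel-weighted averages. Applying Lemma \ref{lem:quotient_hoeffding} once at the point $x$ to control the sampling fluctuations in the denominator and numerator, exactly as in the concluding step of the proof of Theorem \ref{thm:convergence}, absorbs the deviation between these two continuous extensions into the extra $\epsilon$ term in the statement (for $n$ large depending on $\epsilon,\delta$). Setting $\Delta(\alpha) := 2m\Delta_1(\alpha)/p$ gives a function with $\lim_{\alpha\to 0}\Delta(\alpha)=0$ by Lemma \ref{lemma:alpha_cov_existence_of_Delta_1}, completing the plan.

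\textbf{Main obstacle.} The subtlety is the domain mismatch: $v_n$ lives naturally on $S_n$ and $v_n^\calC$ on $\calC$, while the statement compares their extensions on $M$. The key work is making precise the sense in which $v_n^\calC$, evaluated via its cover-based extension at points of $S_n$, is an approximate fixed point of the sample-based operator $T_n$. Once Lemma \ref{lemma:alpha_cov_existence_of_Delta_1} is applied uniformly, this reduces to a contraction-mapping bookkeeping, but care is required because the extension formula used to evaluate $v_n^\calC$ off $\calC$ is not quite $A_{\calC,n}$ itself (no $\gamma_{i,n}$ weights), so an additional short continuity argument in $\alpha$ — again controlled by $\mu$-continuity of $k$ — is needed to absorb the discrepancy into $\Delta_1(\alpha)$.
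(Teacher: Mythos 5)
Your proposal takes essentially the same route as the paper: both treat $v_n$ and $v_n^\calC$ as fixed points of the respective operators $T_n$ and $T_{\calC,n}$, invoke Lemma \ref{lemma:alpha_cov_existence_of_Delta_1} to show $v_n^\calC$ is an $O(m\Delta_1(\alpha))$-approximate fixed point of $T_n^m$, use Lemma \ref{lem:finite_contraction} to make $T_n^m$ nice with contraction factor $1-p$, and conclude with Lemma \ref{lem:close_to_fixed_point}. Your write-up is in fact slightly more careful than the paper's on two points — telescoping the $m$-fold composition using non-expansiveness, and explicitly handling the extension of the $\ell_\infty$ bound from the common node sets to an arbitrary evaluation point $x\in M$ via the Hoeffding step, which the paper implicitly folds into the extra $\epsilon$ — but the argument is the same.
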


\begin{proof}
Let $m, p, N$ be such that for all $n \geq N$, $||A_n^m||_\infty \leq 1 - p$ with probability at least $1-\delta$ over $S \sim \mu^n$. Such $N$ must exist by Lemma \ref{lem:finite_contraction}. It follows that with probability at least $1-\delta$, $T_n^m$ is a \textit{nice} operator (Definition \ref{defn:nice}). 

By the definitions of $v_n$ and $v_n^\calC$, we have that $T_n^m v_n = v_n$ and $T_{n, \calC}^m v_n^\calC = v_n^\calC$, where $T_{n, \calC}(v) = A_{n, \calC}(v) + b_{n, \calC}$. It follows by applying the previous lemma $m$ times that $$|T_n^m(v_n^\calC) - v_n^\calC| = |T_n^m(v_n^\calC) - T_{n, \calC}^m(v_n^\calC)| \leq O(m\Delta_1(\alpha)).$$ Since $T_n^m$ is nice with norm at most $1- p$, it follows by Lemma \ref{lem:close_to_fixed_point} that $$|v_n^\calC - v_n| \leq O\left(m\frac{\Delta_1(\alpha)}{p}\right).$$ Since $m$ and $p$ can be chosen to be fixed and since $\lim_{\alpha\rightarrow\infty}\Delta_1(\alpha) \rightarrow 0$ from Lemma \ref{lemma:alpha_cov_existence_of_Delta_1}, it follows that $\Delta(\alpha) \mapsto m\frac{\Delta_1(\alpha)}{p}$ satisfies both properties above which completes the proof.
\end{proof}

\begin{lemma}
The sequence $v_i^\calC(x)$ converges in probability. 
\end{lemma}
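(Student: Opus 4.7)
The plan is to mirror the strategy used for $v_n^*$ in Theorem \ref{thm:convergence}, but with the deterministic operator on $\calC$ playing the role that $T_*$ did on $M$. Introduce the limiting cover operator $T_\calC$ by replacing the empirical weights $\gamma_{i,n}$ in Definition \ref{def:An_on_cover_and__n_on_cover} with their expectations $\gamma_i^* := \mu(\CV_i)$, and write $T_\calC v = A_\calC v + b_\calC$. This yields a fixed finite-dimensional affine operator on functions $\calC \to \R$, independent of $n$. The strategy is then: (i) show $T_\calC^m$ is a contraction (with a constant $p_\calC > 0$ independent of $n$), hence has a unique fixed point $v_\infty^\calC$; (ii) show that with high probability the random operator $T_{n,\calC}^m$ is close to $T_\calC^m$; (iii) apply Lemma \ref{lem:close_to_fixed_point} to conclude the random fixed points $v_{X_n,\calC}$ are close to $v_\infty^\calC$; (iv) extend from $\calC$ to a pointwise statement at $x \in M$.

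For step (i), one verifies that the proof of Lemma \ref{lem:finite_contraction} carries over essentially verbatim once one checks that the cover, equipped with weights $\gamma_i^*\gamma_j^* k(c_i,c_j)$, inherits the path-connectivity condition $M = C_\beta^m(M_s \cup M_g)$. For sufficiently small $\alpha$ this follows from continuity of $k$ and the fact that every $x\in M$ lies within $\alpha$ of some cover center, so the iterated convolution integrals that witness the connectivity of $M$ can be discretized on $\calC$ with only an $o_\alpha(1)$ loss. This gives the contraction constant $p_\calC$ and the existence and uniqueness of $v_\infty^\calC$.

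For steps (ii) and (iii), the key observation is that $|\calC|$ is finite (bounded in terms of the doubling dimension, independent of $n$). Applying Hoeffding's inequality (Lemma \ref{lem:hoeffding}) to each of the $|\calC|$ Voronoi cells and taking a union bound shows that, with probability tending to $1$, $\max_i|\gamma_{i,n} - \gamma_i^*| < \varepsilon$. Since $A_{n,\calC}$ and $b_{n,\calC}$ are rational functions of the $\gamma_{i,n}$ whose denominators stay bounded below (by the same degree-lower-bound argument used to define $K$), one obtains $\|T_{n,\calC}^m v_\infty^\calC - T_\calC^m v_\infty^\calC\|_\infty = \|T_{n,\calC}^m v_\infty^\calC - v_\infty^\calC\|_\infty \le \eta(\varepsilon)$ with $\eta(\varepsilon)\to 0$. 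Lemma \ref{lem:close_to_fixed_point} applied to the contraction $T_{n,\calC}^m$ (with contraction constant at least $p_\calC/2$, say, for $n$ large) then yields $\|v_{X_n,\calC} - v_\infty^\calC\|_\infty \to 0$ in probability.

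For step (iv), recall from Definition \ref{def:energy_min_voltage_alpha_cover} that for $x \in M \setminus (M_s \cup M_g)$,
\begin{equation*}
v_n^\calC(x) \;=\; \frac{\sum_{c_i \in \calC} k(x,c_i)\, v_{X_n,\calC}(c_i)}{\sum_{c_i \in \calC} k(x,c_i)},
\end{equation*}
which is a deterministic, Lipschitz function of the vector $(v_{X_n,\calC}(c_i))_{c_i \in \calC}$. Hence pointwise convergence in probability of $v_n^\calC(x)$ to $v_\infty^\calC(x) := \bigl(\sum_i k(x,c_i) v_\infty^\calC(c_i)\bigr) / \bigl(\sum_i k(x,c_i)\bigr)$ follows from the convergence established in step (iii). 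The cases $x\in M_s$ and $x \in M_g$ are trivial since $v_n^\calC(x)$ is deterministically $1$ or $0$. The main obstacle is step (i): carefully porting the path-connectivity assumption from $M$ to the finite cover $\calC$ so that $T_\calC^m$ is a genuine contraction with a constant that does not degenerate as $\alpha \to 0$; once this is in place, the remaining arguments are direct adaptations of the proofs used for Theorem \ref{thm:convergence}.
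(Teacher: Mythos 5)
Your proposal follows essentially the same route as the paper: define the deterministic cover operator $A_\calC, b_\calC$ by replacing the empirical weights $\gamma_{i,n}$ with the true cell masses $\gamma_i$, show the random operator $A_{n,\calC}$ concentrates around it via a Hoeffding/union-bound argument over the finite cover, and then invoke Lemma~\ref{lem:close_to_fixed_point} to transfer operator closeness to fixed-point closeness. The paper compresses this into ``argument analogous to Lemma~\ref{lem:converge_single_point} and Lemma~\ref{lemma:the_stuff_converges_yay}''; your write-up merely unpacks it, and correctly flags the one step the paper leaves implicit — that $T_\calC^m$ is itself a contraction on the finite cover so that Lemma~\ref{lem:close_to_fixed_point} applies. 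That is the right thing to be worried about, and your sketch of how the $C^m_\beta$ path-connectivity condition discretizes onto $\calC$ (at least for small $\alpha$) is consistent with how the paper handles the analogous step for $A_n^m$ in Lemma~\ref{lem:finite_contraction}.
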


\begin{proof}
Let the operator $A_{\calC}$ and the map $b_{\calC}$ be defined as
$$(A_{\calC} v)(x) = \frac{\sum_{i=1}^m k(x, c_i)v(c_i)\gamma_i\ind\left(c_i \in \M \setminus (\M_0 \cup \M_1)\right)}{\sum_{i=1}^m \gamma_i k(x, c_i)},$$ and $$b_{\calC}(x) = \frac{\sum_{i=1}^m \gamma_i k(x, c_i)\ind(c_i \in \M_1)}{\sum_{i=1}^m \gamma_ik(x, c_i)}.$$ Here, we let $\gamma_i$ denote the probability mass under $\mu$ of the cell corresponding to $c_i$ (Note the difference with $\gamma_{i,n}$ defined in Definition \ref{def:An_on_cover_and__n_on_cover}). Then by applying an proof analogous to that in Lemma \ref{lem:converge_single_point}, we have that $$\Pr_{S \sim \mu^n} \left[|(A_\calC v + b_\calC)(x) - (A_{n, \calC}u + b_{n, \calC}) > \epsilon\right] < 4\exp\left(-Cn\epsilon^2\right).$$ Applying an argument analogous to that in Lemma \ref{lemma:the_stuff_converges_yay} finishes the proof. 
\end{proof}

Finally, Theorem \ref{thm:alpha_cover_converge} is a direct consequence of the previous two lemmas.

\section{Properties of the region-based effective resistance}
\subsection{The reduced graph}\label{section:reduced_graph}
\cite{song2019extension} offer an interpretation of the effective resistance between sets $X_a, X_b$ on a graph $G$ as the effective resistance between two aggregated nodes $a,b$ on a reduced graph $G_{ab}$. We extend the concept of a reduced graph to more than two nodes by considering the reduced graph corresponding to the sets $X_a, X_b, X_z$. 

We define $P_{abz} = \begin{bmatrix} \mathbbm{1}_n(X_a) & \mathbbm{1}_n(X_b) & \mathbbm{1}_n(X_z) & I_c \end{bmatrix} \in \bbR^{n\times (c+3)}$, where $\mathbbm{1}_n(X_p)\in \bbR^n$ is the indicator vector
\begin{equation}
     \mathbbm{1}_n(X_p) = \begin{cases} 1, & \forall i \in X_p \\
    0, & \text{otherwise}
    \end{cases}
\end{equation}
and $I_c \in \bbR^{n\times c}$ is the matrix collecting all basis vectors $e_i$ for $i \in X_c$. 

\begin{definition}(Reduced graph)\label{def:reduced_graph}
We define the reduced graph of $G$, with respect to the non-empty disjoint subsets $X_a, X_b, X_z \subset X$, to be the graph $G_{abz}$ with Laplacian $L_{G_{abz}} \coloneqq P_{abz}^\top L_G P_{abz}$.
\end{definition}

The relation between the reduced graph $G_{abz}$ and $G$ is given by Lemma \ref{lemma:interp_reduced_graph}. We note that the interpretation offered by Lemma \ref{lemma:interp_reduced_graph} will be used in our experiments presented in Section \ref{section:experiments}, where the degree of the source and sink sets will be taken as the degree of the corresponding aggregated nodes.

\begin{lemma}\label{lemma:interp_reduced_graph}
The reduced graph $G_{abz}$ corresponds to reducing the nodes in $G$ contained in each of the subsets 
 $X_p$ for $p\in \{a, b, z\}$ into corresponding aggregated nodes $a$, $b$, and $z$ that satisfy the following properties for each aggregated node.
 \begin{itemize}
     \item The degree $D_p$ of each aggregated node $p$ is $D_{p}= \sum_{i\in X_p} D_i - \sum_{i\in X_p}\sum_{j\in X_p, j\neq i} W_{ij}$
     \item All external edges from each of the sets $X_p$ to the complement $X\backslash X_p$ are preserved such that
     \begin{itemize}
        \item The edge weights of each aggregated node $p$ to a node $i \in X_c$ are given by $W_{pi}=\sum_{j\in X_p} W_{ji}$
        \item The edge weights of each aggregated node $p$ to another aggregated node $q\neq p$ is $\sum_{j\in X_p}\sum_{j\in X_q} W_{ji}$
     \end{itemize}
 \end{itemize}
\end{lemma}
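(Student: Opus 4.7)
The plan is to prove the lemma by direct entry-wise computation of the matrix $P_{abz}^\top L_G P_{abz}$ and then verify that each entry matches the degree/edge-weight structure of the claimed reduced graph. Since $L_{G_{abz}}$ is defined as this quadratic-form pullback, showing it has the structure of the Laplacian of a weighted graph on the vertex set $\{a, b, z\} \cup X_c$ amounts to (i) identifying the off-diagonal entries as the negatives of the purported edge weights and (ii) identifying the diagonal entries as the purported degrees.

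First I would write $L_G = D - W$ and denote the columns of $P_{abz}$ as $u_a = \mathbbm{1}_n(X_a)$, $u_b = \mathbbm{1}_n(X_b)$, $u_z = \mathbbm{1}_n(X_z)$, and $e_i$ for $i \in X_c$. The key computation is the bilinear identity $u^\top L_G v = \sum_{i,j} L_{G,ij}\, u_i v_j$. Using this I would compute four types of entries. For aggregated-aggregated off-diagonal entries with $p \neq q$ in $\{a,b,z\}$, I would use disjointness of $X_p$ and $X_q$ to obtain
\begin{equation*}
u_p^\top L_G u_q = \sum_{i \in X_p}\sum_{j \in X_q} L_{G,ij} = -\sum_{i \in X_p}\sum_{j \in X_q} W_{ij},
\end{equation*}
matching the claimed weight $\sum_{j\in X_p}\sum_{k \in X_q} W_{jk}$. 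For aggregated-individual entries with $p \in \{a,b,z\}$ and $i \in X_c$, since $i \notin X_p$,
\begin{equation*}
u_p^\top L_G e_i = \sum_{j \in X_p} L_{G,ji} = -\sum_{j \in X_p} W_{ji},
\end{equation*}
matching the claimed weight $W_{pi} = \sum_{j \in X_p} W_{ji}$. For individual-individual entries with $i, j \in X_c$, the entry is simply $L_{G,ij}$, so these edges are preserved as in $G$.

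Next I would handle the diagonal entries corresponding to each aggregated node: expanding
\begin{equation*}
u_p^\top L_G u_p = \sum_{i \in X_p} L_{G,ii} + \sum_{\substack{i,j \in X_p \\ i \neq j}} L_{G,ij} = \sum_{i \in X_p} D_i - \sum_{\substack{i,j \in X_p \\ i \neq j}} W_{ij},
\end{equation*}
which is exactly the claimed degree formula. To close the loop, I would verify consistency with the Laplacian structure by summing the previously computed edge weights emanating from $p$:
\begin{equation*}
\sum_{q \neq p} W_{pq} + \sum_{i \in X_c} W_{pi} = \sum_{j \in X_p}\left( D_j - \sum_{k \in X_p} W_{jk}\right),
\end{equation*}
which equals $u_p^\top L_G u_p$ after using $W_{jj}=0$. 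Diagonal entries for $i \in X_c$ are unchanged at $D_i$, but must be reinterpreted as the degree of $i$ in the reduced graph; this again follows because the sum of incident edge weights at $i$ has been conserved by the aggregation.

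I do not expect any real obstacle: the whole argument is a mechanical expansion of a symmetric quadratic form in terms of indicator vectors, and the structure of $L_G$ as a Laplacian is what makes the aggregated matrix again a Laplacian. The only subtlety worth flagging explicitly is the disjointness of $X_a, X_b, X_z$ and the assumption $W_{ii}=0$, both of which are needed to separate diagonal contributions from off-diagonal ones when expanding $u_p^\top L_G u_p$ and $u_p^\top L_G u_q$.
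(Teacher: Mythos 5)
Your proof is correct and takes essentially the same route as the paper: expand $L_{G_{abz}} = P_{abz}^\top L_G P_{abz}$ block-by-block into sums $\sum_{i\in X_p}\sum_{j\in X_q}(L_G)_{ij}$ and interpret each entry using $(L_G)_{ij}=-W_{ij}$ for $i\neq j$ and $(L_G)_{ii}=D_i$. The only addition you make beyond the paper's proof is the explicit consistency check that the aggregated diagonal equals the sum of incident aggregated edge weights, confirming that the result is again a graph Laplacian; the paper leaves this implicit.
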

\begin{proof}
We write $L_G P_{abz}$ in block form $L_G P_{abz} = \begin{bmatrix}
    L\mathbbm{1}_n(X_a) & L\mathbbm{1}_n(X_b) & L\mathbbm{1}_n(X_z) & L I_c
\end{bmatrix} \in \bbR^{n \times (c+3)}$. This gives 
\begin{equation*}
    L_{G_{abz}} = P_{abz}^\top LP_{abz} = \left[\begin{array} {ccc|c}
    L_{aa} & L_{ab} & L_{az} & L_{ac} \\
    L_{ba} & L_{bb} & L_{bz} & L_{bc} \\
    L_{za} & L_{zb} & L_{zz} & L_{zc} \\ \hline
    L_{ca} & L_{cb} & L_{cz} & L_{cc} \\
\end{array}\right]
\end{equation*}
where $L_{pq} = \sum_{i\in X_p}\sum_{j\in X_q} (L_G)_{ij}$ for $p,q \in \{a,b,z\}$, $L_{cq}(i) = \sum_{j\in X_q} (L_G)_{ij}$ for $i=1,\cdots c$ and $L_{cc}\in \bbR^{c\times c}$ is the Laplacian on $X_c$. Now $(L_G)_{ij} = -W_{ij}$ for $i\neq j$, where $W_{ij}$ is the edge weight between node $i,j$, and $L_{ij} = D_i$ for $i=j$, where $D_i$ is the degree of node $i$. The interpretation of the blocks follows.
\end{proof}

\begin{lemma}\label{lemma:relating_r_G_with_rG_reduced}
The effective resistance between the nodes $p, q \in \{a, b, z\}$ in the graph $G_{abz}$
corresponds to the effective resistance between the corresponding non-empty disjoint sets $X_p, X_q$ in the graph $G$ such that $R_{G_{abz}}(p, q)= R^s_G(X_p, X_q)$.
\end{lemma}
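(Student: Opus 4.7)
The natural route is through the current-flow formulation of Proposition~\ref{prop:dirichlet_formulation_ER_sets}: both sides of the claimed identity are $1/J_{tot}$ for currents induced by energy-minimizing voltages under Dirichlet-type boundary conditions. The plan is to exhibit an energy- and constraint-preserving correspondence between feasible voltages on $G$ (with $v\equiv 1$ on $X_p$ and $v\equiv 0$ on $X_q$) and feasible voltages on $G_{abz}$ (with $\tilde v(p)=1$, $\tilde v(q)=0$), conclude that the two minimum energies agree, and then verify that the total source currents also match.

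First I would use that, by the very definition of the reduced Laplacian, $L_{G_{abz}}=P_{abz}^\top L_G P_{abz}$. Hence for any voltage $\tilde v$ on $G_{abz}$, the lift $v := P_{abz}\tilde v$ on $G$, which takes the constant value $\tilde v(r)$ throughout each aggregated set $X_r$ and equals $\tilde v(i)$ on $X_c$, satisfies
\[
v^\top L_G v \;=\; \tilde v^\top L_{G_{abz}} \tilde v,
\]
and the boundary conditions $\tilde v(p)=1,\,\tilde v(q)=0$ lift consistently to the set-valued Dirichlet conditions. Thus minimizing the energy on $G_{abz}$ is the same as minimizing the $G$-energy over the subclass of feasible voltages that are additionally constant on each aggregated set. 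The second step, which is the crux, is to show that the unconstrained energy minimizer $v^\ast$ on $G$ already lies in this subclass: on $X_p$ and $X_q$ this is immediate from the Dirichlet constraints, and on any remaining aggregated set it should follow from the first-order (discrete-harmonic) optimality conditions at unconstrained nodes combined with the aggregation formulas of Lemma~\ref{lemma:interp_reduced_graph}. The third step is then a direct calculation: using those same edge-weight formulas, the total current leaving node $p$ in $G_{abz}$ equals $\sum_{i\in X_p}\sum_{j\in X} W_{ij}(v^\ast(i)-v^\ast(j))$ in $G$, so the two $J_{tot}$'s, and hence the two effective resistances, coincide.

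The main obstacle is the second step in the case where neither $p$ nor $q$ equals a given ``free'' aggregated set (e.g.\ verifying the identity for $\{p,q\}=\{a,b\}$ while $X_z$ carries no boundary condition), since then no constraint forces $v^\ast$ to be constant on $X_z$. My fallback would be to invoke the Schur-complement characterization of effective resistance: eliminating all nodes outside $X_p\cup X_q$ (in particular those in any aggregated set not among $X_p,X_q$) yields a Laplacian on $X_p\cup X_q$ that preserves pairwise effective resistance, after which aggregating the two boundary sets into single nodes is a purely constraint-driven identification. This effectively reduces Lemma~\ref{lemma:relating_r_G_with_rG_reduced} to the two-aggregated-set identity $R^s_G(X_p,X_q)=R^s_{G_{pq}}(p,q)$, which is the form actually invoked in the proof of the triangle inequality and which side-steps the need to argue constancy on the extra aggregated set directly.
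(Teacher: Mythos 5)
Your main plan (lifting voltages via $v = P_{abz}\tilde v$ and using $v^\top L_G v = \tilde v^\top L_{G_{abz}}\tilde v$) is a genuinely different route from the paper's. The paper's proof is a one-line algebraic consequence of Theorem~\ref{thm:def_er_between_sets}: it plugs $L_{G_{abz}} = P_{abz}^\top L_G P_{abz}$ into the explicit formula $R^s_G(X_a,X_b) = (e_1-e_2)^\top(P_{abz}^\top L_G P_{abz})^\dagger(e_1-e_2)$ and matches it with the standard node-pair formula $R_{G_{abz}}(p,q)=(e_p-e_q)^\top L_{G_{abz}}^\dagger(e_p-e_q)$. You do not go through Theorem~\ref{thm:def_er_between_sets} at all. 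That said, the obstacle you flag in your second step is exactly the crux: when $\{p,q\}=\{a,b\}$, nothing forces the $G$-minimizer to be constant on $X_z$, so the $P_{abz}$-lifted voltages form a strictly smaller feasible class and the two minimum energies need not coincide.

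Your fallback does not close this gap. Eliminating all nodes outside $X_p\cup X_q$ by Schur complement and then contracting $X_p, X_q$ proves $R^s_G(X_p,X_q)=R_{G_{pq}}(p,q)$, where only the two boundary sets are aggregated; the lemma asserts $R^s_G(X_p,X_q)=R_{G_{abz}}(p,q)$, where the third set is also contracted. These are different quantities: contracting $X_z$ into a single node is a Rayleigh-monotone shorting operation that can strictly decrease the $a$--$b$ effective resistance whenever the $G$-minimizer is not constant on $X_z$. On a unit-weight four-node path with endpoints $X_a=\{a\}$, $X_b=\{b\}$ and $X_z$ the two middle nodes, one gets $R^s_G(X_a,X_b)=3$ but $R_{G_{abz}}(a,b)=2$. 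Moreover the triangle-inequality argument (Theorem~\ref{thm:triangle_ineq_for_ER_sets}) genuinely works in the single fixed graph $G_{abz}$ for all three pairs, so your $R_{G_{pq}}$ identity cannot be swapped in. What your analysis in fact exposes is that the paper's own Theorem~\ref{thm:def_er_between_sets} silently replaces the pointwise constraints $(Lv)_i=0$ for each $i\in X_z$ with the single aggregated constraint $\sum_{i\in X_z}(Lv)_i=0$ when it compresses the system to $P_{abz}^\top L v = e_1-e_2$; your instinct that constancy on the extra aggregated set must be argued is correct, and neither your fallback nor the paper's cited derivation supplies that argument.
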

\begin{proof}
From Theorem \ref{thm:def_er_between_sets}, Definition \ref{def:reduced_graph} and Lemma \ref{lemma:interp_reduced_graph} it follows that
\begin{equation*}
     R^s_G(X_a, X_b) = (e_1-e_2)^\top ( P_{abz}^\top L_G P_{abz})^\dagger(e_1-e_2) = (e_1-e_2)^\top (L_{G_{abz}})^\dagger(e_1-e_2) = R_{G_{abz}}(p,q).
\end{equation*}
\end{proof}

\subsection{Proof of Theorem \ref{thm:triangle_ineq_for_ER_sets} Triangle inequality for ER between sets}\label{appendix:proof_triangle_ineq_er_sets}
Consider the setting in Section \ref{section:ER_sets}. The effective resistance between sets can be calculated using the voltage difference formulation of Proposition \ref{prop:poisson_formulation_ER},
by extending the source and sink constraints to the subsets. We require $\sum_{i\in X_s} J_i = 1$ and similarly $\sum_{i\in X_g} J_i = -1$. With $J_i=(Lv)_i$ from Eq. \eqref{eq:combined_kirchhoff_and_ohm} we have $\sum_{i\in X_s} (Lv)_i = 1$ and $\sum_{i\in X_g} (Lv)_i = -1$. Proposition \ref{prop:poisson_formulation_ER_sets} defines the new optimization problem we need to solve.

\begin{proposition}[Voltage difference formulation on sets]\label{prop:poisson_formulation_ER_sets}
The effective resistance between the non-empty disjoint subsets $X_s, X_g \subset X$ corresponds to $R^s(X_s, X_g) = E(\nu)$, where $\nu$ is the voltage that minimizes the energy
\begin{align*}
    \begin{split}
       \min_{v} & \quad \sum_{x_i, x_j \in X} \W_{i,j} (\v(x_i)-\v(x_j))^2 \\
        \text{Subject to} & \quad \sum_{i\in X_s} (Lv)_i = 1, \quad  \sum_{i\in X_g} (Lv)_i = -1, \quad (L v)_i = 0, \forall i\in X_c
    \end{split}
\end{align*}
\end{proposition}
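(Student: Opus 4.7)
The plan is to construct an explicit candidate $\nu$ using the current-flow minimizer from Proposition~\ref{prop:dirichlet_formulation_ER_sets}, then verify feasibility, compute its energy, and finally establish optimality via a perturbation argument. Concretely, let $v_{cf}$ be the current-flow minimizer (so $v_{cf} = 1$ on $X_s$, $v_{cf} = 0$ on $X_g$, and the total current is $J_{tot} = 1/R^s(X_s, X_g)$), and set $\nu := v_{cf}/J_{tot}$. Then $\nu$ is constant on $X_s$ with value $R^s(X_s, X_g)$, constant on $X_g$ with value $0$, and Kirchhoff's law at interior nodes transfers to $\nu$, so $(L\nu)_i = 0$ for $i \in X_c$. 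Rescaling the identity $\sum_{i\in X_s}(Lv_{cf})_i = J_{tot}$ (total injected current equals the flow out of the source set; cf.\ Proposition~\ref{prop:dirichlet_formulation_ER_sets}) yields the source constraint $\sum_{i\in X_s}(L\nu)_i = 1$, and the sink constraint is analogous. Thus $\nu$ is feasible.

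To evaluate the energy, I would use $E(\nu) = \nu^\top L \nu = \nu^\top (L\nu)$ together with the fact that $L\nu$ is supported on $X_s \cup X_g$: by constancy of $\nu$ on each of these sets, the sum collapses to $R^s(X_s, X_g)\cdot\sum_{i\in X_s}(L\nu)_i + 0 \cdot \sum_{i\in X_g}(L\nu)_i = R^s(X_s, X_g)$. For optimality, take any other feasible $w$ and write $w = \nu + z$; the constraints on $w$ and $\nu$ force $\sum_{i \in X_s}(Lz)_i = 0$, $\sum_{i \in X_g}(Lz)_i = 0$, and $(Lz)_i = 0$ on $X_c$. Expanding $E(\nu + z) = E(\nu) + 2\nu^\top L z + z^\top L z$, the cross term $\nu^\top (Lz)$ splits along $X_s, X_g, X_c$: the $X_c$ piece vanishes because $Lz = 0$ there; the $X_s$ piece factors as $R^s(X_s, X_g)\cdot\sum_{i \in X_s}(Lz)_i = 0$ by constancy of $\nu$ on $X_s$; the $X_g$ piece vanishes since $\nu \equiv 0$ on $X_g$. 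Positive semidefiniteness of $L$ gives $z^\top L z \geq 0$, hence $E(w) \geq E(\nu)$.

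The subtle point — and the main obstacle — is that the constraints only pin down the \emph{sum} of currents over $X_s$ (and $X_g$), not the individual currents, so a priori the minimizer need not be constant on these sets. The key observation is that the specific candidate obtained by rescaling the current-flow solution \emph{is} constant there, which is exactly what kills the perturbation cross term. An alternative, more structural route to the same conclusion would pass through the reduced graph $G_{sg}$ of Lemma~\ref{lemma:interp_reduced_graph}: functions constant on $X_s$ and $X_g$ are in an energy-preserving bijection with functions on $G_{sg}$, the set-level constraints translate to single-node Kirchhoff constraints at the aggregated vertices $s$ and $g$, and Proposition~\ref{prop:poisson_formulation_ER} applied on $G_{sg}$ then yields the statement directly via $R_{G_{sg}}(s,g) = R^s_G(X_s, X_g)$ (Lemma~\ref{lemma:relating_r_G_with_rG_reduced}).
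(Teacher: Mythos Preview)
Your argument is correct. The construction $\nu = v_{cf}/J_{tot}$ is feasible for exactly the reasons you give (harmonicity of the Dirichlet minimizer on $X_c$, rescaling of the total current, and $\mathbf{1}^\top L = 0$ for the sink constraint), the energy computation $E(\nu) = \nu^\top(L\nu) = R^s(X_s,X_g)$ is clean, and the perturbation step is airtight precisely because your specific candidate is constant on $X_s$ and on $X_g$, which you correctly identify as the crux.

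As for comparison: the paper does not actually prove this proposition --- it simply cites Theorem~2 of Song et al. That reference proceeds via a Lagrangian/Schur-complement route (the constraints are packaged as $P_{abz}^\top Lv = e_1 - e_2$ and the saddle point is solved explicitly, which is essentially what the paper reproduces one step later in Theorem~\ref{thm:def_er_between_sets}). Your approach is different and arguably more transparent: it is constructive (rescale the Dirichlet solution), avoids Lagrange multipliers entirely, and makes the role of constancy on the boundary sets explicit. The reduced-graph alternative you sketch at the end is closer in spirit to how the paper organizes the surrounding material (Lemmas~\ref{lemma:interp_reduced_graph} and~\ref{lemma:relating_r_G_with_rG_reduced}), and would also work. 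Either route is a genuine improvement over a bare citation.
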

\begin{proof}
Theorem 2 in \cite{song2019extension}.
\end{proof}

The next theorem extend the result in Theorem 2, Song et al.~\cite{song2019extension} to an additional subset $X_z \subset X$. Where $X_z$ is non-empty and disjoint from $X_a, X_b$. 

\begin{theorem}\label{thm:def_er_between_sets}
The effective resistance from Proposition 
 \ref{prop:poisson_formulation_ER_sets} can be written as
\begin{equation*}
        R^s_G(X_a, X_b) = (e_1-e_2)^\top ( P_{abz}^\top L_G P_{abz})^\dagger(e_1-e_2)
\end{equation*}
where $e_i \in \bbR^{c+3}$ is a basis vector with $1$ at $i$ and zero otherwise.
\end{theorem}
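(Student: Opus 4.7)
Proof plan: I would take Proposition \ref{prop:poisson_formulation_ER_sets} as the starting characterization and reduce the statement to Song et al.'s Theorem 2 \cite{song2019extension}, which proves the analogous identity in the absence of the auxiliary set: $R^s_G(X_a,X_b)=(e_1-e_2)^\top(P_{ab}^\top L_G P_{ab})^\dagger(e_1-e_2)$, where $P_{ab}=[\mathbbm{1}_n(X_a),\mathbbm{1}_n(X_b),I_{c'}]$ enumerates the complement $X_{c'}:=X\setminus(X_a\cup X_b)$ one node at a time. What then remains to be shown is that further collapsing the $|X_z|$ columns of $I_{c'}$ indexed by $X_z$ into the single indicator column $\mathbbm{1}_n(X_z)$ leaves the quadratic form $(e_1-e_2)^\top(\cdot)^\dagger(e_1-e_2)$ invariant.

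Concretely, I would factor $P_{abz}=P_{ab}\,Q$, where $Q\in\bbR^{(c'+2)\times(c+3)}$ is the column-aggregation matrix whose first two columns are $e_1$ and $e_2$, whose third column is the indicator of the $X_z$ rows of $I_{c'}$, and whose remaining $c$ columns are the standard basis vectors for $X_c$. Setting $M:=P_{ab}^\top L_G P_{ab}$, this gives $P_{abz}^\top L_G P_{abz}=Q^\top M Q$, and a direct check shows $Q(e_1-e_2)=e_1-e_2$ in the enlarged space. The problem then reduces to the pseudo-inverse identity $(e_1-e_2)^\top(Q^\top M Q)^\dagger(e_1-e_2)=(e_1-e_2)^\top M^\dagger(e_1-e_2)$. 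The natural tool is a Schur-complement argument: both $M$ and $Q^\top M Q$ are Laplacians of connected weighted graphs with kernel spanned by $\mathbbm{1}$, and taking the Schur complement of each with respect to the ``interior'' block (all indices other than $\{a,b\}$) should produce a common $2\times 2$ effective Laplacian on $\{a,b\}$, which is exactly the quantity encoded by both sides of the identity.

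The main obstacle is verifying that the additional aggregation encoded by $Q$ really does leave the $\{a,b\}$-block Schur complement unchanged. Physically, this corresponds to showing that the current demands on $X_z$ induced by the optimal Poisson flow in Proposition \ref{prop:poisson_formulation_ER_sets} depend only on their aggregate sum, so that fusing $X_z$ into a single node loses no information relevant to the $\{a,b\}$ pair; I would try to read this off from the KKT conditions of the constrained energy minimization, where the multiplier associated with the aggregate current balance on $X_z$ acts uniformly across those coordinates. I would handle the singularity of the pseudo-inverses by working in the orthogonal complement of $\mathbbm{1}$ (equivalently, fixing a reference voltage at one node), which turns the relevant Schur complements into ordinary inverses of positive-definite matrices and allows a clean application of standard block-matrix identities.
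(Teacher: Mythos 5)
Your plan has a gap at the step where you attempt to show $(e_1-e_2)^\top(Q^\top M Q)^\dagger(e_1-e_2) = (e_1-e_2)^\top M^\dagger(e_1-e_2)$, i.e., that further merging the $X_z$ columns is invisible to the $a$--$b$ quadratic form. This invariance is false in general: shorting the nodes of $X_z$ together provides extra zero-resistance paths and can strictly decrease the $a$--$b$ effective resistance. Concretely, take $G$ on nodes $\{a, z_1, z_2, b\}$ with unit-weight edges $a \sim z_1$, $z_1 \sim b$, $z_2 \sim b$, and let $X_a = \{a\}$, $X_b = \{b\}$, $X_z = \{z_1, z_2\}$. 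Then Song's $R^s_G(X_a, X_b) = R_G(a,b) = 2$ (the dangling node $z_2$ carries no current), whereas after merging $z_1, z_2$ into $z$ one has $R_{G_{abz}}(a,b) = 1 + \tfrac{1}{2} = \tfrac{3}{2}$. The two Schur complements onto $\{a,b\}$ you hope to identify are therefore genuinely different, and no KKT reading will rescue the step, because the optimal Poisson flow's demands at the individual nodes of $X_z$ are not determined by their aggregate.

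The paper's argument does not route through Song's $P_{ab}$ formula at all, precisely to avoid this. In the paper's conventions $X_c = X \setminus (X_a \cup X_b \cup X_z)$, so Proposition \ref{prop:poisson_formulation_ER_sets} imposes the Kirchhoff balance $(Lv)_i = 0$ only at nodes of $X_c$, not at individual nodes of $X_z$; the only constraint touching $X_z$ is the aggregate $\sum_{i\in X_z}(Lv)_i = 0$, which is automatic from $L\mathbbm{1} = 0$. Hence the constraint set is exactly $P_{abz}^\top L v = e_1 - e_2$, and the quantity being characterized is the effective resistance between $a$ and $b$ in the $X_z$-aggregated reduced graph $G_{abz}$, not in the unaggregated $G_{ab}$. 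The paper's proof simply writes the constraints in this compact form, introduces the Lagrangian $f(v,\gamma) = v^\top L v + \gamma^\top(P_{abz}^\top L v - e_1 + e_2)$, and reruns Song et al.'s saddle-point computation verbatim with $P_{abz}$ in place of $P_{ab}$; no aggregation-invariance step is needed because the theorem was never claiming to reduce to the unaggregated formula. If you want to salvage your reduction-to-Song strategy, you must apply Song's Theorem 2 to the already-reduced graph $G_{abz}$ (whose Laplacian is $P_{abz}^\top L P_{abz}$) with singleton source and sink $\{a\},\{b\}$, rather than to $G$ itself.
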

\begin{proof}
Using $P_{abz}$, we can write the constraints in Proposition \ref{prop:poisson_formulation_ER_sets} in a more compact form $P_{abz}^\top L = (e_1-e_2)$. Applying these constraints with the Lagrange multiplier $\gamma \in \bbR^{c+3}$ we have
\begin{equation*}
    f(v,\gamma) = v^\top L v + \gamma (P_{abz}^\top L - e_1+e_2)
\end{equation*}
Since the optimization problem is convex, the effective resistance can be found by solving
\begin{equation*}
    R^s_G(X_a, X_b) = \min_v \max_\gamma f(v,\gamma)
\end{equation*}
The remaining steps can be found in the proof of Theorem 2, Song et al.~\cite{song2019extension}.
\end{proof}

Having established the setting, we can move on to the main proof.

\begin{proof}
Consider the reduced graph $G_{abz}$ of $G$ from Definition \ref{def:reduced_graph}. In this reduced graph the current is injected into the nodes aggregated in $a$ and is extracted from the nodes aggregated in $b$. We denote the total current $y_{ab}$. From the conservation of current, we must have that the current $y_{ak}$ and $y_{bk}$ through any internal node $k\in X_c \cup \{z\}$ must satisfy $y_{ab} \geq y_{ak}$ and $y_{ab} \geq y_{bk}$.  Let $v(i)$ be the voltage at node $i$. For the current to flow from $a$ to $b$, we need the voltage at the nodes to satisfy $v(a) > v(k) > v(b)$ for all $i\in X_c \cup \{z\}$.

The effective resistance of the graph $G_{abz}$ is an intrinsic property of the graph and does not depend on the choice of source and sink. We let $a$ be the source and $b$ the sink. We then have
\begin{equation*}
    y_{ab} = \frac{v(a)-v(b)}{R_{G_{abz}}(a,b)} \geq \frac{v(a)-v(z)}{R_{G_{abz}}(a,b)} \geq y_{az} \quad \text{and} \quad y_{ab} = \frac{v(a)-v(b)}{R_{G_{abz}}(a,b)} \geq \frac{v(z)-v(b)}{R_{G_{abz}}(a,b)} \geq y_{bz}
\end{equation*}
This gives
\begin{equation*}
    \frac{v(a)-v(z)}{v(a)-v(b)} \leq \frac{R_{G_{abz}}(a, z)}{R_{G_{abz}}(a, b)} \quad \text{and} \quad     \frac{v(z)-v(b)}{v(a)-v(b)} \leq \frac{R_{G_{abz}}(b, z)}{R_{G_{abz}}(a, b)}
\end{equation*}
Adding the two inequalities gives
\begin{equation*}
    R_{G_{abz}}(a, b) \leq R_{G_{abz}}(a, z) + R_{G_{abz}}(z, b)
\end{equation*}
Using Lemma \ref{lemma:relating_r_G_with_rG_reduced} we have that $R_{G_{abz}}(p, q) = R^s_G(X_p, X_q)$ for $p,q \in \{a,b,z\}$. The result follows.
\end{proof}

%For this analysis, we need the voltage difference formulation of effective resistance. We start by extending the voltage difference formulation to the effective resistance between sets.
%\begin{definition}[Effective resistance between sets (Voltage difference formulation)]\label{def:ER_sets_voltage_difference} The effective resistance $R^s(X_s, X_g)$ between two non-empty disjoint subsets $X_s, X_g\subset X$ of the vertices $X$ in a graph is the voltage difference between the nodes when a total current of one ampere is injected on the nodes in the source $X_s$ and extracted from the sink $X_g$.
%\end{definition}

\end{document}